\documentclass[11pt,a4paper]{article}

\usepackage[utf8]{inputenc} 
\usepackage[T1]{fontenc}    
\usepackage{hyperref}       
\usepackage{url}            
\usepackage{booktabs}       
\usepackage{amsfonts}       
\usepackage{nicefrac}       
\usepackage{microtype}      
\usepackage{xcolor}         
\usepackage{amsmath, amssymb,amsthm}

\usepackage{graphicx, graphics, epsfig, color}
\usepackage{adjustbox}
\usepackage{tikz, pgfplots}
\usetikzlibrary{plotmarks}
\pgfplotsset{compat=newest}
\usepackage[margin=1in]{geometry}
\usepackage{float}
\usepackage{multirow}
\usepackage{cuted, flushend}
\usepackage{midfloat}
\usepackage{bm}
\usepackage{fancyhdr}
\usepackage{enumerate}
\usepackage{siunitx}
\usepackage[numbers,square,sort&compress]{natbib}
\usepackage{hyperref}
\usepackage{cleveref}
\usepackage{subcaption}
\usepackage{url}
\usepackage[skins,xparse,breakable]{tcolorbox}

\DeclareMathOperator{\tr}{ {\rm tr} }
\newcommand{\RR}{\mathbb{R}}
\newcommand{\T}{ {\sf T} }
\newcommand{\E}{\mathbb{E}}

\newcommand{\cov}{{\rm Cov}}
\newcommand{\var}{{\rm Var}}
\newcommand{\x}{{\mathbf{x}}}
\newcommand{\zeros}{{\mathbf{0}}}
\newcommand{\ones}{{\mathbf{1}}}
\newcommand{\h}{{\mathbf{h}}}
\newcommand{\e}{{\mathbf{e}}}

\newcommand{\z}{{\mathbf{z}}}
\newcommand{\g}{{\mathbf{g}}}

\renewcommand{\t}{{\mathbf{t}}}
\newcommand{\X}{{\mathbf{X}}}

\newcommand{\Q}{{\mathbf{Q}}}

\newcommand{\V}{{\mathbf{V}}}
\newcommand{\W}{{\mathbf{W}}}
\newcommand{\G}{{\mathbf{G}}}
\newcommand{\M}{{\mathbf{M}}}
\newcommand{\bmu}{ \boldsymbol{\mu} }
\newcommand{\bbeta}{ \boldsymbol{\beta} }
\newcommand{\bxi}{ \boldsymbol{\xi} }

\newcommand{\bnu}{ \boldsymbol{\nu} }

\newcommand{\bpsi}{ \boldsymbol{\psi} }
\newcommand{\bepsilon}{ \boldsymbol{\epsilon} }

\newcommand{\bSigma}{ \boldsymbol{\Sigma} }

\newcommand{\I}{{\mathbf{I}}}

\renewcommand{\b}{{\mathbf{b}}}
\renewcommand{\u}{{\mathbf{u}}}

\renewcommand{\v}{{\mathbf{v}}}

\newcommand{\s}{{\mathbf{s}}}

\renewcommand{\H}{{\mathbf{H}}}

\DeclareMathOperator*{\argmin}{arg\,min}

\DeclareMathOperator{\prox}{{\rm prox}}


\definecolor{RED}{rgb}{0.7,0,0}
\definecolor{BLUE}{rgb}{0,0,0.69}
\definecolor{GREEN}{rgb}{0,0.6,0}
\definecolor{PURPLE}{rgb}{0.69,0,0.8}

\newcommand{\RED}{\color[rgb]{0.70,0,0}}
\newcommand{\BLUE}{\color[rgb]{0,0,0.69}}

\newtheorem{Theorem}{Theorem}

\newtheorem{Lemma}{Lemma}
\newtheorem{Remark}{Remark}
\newtheorem{Corollary}{Corollary}
\newtheorem{Definition}{Definition}
\newtheorem{Assumption}{Assumption}

\title{The Breakdown of Gaussian Universality in Classification of High-dimensional Linear Factor Mixtures}


\author{
Xiaoyi Mai\\
Institut de Mathématiques de Toulouse (IMT)\\
University of Toulouse-Jean Jaurès, Toulouse, France\\
\texttt{xiaoyi.mai@math.univ-toulouse.fr}
\and
Zhenyu Liao\\
School of Electronic Information and Communications (EIC)\\
Huazhong University of Science and Technology, Wuhan, China\\
\texttt{zhenyu\_liao@hust.edu.cn}
}

\begin{document}

\maketitle

\begin{abstract}
The assumption of Gaussian or Gaussian mixture data has been extensively exploited in a long series of precise performance analyses of machine learning (ML) methods, on large datasets having comparably numerous samples and features. 
To relax this restrictive assumption, subsequent efforts have been devoted to establish ``Gaussian equivalent principles'' by studying scenarios of Gaussian universality where the asymptotic performance of ML methods on non-Gaussian data remains unchanged when replaced with Gaussian data having the \emph{same mean and covariance}.
Beyond the realm of Gaussian universality, there are few exact results on how the data distribution affects the learning performance. 

In this article, we provide a precise high-dimensional characterization of empirical risk minimization, for classification under a general mixture data setting of \emph{linear factor models} that extends Gaussian mixtures. 
The Gaussian universality is shown to break down under this setting, in the sense that the asymptotic learning performance depends on the data distribution \emph{beyond} the class means and covariances.
To clarify the limitations of Gaussian universality in the classification of mixture data and to understand the impact of its breakdown, we specify conditions for Gaussian universality and discuss their implications for the choice of loss function.  
\end{abstract}

\section{Introduction}

Modern machine learning (ML) is dealing with increasingly larger datasets having high-dimensional features, using large-scale models of increasing complexity. 
Understanding the generalization ability of these large-scale ML models has become a major focus of research efforts~\citep{bartlett2020benign,loog2020brief,nakkiran2021deep}. 
One analysis approach of growing popularity is the high-dimensional asymptotic analysis~\citep{liao2019large,taheri2021fundamental,celentano2022fundamental,hastie2022surprises,loureiro2022fluctuations,celentano2023lasso}, by considering the regime where the number $n$ of samples and the dimension $p$ of feature vectors are commensurately large.  
Despite its asymptotic nature, this approach turns out to be surprisingly effective in explaining and predicting modern ML practice: the asymptotic performance curves are repetitively observed to closely match the average empirical performance curves on realistic datasets of moderate size and dimension~\citep{couillet2022random}, and are particularly \emph{different} from those offered by, e.g., classical maximum likelihood theory~\citep{bean2013optimal,sur2019modern,taheri2021sharp}.

To analytically characterize the generalization performance of large-scale ML models in the aforementioned high-dimensional regime, advanced statistical tools such as the approximate message passing~\citep{donoho2016high,barbier2019optimal}, convex Gaussian min-max theorem~\citep{thrampoulidis2018precise,salehi2019impact,deng2022model,javanmard2022precise}, replica method~\citep{huang2017asymptotic,gerace2020generalisation,maillard2020phase}, and random matrix theory (RMT)~\citep{couillet2022random,mai2019large,mai2021consistent} have been carefully adapted to take into account of the nonlinearity of ML models.
As these tools apply directly on Gaussian data, a majority of high-dimensional asymptotic analyses are performed under Gaussian data models in the context of regression~\citep{el2013robust,donoho2016high,taheri2021fundamental,celentano2022fundamental} or Gaussian mixture models (GMMs) in the context of classification~\citep{mignacco2020role,thrampoulidis2020theoretical,refinetti2021classifying}.

Despite this seemingly restrictive assumption of data Gaussianity, the derived high-dimensional asymptotic results have been empirically observed to remain valid on non-Gaussian data, including both synthetic non-Gaussian data and samples drawn from realistic (say image) datasets~\citep{loureiro2021learning,taheri2021sharp}, hinting at a phenomenon of \emph{Gaussian universality}. 
This motivated a series of recent works establishing, through, e.g., an one-directional central limit theorem (CLT) argument, a Gaussian equivalent principal (GEP) for high-dimensional ML models ranging from generalized linear models to shallow neural networks~\citep{gerace2020generalisation,goldt2022gaussian,hu2022universality,montanari2022universality,schroder2023deterministic,han2023universality}. 
According to the GEP, the performance of ML methods on non-Gaussian data is asymptotically the same under an equivalent Gaussian model with matching first-~and~second-order moments. 
Assuming a conditional one-directional CLT, \citet{dandi2024universality} put forward a conditional Gaussian equivalent principle (CGEP) stating that the asymptotic classification error for non-Gaussian mixtures remains unchanged when replaced by a Gaussian mixture model with identical class-conditional means and covariances. 
This contribution, however, did not specify the conditions required on the mixture data model for this conditional one-directional CLT to hold.

This work is driven by the need to investigate the applicability of CGEP under mixture models and to characterize the impact of non-Gaussian data variations when the CGEP does \emph{not} apply. 
By considering a more general mixture model (see \Cref{def:linear_factor}) where classes are described by linear factor models---a fundamental probabilistic framework in statistical inference and generative models~\citep[Chapter~13]{Goodfellow-et-al-2016}, our analysis extends a long line of high-dimensional asymptotic performance analyses in classification of Gaussian mixtures~\citep{dobriban2018high,huang2017asymptotic,liao2019large,mai2019high,huang2021large,kammoun2021precise,wang2021benign,pesce2023gaussian}.
We discuss the validity of CGEP under this linear factor mixture model and specify its conditions.  
On a technical level, we develop a flexible ``leave-one-out'' analysis approach, in a similar spirit to the analysis of robust linear regression by \citet{el2013robust}. 
The elementary nature of this leave-one-out procedure allows us to extend the approach of high-dimensional asymptotic analysis beyond the realm of Gaussian universality.

\paragraph{Our Contributions.}
The main findings of this paper are summarized below.

\begin{enumerate}
   \item We provide in \Cref{theo:main}  an asymptotic characterization of  ridge-regularized empirical risk minimization (ERM) for classification of data drawn from a linear factor mixture model (LFMM, see \Cref{def:linear_factor} below, that generalizes the GMM).  
  This precise characterization gives access to the asymptotic performance on  mixture data \emph{beyond Gaussian universality}.
   
  \item Based on the proposed analysis, we study Gaussian universality in the sense of CGEP and provide conditions on LFMM under which the data distribution affects the asymptotic learning behavior \emph{only} via its first two class-conditional moments.
  \begin{itemize}
      \item We first discuss in \Cref{sec:universality error} the Gaussian universality on \emph{in-distribution performance} and conclude in \Cref{cor:condition for r} that the training and generalization performances of ERM under a given LFMM remain unchanged under its equivalent GMM (with identical class means and covariances, see \Cref{def:equivalent GMM}), if all \emph{informative factors} of the LFMM that significantly correlated with the class label are \emph{class-conditional normal variables}.
      \item We then investigate in \Cref{sec:universality beta} the Gaussian universality of \emph{classifier} (see \Cref{def:gaussian_universality}) and conclude in \Cref{cor:condition for beta} that on a given test set (of arbitrary distribution), the ERM classifier trained on data drawn from an LFMM gives the same asymptotic classification error as the one trained on its equivalent GMM, whenever the square loss is used and/or in the case of class-conditional Gaussian informative factors for LFMM. 
  \end{itemize} 
   
   \item While it has been known that for high-dimensional GMM, the square loss is optimal in both unregularized~\citep{taheri2021sharp} and ridge-regularized~\citep{mai2019high} classifications, it is  \emph{no longer} the case under the general LFMM due to the breakdown of Gaussian universality. 
   We discuss in \Cref{sec:universality beta} how the suboptimality of square loss under LFMM relates to its particular effect on the Gaussian universality of the ERM classifier.
   Our analysis thus opens the door to future investigation on the optimal loss design for \emph{non-Gaussian} data.
 
\end{enumerate}

The remainder of this paper is organized as follows.  
We give in \Cref{sec: background} a more detailed discussion of related works on Gaussian universality in high dimension.
\Cref{sec: setup} introduces the problem setup and the linear factor mixture model (LFMM) under study. 
In \Cref{sec: results} we present our main theoretical results on the high-dimensional characterization of the empirical risk minimization classifier under LFMM. 
In \Cref{sec: universality} we discuss the consequences and implications of our technical results on the breakdown of Gaussian universality and the choice of loss function.
Conclusions and perspectives are placed in \Cref{sec: conclusion}.

\paragraph{Notations.}
We denote scalars by lowercase letters, vectors by bold lowercase, and matrices by bold uppercase.
For a matrix $\X\in \RR^{p\times n}$, we denote $\X^\T$, $\x_i \in \RR^p$, and $\|\X\|$ the transpose, $i$th column, and spectral norm of $\X$, respectively.
We use $\I_p$ for the identity matrix of size $p$ by $p$.
For a vector $\x \in \RR^p$, the Euclidean norm of $\x$ is given by $\| \x \| = \sqrt{\x^\T \x}$.
For a random variable $x$, we denote $\E[x]$ the expectation of $x$, and $\E[x|\zeta]$ the conditional expectation of $x$ on an event $\zeta$.
For a sequence of random variables $x_n$, we use $x_n= o_P(1)$ to denote that the random variable $x$ converges in probability as $n \to \infty$ as in standard asymptotic statistics, and $O(\cdot), \Theta(\cdot)$ for  standard Big-O and Big-Theta notations.
For $\v_1,\ldots,\v_k\in\RR^p$, the notation ${\rm Span}\{\v_1,\ldots,\v_k\}$ stands for the smallest subspace of $\RR^p$ that contains all linear combinations of $\v_1,\ldots,\v_k$.

\section{Background on Gaussian Universality in High Dimensions}
\label{sec: background}

The Gaussian universality phenomenon was observed in many high-dimensional inference or ML problems, where some key statistics such as estimation error or classification accuracy exhibit universal behaviors independent of the data distribution. 
This phenomenon was extensively exploited to relax the data Gaussianity assumption that underlined many results in high-dimensional statistics, through a universality argument often established with two key ingredients - the law of large numbers (LLN) and the CLT. 

Here we briefly review previous findings on Gaussian universality in the high-dimensional regime.

\paragraph{Universality of large sample covariance matrices.} 
It has been long known in RMT that the eigenspectra of large random matrices enjoy universal properties for Gaussian and non-Gaussian entries~\citep{tao2010random,pastur2011eigenvalue}. 
Fundamentally, \citet{marchenko1967distribution} put forward that for sample covariance matrices of the type $\frac{1}{n}\sum_{i=1}^n\x_i\x_i^\T\in\RR^{p\times p}$ obtained from $n$ i.i.d.\@ data vectors $\x_i$ of dimension $p$, the universality on the limiting eigenvalue distribution hinges on the concentration of quadratic forms of $\x_i$ around their expectations, that is
\begin{equation}
\label{eq:concentration quadratic x}
   \textstyle \lim_{n,p\to\infty} (\x_i^\T\M\x_i-\E[\x_i^\T\M\x_i])/\E[\x_i^\T\M\x_i]=0,
\end{equation}
for deterministic $\M\in\RR^{p\times p}$. 
This LLN-type result holds for a wide family of high-dimensional random vectors $\x_i$.
An important example studied by \citet{bai2008clt} is $\x_i=\bSigma^{\frac{1}{2}}\z_i$ with $\z_i$ of i.i.d.\@ standardized entries with bounded fourth moments and non-negative definite symmetric $\bSigma$.

\paragraph{Universality of empirical risk minimization.} 
In line with the universal behavior of large sample covariance matrices, it has been recently demonstrated in a series of works~\citep{gerace2020generalisation,goldt2022gaussian,hu2022universality,montanari2022universality,schroder2023deterministic} that random (and deterministic under certain conditions) feature maps can produce feature matrices that, when replaced by ``equivalent'' Gaussian features with the same first-~and~second-order moments, yield the same training and/or generalization performance for many ML methods.
This GEP has also been proven for data vectors of independent entries in the context of regularized regression \citep{montanari2017Universality,panahi2017Universal,han2023universality}. 

In the context of ERM, the GEP traced back to a CLT on the inner product $\x^\T\bbeta$ for feature vector $\x\in\RR^p$ independent of classifier $\bbeta$ living in a subspace $\mathcal{B}\subset\RR^p$ containing the ERM solution $\hat\bbeta$, i.e.,
\begin{equation}
\label{eq:clt for GEP}
    \textstyle \lim_{n,p\to\infty} \sup_{\bbeta\in\mathcal{B}} \left(\E[f(\x^\T\bbeta)]-\E[f(\g^\T\bbeta)] \right) =0,
\end{equation}
with $\g\sim\mathcal{N}(\E[\x],\cov[\x])$ being the ``equivalent'' Gaussian vector, for any bounded Lipschitz function $f \colon \RR\to\RR$. 
The one-directional CLT in \eqref{eq:clt for GEP} requires the ERM solution $\hat\bbeta$ to not particularly aligned with any non-Gaussian variation in the feature vector $\x$, in order to ensure the asymptotic normality of $\x^\T\bbeta$ per a CLT-type argument.

\paragraph{Universality of empirical risk minimization on mixture data.} 
Inspired by the one-directional CLT condition \eqref{eq:clt for GEP} for GEP in ERM, \citet{dandi2024universality} demonstrated the Gaussian universality for mixture models under a key assumption that is a conditional version of~\eqref{eq:clt for GEP}:
\begin{equation}
\label{eq:clt for CGEP}
    \textstyle \lim_{n,p\to \infty} \sup_{\bbeta\in\mathcal{B}} \left( \E\left[f(\x^\T\bbeta)\vert y_\x=C\right]-\E\Big[f\big(\g_{[C]}^\T\bbeta\big)\Big] \right)=0,
\end{equation}
where $y_\x$ is the class label of $\x$ , $C$ a class modality, and $\g_{[C]}\sim\mathcal{N}\left(\E[\x\vert y_\x=C],\cov[\x\vert y_\x=C]\right)$.
Under this conditional one-directional CLT in \eqref{eq:clt for CGEP}, \citet{dandi2024universality} showed that the asymptotic training and generalization errors only depend on the class-conditional means and covariances of the mixture model, obeying thus a conditional Gaussian equivalent principle (CGEP). 

However, the condition in \eqref{eq:clt for CGEP} is \emph{not} verifiable from the data distribution, making it essentially different from the ones given in \Cref{sec: universality}.


Other related works established  equivalences between Gaussian data and Gaussian mixtures. For classification with random labels $y_{\x}\sim{\rm Unif}(\{-1,1\})$ generated independently of $\x$, \citet{gerace2024gaussian} proved that the training loss on GMM is asymptotically equal to that on a \emph{single} Gaussian. 
\citet{pesce2023gaussian} considered a teacher-student model and showed that when the target label $y$ is generated by a teacher model \emph{uncorrelated} with cluster means, the same asymptotic performance can be obtained by replacing a homoscedastic (i.e., having identical covariance) Gaussian mixture with a single Gaussian.

\paragraph{Universality under elliptical distributions.} For ``elliptic-like'' data vectors of form $\x=a\M\u$ with $a\in\RR$ a random scaling variable, $\M\in\RR^{p\times d}$ a deterministic matrix and $\u\in\RR^d$ a vector of standardized variables satisfying the concentration of quadratic forms in \eqref{eq:concentration quadratic x} (e.g., $\u\sim\mathcal{N}(\zeros_d,\I_d)$), \citet{el2009concentration} revealed a universal limiting spectrum for the sample covariance matrix that is insensitive to the distribution of $\u$ but depends on the scaling variable $a$.

Due to the existence of the scaling variable $a$, 
the one-directional CLT in \eqref{eq:clt for GEP} can not hold unless when conditioned on $a$. 
This remark was confirmed by the findings of \citet{el2018impact,adomaityte2024classification}. 
For $\M=\I_p$ and $\u$ of i.i.d.\@ entries, \citet{el2018impact} characterized the asymptotic error of ridge-regularized regression, which is universal with respect to the distribution of $\u$ but not $a$. 
In other words, the GEP collapses while the CGEP with respect to the scaling factor $a$ can still apply in this setting. \citet{adomaityte2024classification} considered a mixture model $\x\sim\mathcal{N}(y\bmu,a\I_p)$ with label $y=\pm 1$ and random scaling factor $a$, under which the asymptotic classification error is \emph{non-universal} with respect to the distribution of $a$. Here with our analysis under LFMM, we show that Gaussian universality may breakdown even for data vectors of concentrated quadratic forms as described in \eqref{eq:concentration quadratic x}, a condition not satisfied by elliptical data vectors due to the presence of a random scaling factor $a$.

\section{Problem Setup}
\label{sec: setup}

For a set of $n$ training samples  \(\{(\x_i,y_i)\}_{i=1}^n\) with feature vectors \(\x_i\in\RR^p\) and binary labels \(y_i \in \{ \pm 1\}\), a classifier is trained by minimizing the following ridge-regularized empirical risk:
\begin{equation}\label{eq:opt-origin-reg}
   \textstyle \hat \bbeta_{\ell,\lambda}=\argmin_{\bbeta \in \RR^p} \frac1n \sum_{i=1}^n \ell(\x_i^\T \bbeta, y_i)+\frac{\lambda}{2}\Vert\bbeta\Vert^2,
\end{equation}
for some non-negative loss function $\ell \colon \RR \times \{ \pm 1\} \to \RR_+ $ that evaluates the difference between the classification score $\hat y_i = \bbeta^\T \x_i$ and the corresponding ground-truth label $y_i$. 
Data instances $\x$ with negative scores $\bbeta^\T\x$ will be assigned to the class of label $y=-1$, and those with positive scores to the class annotated by $y=1$.
The addition of the $l_2$ regularization term with  $\lambda > 0$ can improve the generalization through a better bias-variance trade-off, and also ensures the uniqueness of the solution $\hat \bbeta_{\ell,\lambda}$ in the over-parametrized regime where the feature dimension $p$ is greater than the sample size $n$.

In this paper, we consider convex and continuously differentiable loss functions.
\begin{Assumption}[Loss function]\label{ass:loss}
The function $\ell(\cdot,y) \colon \RR \to \RR_+$ in \eqref{eq:opt-origin-reg} is convex and continuously differentiable with its (first) derivative different from $0$ at the origin. 
Its second and third derivatives exist and are bounded, except on a finite set of points. 
\end{Assumption}

\Cref{ass:loss} holds for the logistic loss $\ell(\hat y, y) = -\ln(1/(1+e^{-y\hat y}))$ used in logistic regression, the square loss $\ell(\hat y, y)  = (y-\hat y)^2/2$ for least-squares classifier, and the square hinge loss $\ell(\hat y, y) = \max\{0,1-y\hat y\}^2$.
Non-smooth losses such as the hinge loss $\ell(\hat y, y) = \max\{0,1-y\hat y\}$ used in SVMs~\citep{scholkopf2018kernel}, and the absolute loss $\ell(\hat y, y) = \vert \hat y-y\vert$, fail to meet \Cref{ass:loss}.\footnote{To study these non-smooth losses, a workaround would be to evaluate instead a series of smooth functions that gradually approach the non-smooth functions, so as to retrieve their performance in some carefully taken limit.
Such consideration is however beyond the focus of this paper. }

In the following, we focus on the ERM in \eqref{eq:opt-origin-reg}, and use the shorthand notation $\hat \bbeta$ for $\hat \bbeta_{\ell,\lambda}$ in \eqref{eq:opt-origin-reg} unless there is a risk of confusion.
We consider the following linear factor mixture model.

\begin{Definition}[Linear factor mixture model, LFMM]\label{def:linear_factor}
We say that a data instance \((\x,y ) \sim \mathcal{D}_{(\x, y)}\) with class label \(y\in\{\pm 1\}\) and class priors $\Pr(y=-1) = \rho$, $\Pr(y=1) = 1-\rho$, follows a linear factor mixture model if the corresponding feature vector \(\x\in\RR^p\) can be expressed as a linear mapping of $p$ independent factors $z_1,\ldots,z_p$ as
\begin{equation}\label{eq:mixture}  
   \textstyle \x = \sum_{k=1}^p z_k \v_k=\sum_{k=1}^p (ys_k+e_k) \v_k,
\end{equation}
for linearly independent deterministic vectors $\v_1,\ldots,\v_p\in\RR^p$ and standardized independent\footnote{In other words, $\E[e_k]=0$, $\var[e_k]=1$, $\forall  k\in\{1,\ldots,p\}$.} noises $e_1,\ldots,e_p\in\RR$ of symmetric distribution.
Among the $p$ factors $z_1,\ldots,z_p$, we have
\begin{itemize}
    \item $q$ \textbf{\emph{informative factors}} $z_1,\ldots,z_q$ with deterministic signals $s_k>0$, $\forall k\in\{1,\ldots,q\}$; and
    \item $p-q$ \textbf{\emph{noise factors}} $z_{q+1}, \ldots, z_p$ with $s_k=0$, $\forall k\in\{q+1,\ldots,p\}$.
\end{itemize}
Note that \eqref{eq:mixture} can be compactly written as $\x = \V\z$, with $\V = [\v_1,\ldots,\v_p]\in\RR^{p\times p}$ and $\z=[z_1,\ldots,z_p]^\T=[y s_1 + e_1, \ldots, y s_q + e_q, e_{q+1}, \ldots, e_p]^\T \in \RR^{p}$.
The class-conditional means and covariances of $\x$ are therefore given by
\begin{align}
    \bmu &\equiv \E[\x\vert y=1]  =\textstyle \sum_{k=1}^{q} s_k \v_k \in \RR^p, \quad \E[\x\vert y=-1] = -\bmu, \label{eq:def_mu}  \\
    \bSigma &\equiv \cov[\x\vert y= \pm 1]= \V \V^\T = \textstyle \sum_{k=1}^{p}  \v_k\v_k^\T \in \RR^{p \times p}. \label{eq:def_Sigma}
\end{align}
\end{Definition}

Notice that GMM of form $\x\sim\mathcal{N}(y\bmu,\bSigma)$ is a special case of LFMM in \Cref{def:linear_factor} with exclusively Gaussian noises $e_1,\ldots,e_p$. 
See also \Cref{def:equivalent GMM} below for the associated equivalent GMM.

Linear factor models are among the most fundamental probabilistic models with latent variables, which underlie many ML methods such as PCA and ICA, and serve as building blocks of deep generative models \cite[Chapter~13]{Goodfellow-et-al-2016}. 
They are often expressed as:
\begin{align*}
    \x = \W\h+\b+{\rm noise},
\end{align*}
where $\h$ is a vector of latent variables, $\b$ a constant bias, and ${\rm noise}$ stands for an uninformative term of independent Gaussian noises. 
The LFMM in \Cref{def:linear_factor} can be related to this form minus the bias $\b$. 
Our framework requires the clusters to have opposite means (therefore $\b=\zeros_p$), which can be satisfied through a centering operation on the original data space.

Our analysis applies under the following assumption on the distribution of LFMM.

\begin{Assumption}[Distribution of LFMM]\label{ass:LFMM} We consider, for the LFMM in~\Cref{def:linear_factor}, that (i) the factors $z_1,\ldots,z_p$ have bounded fourth moments and (ii) the signal subspace ${\rm Span}\{\v_1,\ldots,\v_q\}$ is orthogonal to the noise subspace ${\rm Span}\{\v_{q+1},\ldots,\v_p\}$.
\end{Assumption}

The condition of bounded fourth moment for $z_1,\ldots,z_p$ in Item~(i) of \Cref{ass:LFMM} is required for some concentration results in our high-dimensional asymptotic analysis 
and Item~(ii) separates the informative signal subspace from the noise subspace (in which no classifier can achieve better performance than random guesses).

We position ourselves under the following high-dimensional asymptotic setting, where the feature dimension $p$ and sample size $n$ are both large and comparable.

\begin{Assumption}[High-dimensional regime]\label{ass:growth-rate}
As $n\to\infty$ with fixed $n/p \in (0,\infty)$, we have, for the LFMM in~\Cref{def:linear_factor} that (i) $\| \bmu \|, \| \bSigma \|, \| \bSigma^{-1} \|  = \Theta(1)$ and (ii) $s_1,\ldots,s_q =\Theta(1)$ with fixed $q$.
\end{Assumption}

In plain words, \Cref{ass:growth-rate} says that the ratio $n/p$, or the number of samples per dimension, remains finite in high dimensions.
Item~(i) of \Cref{ass:growth-rate} ensures, by bounding $\| \bmu \|$ and $\| \bmu \|^{-1}$, that the distance between the LFMM class centers is comparable to $1$. 
It also guarantees, by controlling $\| \bSigma \|$ and $\| \bSigma^{-1} \|$, that the variation of feature vector $\x$ on any direction in $\RR^p$ is also comparable to $1$. 
This implies that the feature vector $\x$ does not live in a subspace of dimension smaller than $p$.
The fixed number $q$ of informative factors in Item~(ii) of \Cref{ass:growth-rate} is a consequence of $\| \bmu \| = \| \sum_{k=1}^qs_k\v_k \|=\Theta(1)$.

\section{High-Dimensional Asymptotic Performance under LFMM}
\label{sec: results}

In this section, we present a self-consistent system of  equations that gives access to the high-dimensional training and generalization performances of the ERM classifier in \eqref{eq:opt-origin-reg}, under the LFMM in \Cref{def:linear_factor}. The characterization of high-dimensional asymptotic performance via a system of equations is reminiscent of previous analyses under GMM~\citep{mai2019high,mignacco2020role,pesce2023gaussian}, but our equations are different due to the collapse of the conditional one-dimensional CLT in~\eqref{eq:clt for CGEP} required for applying the CGEP.

Before presenting our system of equations, let us introduce first some mathematical objects involved in these equations. 
With the proximal operator $\prox_{\tau, f} (t)=\argmin_{a \in \RR} \left[ f(a) + \frac{1}{2\tau} (a - t)^2 \right]$ for $\tau > 0$ and convex $f\colon \RR \to \RR$, we define the mapping 
\begin{equation}
\label{eq:h}
    h_\kappa(t,y) = (\prox_{\kappa, \ell(\cdot,y)} (t)-t)/\kappa,
\end{equation}
for some constant $\kappa>0$. 
Let $r\in\RR$ be a random variable of form
\begin{equation}\label{eq:def_r}
   r = \textstyle ym +\sigma \tilde e+\sum_{k=1}^q \psi_k e_k,
\end{equation}
for constants $m,\sigma,\psi_1,\ldots,\psi_q$, with label $y$ and $e_1,\ldots,e_q$ the corresponding noise variables in the informative factors $z_1,\ldots,z_q$ of the LFMM in \Cref{def:linear_factor}, as well as $\tilde e\sim\mathcal{N}(0,1)$ independent of $y,z_1,\ldots,z_q$. 
Remark that the distribution of $r$ is parameterized by $m,\sigma^2,\psi_1,\ldots,\psi_q$.

\paragraph{Self-consistent system of equations.}
Our system of equations is on the $q+3$ deterministic constants $\theta,\eta,\gamma,\omega_1,\ldots,\omega_q$ that fully characterize the asymptotic performance of ERM classifier trained on high-dimensional LFMM\footnote{According to \Cref{ass:loss}, $\frac{\partial h(r,y)}{ \partial r}$ exists except on a finite set of points. On those points, we use the left derivative of $h(r,y)$ with respect $r$, i.e., $\lim_{t\to r_-}\left(h(t,y)-h(r,y)\right)/(t-r)$.}:
\begin{align}
   \label{eq:theta eta gamma}
   &\theta=-\E \left[ \frac{\partial h_\kappa(r,y)}{\partial r} \right],\quad\eta=\E[yh_\kappa(r,y)],\quad\gamma=\sqrt{ \E[h_\kappa^2(r,y)]},\nonumber\\
   &\omega_k=\E[h_\kappa(r,y) e_k]+ \theta \cdot \v_k^\T\Q\bxi,\quad \forall k\in\{1,\ldots,q\},
\end{align} 
where 
\begin{equation}
\label{eq:bxi Q}
    \textstyle \bxi =\eta\bmu +\sum_{k=1}^{q} \omega_k\v_k,\quad \Q=\left( \lambda\I_p+\theta\bSigma\right)^{-1},
\end{equation}
the mapping $h_\kappa(r,y)$ is as defined in \eqref{eq:h} for
\begin{equation}
\label{eq:kappa}
    \textstyle \kappa= \frac1n \tr \bSigma \Q,
\end{equation}
and the random variable $r$ as defined in \eqref{eq:def_r} with
\begin{align}
\label{eq: m sigma}
   m = \bmu^\T\Q\bxi,\quad \textstyle  \sigma^2=\frac{\gamma^2}n \tr \left(\Q\bSigma \right)^2,\quad\psi_k=\v_k^\T\Q\bxi,\quad \forall k\in\{1,\ldots,q\}.
\end{align}

We are now ready to present our \Cref{theo:main} on the asymptotic distributions of in-sample and out-of-sample predicted scores. 
The proof of \Cref{theo:main} is provided in \Cref{sm:proof-of-main-results}.

\begin{Theorem}[Asymptotic distribution of predicted scores]
\label{theo:main}
Let Assumptions~\ref{ass:loss},~\ref{ass:LFMM},~and \ref{ass:growth-rate} hold, for $\hat\bbeta$ solution to the ERM problem in \eqref{eq:opt-origin-reg} on a training set $\{ (\x_i, y_i )\}_{i=1}^n$ of size $n$ drawn i.i.d.\@ $ (\x_i, y_i )\sim\mathcal{D}_{(\x,y)}$ from the LFMM in \Cref{def:linear_factor}, we have that, for any bounded Lipschitz function $f\colon \RR\to\RR$,
\begin{equation}\label{eq:convergence test score} 
\E\big[f(\hat{\bbeta}^\T\bnu)\big]-\E\big[f(\tilde{\bbeta}^\T \bnu)\big]\to 0,
\end{equation}
for any deterministic feature vector $\bnu\in\RR^p$, and 
\begin{equation}\label{eq:convergence training score}
\E[f(\hat{\bbeta}^\T \x_i)]-\E[f(\prox_{\kappa, \ell(\cdot,y_i)} (\tilde{\bbeta}^\T \x_i))] \to 0,\quad \forall i\in\{1,\ldots,n\},
\end{equation}
where 
\begin{equation}
   \label{eq:tilde beta}
   \tilde \bbeta = \textstyle \left( \lambda\I_p+\theta\bSigma\right)^{-1}\left(\eta\bmu +\sum_{k=1}^{q} \omega_k\v_k +\gamma\bSigma^{\frac{1}{2}}\u\right), 
\end{equation}
for Gaussian vector $\u\sim\mathcal{N}(\zeros_p,\I_p/n)$ independent of $\{ (\x_i, y_i )\}_{i=1}^n$ and constants $\theta, \eta, \gamma,\omega_1,\ldots,\omega_q$ determined by the self-consistent system of equations in \eqref{eq:theta eta gamma}, with $\kappa$ given in \eqref{eq:kappa}.

\end{Theorem}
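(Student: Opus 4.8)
The plan is to establish the two asymptotic equivalences in \eqref{eq:convergence test score} and \eqref{eq:convergence training score} via a \emph{leave-one-out} analysis of the ERM solution $\hat\bbeta$, in the spirit of \citet{el2013robust}. The central difficulty, and what distinguishes this from the Gaussian-universality proofs, is that the predicted score $\hat\bbeta^\T\x_i$ does \emph{not} become asymptotically Gaussian along the informative directions $\v_1,\ldots,\v_q$; the proof must therefore keep explicit track of the non-Gaussian noise variables $e_1,\ldots,e_q$ rather than discarding them via a CLT. First I would write the stationarity (KKT) condition for the strongly convex problem \eqref{eq:opt-origin-reg}, namely $\frac1n\sum_{i=1}^n \ell'(\x_i^\T\hat\bbeta,y_i)\,\x_i + \lambda\hat\bbeta = \zeros_p$, and observe that $\hat\bbeta$ lives (up to vanishing error) in the span of $\bmu,\v_1,\ldots,\v_q$ plus a Gaussian-like fluctuation in the covariance metric. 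The candidate $\tilde\bbeta$ in \eqref{eq:tilde beta} makes this decomposition explicit: a deterministic part $\eta\bmu+\sum_k\omega_k\v_k$ weighted by $\Q=(\lambda\I_p+\theta\bSigma)^{-1}$, and a random part $\gamma\bSigma^{1/2}\u$ of order $1/\sqrt n$.

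The key steps, in order, are as follows. (1) For each $i$, introduce the leave-one-out estimator $\hat\bbeta_{-i}$ solving \eqref{eq:opt-origin-reg} without the $i$th sample; since it is independent of $(\x_i,y_i)$, I can analyze $\hat\bbeta_{-i}^\T\x_i$ directly. Using $\x_i=\V\z_i$ with $\z_i=[y_is_1+e_{i,1},\ldots]^\T$ and \Cref{ass:LFMM}, the inner product splits into a deterministic signal piece $y_i\bmu^\T\hat\bbeta_{-i}$, the finitely many informative-noise contributions $\sum_{k=1}^q e_{i,k}\,\v_k^\T\hat\bbeta_{-i}$, and a high-dimensional noise-factor piece that, by the concentration of quadratic forms guaranteed via the bounded-fourth-moment condition in \Cref{ass:LFMM}, concentrates to a Gaussian of variance $\gamma^2\cdot\frac1n\tr(\Q\bSigma)^2$. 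This produces exactly the representation \eqref{eq:def_r} for $r$, with $m,\sigma,\psi_k$ as in \eqref{eq: m sigma}. (2) Relate $\hat\bbeta$ back to $\hat\bbeta_{-i}$ through a rank-one perturbation argument, showing $\x_i^\T\hat\bbeta = \prox_{\kappa,\ell(\cdot,y_i)}(\x_i^\T\hat\bbeta_{-i}) + o_P(1)$, where $\kappa$ is the effective scalar resistance $\frac1n\tr\bSigma\Q$ of \eqref{eq:kappa}; this is what yields the proximal wrapping in \eqref{eq:convergence training score}. (3) Close the self-consistency by projecting the KKT equation onto $\bmu$ and each $\v_k$, and by taking its squared covariance-weighted norm, which reproduces the fixed-point identities \eqref{eq:theta eta gamma} for $\theta,\eta,\gamma,\omega_k$ in terms of the scalar random variable $r$ and the map $h_\kappa$.

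For the test statement \eqref{eq:convergence test score}, I would decompose an arbitrary deterministic $\bnu$ into its components along $\Q\bmu$, the $\Q\v_k$, and the orthogonal complement; the first $q+1$ give deterministic contributions matching the $\eta\bmu+\sum_k\omega_k\v_k$ part of $\tilde\bbeta$, while the orthogonal component picks up only the Gaussian fluctuation $\gamma\bSigma^{1/2}\u$, so that $\hat\bbeta^\T\bnu$ and $\tilde\bbeta^\T\bnu$ agree in distribution up to $o_P(1)$. Passing from these $o_P(1)$ statements to the expectation convergence in \eqref{eq:convergence test score}–\eqref{eq:convergence training score} is routine once $f$ is bounded Lipschitz, since boundedness gives uniform integrability.

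The main obstacle I anticipate is controlling the leave-one-out perturbation uniformly and rigorously establishing that the finitely many alignments $\v_k^\T\hat\bbeta$ converge to the deterministic constants $\psi_k$ (equivalently $\v_k^\T\Q\bxi$) \emph{without} invoking any CLT along those directions. One must show simultaneously that (a) the difference $\hat\bbeta-\hat\bbeta_{-i}$ is $O_P(1/\sqrt n)$ in the $\bSigma$-norm, (b) the scalar resistance $\kappa$ and the derivative feedback $\theta=-\E[\partial_r h_\kappa(r,y)]$ are well-defined and the fixed-point system \eqref{eq:theta eta gamma} admits a solution, and (c) the $q$-dimensional informative block and the $(p-q)$-dimensional noise block decouple exactly as in \Cref{ass:LFMM}. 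Handling the informative block explicitly—carrying the non-Gaussian $e_k$ through every concentration estimate rather than Gaussianizing them—is precisely where the breakdown of Gaussian universality enters, and is the technically delicate heart of the argument.
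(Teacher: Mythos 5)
Your proposal follows essentially the same route as the paper's own proof: a leave-one-out construction of $\hat\bbeta_{-i}$, the rank-one perturbation identity $\x_i^\T\hat\bbeta \simeq \prox_{\kappa,\ell(\cdot,y_i)}(\x_i^\T\hat\bbeta_{-i})$ with $\kappa=\frac1n\tr\bSigma\Q$, an explicit (non-Gaussianized) treatment of the $q$ informative noise variables alongside a CLT only for the noise-factor block, and closure of the self-consistent system by projecting the stationarity equation onto $\bmu$, the $\v_k$, and the noise subspace. The technical obstacles you flag (uniform leave-one-out control, the non-CLT treatment of $\v_k^\T\hat\bbeta$, the decoupling of informative and noise blocks) are exactly the ones the paper resolves, via leave-two-out and leave-one-variable-out variants of the same construction.
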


According to \eqref{eq:convergence test score} in \Cref{theo:main}, for a fresh test sample $(\x',y')$ (which might be drawn from a distribution \emph{different} from $\mathcal{D}_{(\x,y)}$ of training samples, as in the case of transfer learning), the out-of-sample predicted scores $\hat\bbeta^\T\x',\tilde\bbeta^\T\x'$ produced by the ERM classifier $\hat\bbeta$ and its high-dimensional ``equivalent'' $\tilde\bbeta$ given in \eqref{eq:tilde beta}
have asymptotically the same distribution in the sense of \eqref{eq:convergence test score}. 
Furthermore, \eqref{eq:convergence training score} tells us that the in-sample predicted score $\hat\bbeta^\T\x_i$ of $\hat\bbeta$ on a training sample $(\x_i,y_i)$ follows asymptotically the same distribution as $\prox_{\kappa, \ell(\cdot,y_i)} (\tilde{\bbeta}^\T \x_i)$. 
Since the distribution of $\tilde\bbeta$ is given in \eqref{eq:tilde beta}, we obtain directly from \Cref{theo:main} the asymptotic training and generalization errors of the ERM classifier $\hat\bbeta$.

Furthermore, it follows from LLN and CLT that $(\tilde\bbeta^\T\x,y)$ with $(\x,y)\sim\mathcal{D}_{(\x,y)}$ independent of $\tilde\bbeta$ converges in distribution to  $(r,y)$ with $r$ as defined in \eqref{eq:def_r} with $m,\sigma^2,\psi_1,\ldots,\psi_q$ given in \eqref{eq: m sigma}. 
We thus obtain the following corollary on the asymptotic classification accuracy of $\hat \bbeta$ on any training sample $(\x_i,y_i)$ and test sample $(\x',y')$ drawn from the same distribution $\mathcal{D}_{(\x,y)}$. 
The proof of \Cref{cor:performance} is deferred to \Cref{subsubsec:proof_of_cor_performance}.

\begin{Corollary}[Asymptotic generalization and training performances]
\label{cor:performance}
Under the conditions and notations of \Cref{theo:main}, we have that, for any bounded Lipschitz function $f\colon \RR\to\RR$,
\begin{equation}\label{eq:convergence tildebeta x} \E\big[f(\tilde\bbeta^\T\x)\vert y\big]-\E\big[f(r)\vert y\big]\to 0,
\end{equation}
for $(\x,y)\sim\mathcal{D}_{(\x,y)}$ independent of $\tilde\bbeta$, where $r$ is as defined in \eqref{eq:def_r} with $m,\sigma^2,\psi_1,\ldots,\psi_q$ given in \eqref{eq: m sigma}. Consequently, we have 
\begin{equation}
\label{eq:generalization error}
\Pr (y'\hat\bbeta^\T\x'>0) - \Pr (yr>0) \to 0,
\end{equation}
for some test sample $(\x',y')\sim\mathcal{D}_{(\x,y)}$ independent of $\{ (\x_i, y_i )\}_{i=1}^n$, and
\begin{equation}
\label{eq:training error}
 \Pr(y_i\hat\bbeta^\T\x_i >0) - \Pr(y\prox_{\kappa, \ell(\cdot,y)}(r)>0 )  \to 0,\quad \forall i\in\{1,\ldots,n\}.
\end{equation}
\end{Corollary}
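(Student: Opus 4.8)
The plan is to prove the three displays in turn, treating \eqref{eq:convergence tildebeta x} as the analytical core and then deducing \eqref{eq:generalization error}--\eqref{eq:training error} from it together with \Cref{theo:main}. For \eqref{eq:convergence tildebeta x} I would substitute the explicit form of $\tilde\bbeta$ from \eqref{eq:tilde beta} and of $\x=\V\z=y\bmu+\sum_{k=1}^{p}e_k\v_k$, writing
\begin{equation*}
  \tilde\bbeta^\T\x = \bxi^\T\Q\x + \gamma\,\u^\T\bSigma^{\frac{1}{2}}\Q\x ,
\end{equation*}
with $\bxi,\Q$ as in \eqref{eq:bxi Q}. The first term is the \emph{signal} part. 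Since $\bmu$ and $\v_1,\ldots,\v_q$ all lie in ${\rm Span}\{\v_1,\ldots,\v_q\}$, the vector $\bxi$ lies in the signal subspace; by the orthogonality in Item~(ii) of \Cref{ass:LFMM} the matrix $\bSigma$, and hence $\Q$, leaves this subspace invariant, so $\Q\bxi$ remains in the signal subspace and is orthogonal to every $\v_k$ with $k>q$. Using the definitions of $m$ and $\psi_k$ in \eqref{eq: m sigma} this collapses the signal part \emph{exactly} to $ym+\sum_{k=1}^{q}\psi_k e_k$, with no contribution from the noise factors.

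The second term is the \emph{fluctuation} part. Conditioning on $\x$ and using that $\u\sim\mathcal{N}(\zeros_p,\I_p/n)$ is independent of the data, $\gamma\,\u^\T\bSigma^{\frac{1}{2}}\Q\x$ is \emph{exactly} a centered Gaussian of variance $\frac{\gamma^2}{n}\x^\T\Q\bSigma\Q\x$. I would then invoke the concentration of quadratic forms guaranteed by the bounded fourth moments in Item~(i) of \Cref{ass:LFMM} (the LLN statement in the spirit of \eqref{eq:concentration quadratic x}) to obtain $\frac{\gamma^2}{n}\x^\T\Q\bSigma\Q\x\to\sigma^2$ in probability, where $\sigma^2=\frac{\gamma^2}{n}\tr\left((\Q\bSigma)^2\right)$ as in \eqref{eq: m sigma}; here the mean contribution $\bmu^\T\Q\bSigma\Q\bmu=O(1)$ is negligible against the $\Theta(n)$ trace term under \Cref{ass:growth-rate}. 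Conditionally on $(y,e_1,\ldots,e_q)$ the signal part is fixed while the fluctuation part converges to $\mathcal{N}(0,\sigma^2)$, identified with the term $\sigma\tilde e$ in \eqref{eq:def_r}; matching with the definition of $r$ then yields \eqref{eq:convergence tildebeta x}.

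For \eqref{eq:generalization error} I would condition on the fresh test pair $(\x',y')$, which is independent of the training set and hence of both $\hat\bbeta$ and $\tilde\bbeta$. Applying \eqref{eq:convergence test score} with the deterministic vector $\bnu=\x'$ transfers the law of $\hat\bbeta^\T\x'$ to that of $\tilde\bbeta^\T\x'$, and a second application of \eqref{eq:convergence tildebeta x} (valid because $\x'$ is independent of $\tilde\bbeta$) transfers it to that of $r$. For \eqref{eq:training error} I would start instead from \eqref{eq:convergence training score}, which already replaces $\hat\bbeta^\T\x_i$ by $\prox_{\kappa,\ell(\cdot,y_i)}(\tilde\bbeta^\T\x_i)$; crucially $\tilde\bbeta$ depends on the data only through the independent $\u$ (the constants $\theta,\eta,\gamma,\omega_k$ being deterministic solutions of \eqref{eq:theta eta gamma}), so $\x_i$ is independent of $\tilde\bbeta$ and \eqref{eq:convergence tildebeta x} applies, after which the $1$-Lipschitz prox map carries $\tilde\bbeta^\T\x_i$ to $\prox_{\kappa,\ell(\cdot,y_i)}(r)$.

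The main obstacle is that the targets in \eqref{eq:generalization error}--\eqref{eq:training error} are probabilities of the \emph{sign events} $\{yr>0\}$ rather than expectations of bounded Lipschitz functions, so the weak-convergence statements above must be upgraded by a Portmanteau-type argument. I would sandwich the indicator $\mathbf{1}(t>0)$ between two bounded Lipschitz approximants and control the gap by the mass the limit law places in a shrinking neighbourhood of $0$. This is precisely where $\sigma^2=\Theta(1)>0$ is essential, so that $yr=m+y\sigma\tilde e+y\sum_{k=1}^{q}\psi_k e_k$ carries a nondegenerate Gaussian component and therefore admits a density with no atom at $0$; likewise $\prox_{\kappa,\ell(\cdot,y)}(r)$ has no atom at $0$, since a convex continuously differentiable loss (\Cref{ass:loss}) yields a strictly increasing prox, making $\{\prox(r)=0\}$ a single point of a continuous law. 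Finally, some care is needed to interchange the conditional limit over $\x'$ with the outer expectation over $(\x',y')$ uniformly, which I would justify through the boundedness of the test functions together with dominated convergence.
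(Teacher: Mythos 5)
Your proposal is correct and follows essentially the same route as the paper's own proof: the same signal-plus-fluctuation decomposition of $\tilde\bbeta^\T\x$, the same use of the signal/noise orthogonality to reduce $\bxi^\T\Q\x$ exactly to $ym+\sum_{k=1}^q\psi_k e_k$, and the same conditioning on the test pair to pass from \eqref{eq:convergence test score} and \eqref{eq:convergence training score} to the stated probability limits. The only (harmless) variations are that you identify the fluctuation term as exactly Gaussian conditionally on $\x$ and concentrate its variance via the quadratic-form LLN, where the paper instead applies a CLT to the bilinear form $\u^\T\bSigma^{\frac{1}{2}}\Q\V_{\rm noise}\tilde\e$, and that you spell out the Portmanteau/no-atom-at-zero step that the paper treats as immediate.
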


\begin{Remark}[On classifier bias under GMM and LFMM]\normalfont
Taking expectation on both sides of \eqref{eq:tilde beta}, we get $\E[\tilde{\bbeta}]=\left( \lambda\I_p+\theta\bSigma\right)^{-1}\left(\eta\bmu +\sum_{k=1}^{q} \omega_k\v_k\right)$. 
It follows from \eqref{eq:theta eta gamma} and Stein's lemma~\citep{ingersoll1987theory} that $\omega_1,\ldots,\omega_q=0$ in the case of Gaussian informative factors $z_1,\ldots,z_q$. As GMM is a special case of LFMM with exclusively Gaussian  (informative and noise) factors, we have that $\tilde{\bbeta}$ aligns, in expectation, with $\left( \lambda\I_p+\theta\bSigma\right)^{-1}\bmu$ under GMM. 
For non-Gaussian informative factors, we generally have non-zero $\omega_1,\ldots,\omega_q$ that account for the non-Gaussian variation in data, making $\hat\bbeta$ more or less aligned with the directions $\v_1,\ldots,\v_q$ of the informative factors.
\end{Remark}

\section{Conditions and Implications of Gaussian Universality}
\label{sec: universality}

In this section, we exploit our high-dimensional asymptotic analysis in \Cref{sec: results} to derive the conditions of Gaussian universality under LFMM. 
To discuss the Gaussian universality in classification of mixture data, we introduce the notion of equivalent Gaussian mixture model (to a given LFMM), in a similar spirit to~\citet{dandi2024universality}.

\begin{Definition}[Equivalent Gaussian mixture model]
\label{def:equivalent GMM}
For an LFMM $\mathcal{D}_{(\x,y)}$ as in  \Cref{def:linear_factor}, we define its equivalent Gaussian mixture model (GMM) $\mathcal{D}_{(\g,y)}$ as the GMM with the same class-conditional means and covariances as the LFMM $\mathcal{D}_{(\x,y)}$. 
Namely,
\begin{equation}
    \g\sim\mathcal{N}(y\bmu,\bSigma),
\end{equation}
for $\bmu, \bSigma$ given in \eqref{eq:def_mu}~and~\eqref{eq:def_Sigma} of \Cref{def:linear_factor}, respectively.
We denote by $\hat\bbeta^\g$ the ERM solution to \eqref{eq:opt-origin-reg} obtained on $n$ i.i.d.\@ equivalent GMM samples $(\g_1,y_1),\ldots,(\g_n,y_n)\sim \mathcal{D}_{(\g,y)}$, and similarly its high-dimensional ``equivalent'' $\tilde\bbeta^\g$ as in \eqref{eq:tilde beta} of \Cref{theo:main}.
\end{Definition}

Notice importantly from \Cref{def:linear_factor} that the equivalent GMM $\mathcal{D}_{(\g,y)}$ to an LFMM $\mathcal{D}_{(\x,y)}$ can be obtained by taking $e_1,\ldots,e_p$ of the LFMM $\mathcal{D}_{(\x,y)}$ to be standard Gaussian variables. 

We define two types of Gaussian universality considered in this paper as follows.

\begin{Definition}[Gaussian universality under LFMM]
\label{def:gaussian_universality}
For an ERM solution $\hat\bbeta$ obtained on a general LFMM $\mathcal{D}_{(\x,y)}$ in \Cref{def:linear_factor} and an ERM solution $\hat\bbeta^\g$ obtained on the equivalent GMM in \Cref{def:equivalent GMM}, we say that the Gaussian universality holds
\begin{itemize}
    \item on \textbf{\emph{classifier}} if $\hat\bbeta$ has asymptotically the same predictive ability as $\hat\bbeta^\g$ on a given test set, as a consequence of their high-dimensional equivalents $\tilde\bbeta_{\ell,\lambda}, \tilde\bbeta_{\ell,\lambda}^{\g}$ provided in \Cref{theo:main} following the \emph{same} distribution; 
    \item on \textbf{\emph{in-distribution performance}} if the respective training and generalization performances under $\mathcal{D}_{(\x,y)}$ are asymptotically the same as under $\mathcal{D}_{(\g,y)}$, that is
\begin{equation}
\Pr(y_i\x_i^\T\hat\bbeta >0) - \Pr(y_i\g_i^\T\hat\bbeta^\g>0 )  \to 0, 
\end{equation}
 and
\begin{equation}
\Pr (y'\x^{\prime\T}\hat\bbeta>0) - \Pr (y'\g^{\prime\T}\hat\bbeta^\g>0) \to 0,
\end{equation}
for $(\x',y')\sim\mathcal{D}_{(\x,y)}$ a test sample independent of $\{ (\x_i, y_i )\}_{i=1}^n$, and $(\g',y')\sim\mathcal{D}_{(\g,y)}$ independent of $\{(\g_i, y_i )\}_{i=1}^n$.
\end{itemize}
\end{Definition}

In the following, we study first in~\Cref{sec:universality error} the Gaussian universality in the sense of in-distribution performance, and discuss our results with respect to the conditional one-directional CLT and the CGEP in~\citet{dandi2024universality}. 
We then reveal in \Cref{sec:universality beta} the key role of square loss in inducing the Gaussian universality of classifier, and discuss its implication for the choice of loss function.

Throughout this section, our discussions are illustrated through numerical experiments on datasets of moderately large size, with $n,p$ \emph{only in hundreds}. 
A close match is consistently observed between the proposed asymptotic analysis and the empirical results.

\subsection{Gaussian Universality of In-distribution Performance}
\label{sec:universality error}

Notice from \eqref{eq:generalization error} in \Cref{cor:performance} that the in-distribution generalization performance of $\hat\bbeta$ under an LFMM~$\mathcal{D}_{(\x,y)}$ is determined by the random variable $r$ in \eqref{eq:def_r}, the distribution of which depends solely on (i) the distributions of $y,e_1,\ldots,e_q$ in the LFMM and 
(ii) the values of $m,\sigma^2,\psi_1,\ldots,\psi_q$ given in \eqref{eq: m sigma}. 
Remark also that the values of $m,\sigma^2,\psi_1,\ldots,\psi_q$ in \eqref{eq: m sigma} are determined by the system of equations in \eqref{eq:theta eta gamma}, which concerns only the distributions of $r, y, e_1,\ldots,e_q$, as well as the deterministic parameters $\bmu,\bSigma, \v_1,\ldots,\v_q$ of the LFMM.

We thus conclude that the distribution of $r$ is \emph{insensitive} to the distributions of noise factors $z_{q+1},\ldots,z_p$. 
In other words, an LFMM with Gaussian noises $e_1,\ldots,e_q$ in its informative factors $z_1,\ldots,z_q$ has the \emph{same} asymptotic generalization performance as its equivalent GMM in \Cref{def:equivalent GMM}, \emph{regardless of} the distributions of the noise factors $z_{q+1},\ldots,z_p$.

A similar conclusion can be drawn from \eqref{eq:training error} of \Cref{cor:performance} on the asymptotic in-distribution training performance, by studying also the distribution of $r$ but through a proximal mapping $\prox_{\kappa, \ell(\cdot,y)}$.
We formalize these conclusions on the universality of in-distribution performance in \Cref{cor:condition for r}, the proof of which is given in \Cref{subsubsec:proof_of_condition for r}.

\begin{Corollary}[Condition of Gaussian universality on in-distribution performance]\label{cor:condition for r}
Under the settings and notations of \Cref{theo:main}~and~\Cref{def:equivalent GMM}, the 
Gaussian universality of in-distribution performance in \Cref{def:gaussian_universality} holds if and only if noises $e_1,\ldots,e_q$ of LFMM informative factors in \eqref{eq:mixture}  are Gaussian. 
\end{Corollary}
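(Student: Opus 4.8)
My plan is to reduce the corollary to a comparison between the conditional law of the scalar variable $r$ in \eqref{eq:def_r} under the LFMM and that of its counterpart $r^\g$ under the equivalent GMM, and then to characterize exactly when these two laws coincide. By \Cref{cor:performance}, the in-distribution generalization and training errors under $\mathcal{D}_{(\x,y)}$ are the functionals $\Pr(yr>0)$ and $\Pr(y\prox_{\kappa,\ell(\cdot,y)}(r)>0)$ of the conditional law of $(r,y)$, and the same expressions hold verbatim for $\mathcal{D}_{(\g,y)}$ with $r$ replaced by $r^\g$. The structural fact I would lean on is that the self-consistent system \eqref{eq:theta eta gamma}--\eqref{eq: m sigma} takes as its only distribution-dependent inputs the joint law of $(y,e_1,\ldots,e_q)$ together with the deterministic quantities $\bmu,\bSigma,\v_1,\ldots,\v_q$; it never sees the laws of the noise factors $z_{q+1},\ldots,z_p$.

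For the \emph{if} direction I would assume $e_1,\ldots,e_q$ Gaussian. Then the law of $(y,e_1,\ldots,e_q)$ for the LFMM equals that of its equivalent GMM, while $\bmu,\bSigma,\v_k$ agree by \Cref{def:equivalent GMM}, so the two self-consistent systems are literally the \emph{same} system of equations. Invoking the uniqueness of its solution (as guaranteed by the analysis underlying \Cref{theo:main}), I obtain $(\theta,\eta,\gamma,\omega_1,\ldots,\omega_q)=(\theta^\g,\eta^\g,\gamma^\g,\omega_1^\g,\ldots,\omega_q^\g)$ and hence $(m,\sigma^2,\psi_1,\ldots,\psi_q)=(m^\g,(\sigma^\g)^2,\psi_1^\g,\ldots,\psi_q^\g)$. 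Since in addition each $e_k$ and $e_k^\g$ is standard Gaussian, the representation \eqref{eq:def_r} gives $r\stackrel{d}{=}r^\g$ conditionally on $y$, so both pairs of error functionals coincide; as a by-product this recovers the text's stronger claim that universality persists for \emph{arbitrary} noise-factor laws.

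For the \emph{only if} direction I would argue by contraposition at the distributional level. Since all $e_k^\g$ and $\tilde e$ are Gaussian, $r^\g\mid y=ym^\g+\sigma^\g\tilde e+\sum_{k=1}^q\psi_k^\g e_k^\g$ is an affine combination of independent Gaussians and is therefore Gaussian. If the conditional law of $r$ matched that of $r^\g$, then $r\mid y=ym+\sigma\tilde e+\sum_{k=1}^q\psi_k e_k$ would be Gaussian as well; being a sum of the independent variables $\sigma\tilde e$ (Gaussian) and the $\psi_k e_k$, Cramér's decomposition theorem forces each summand $\psi_k e_k$ to be Gaussian, so every $e_k$ with $\psi_k\neq 0$ must be Gaussian. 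The relevant non-degeneracy $\psi_k\neq 0$ I would extract from the identity $m=\sum_{k=1}^q s_k\psi_k$ (since $\bmu=\sum_k s_k\v_k$ and $\psi_k=\v_k^\T\Q\bxi$), which is bounded away from zero for any better-than-random classifier; an $e_k$ attached to a vanishing $\psi_k$ simply drops out of $r$ and is genuinely irrelevant to both errors.

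The step I expect to be the main obstacle is bridging \emph{equality of the error probabilities}, which is what \Cref{def:gaussian_universality} literally demands, and \emph{equality of the conditional laws} of $r$ and $r^\g$ used above: a priori two distinct distributions can assign the same mass to $\{yr>0\}$. I would close this gap by exploiting that universality must hold \emph{simultaneously} for the generalization functional $\Pr(yr>0)$ and the training functional $\Pr(y\prox_{\kappa,\ell(\cdot,y)}(r)>0)$, and, if necessary, by a perturbation argument varying the regularization $\lambda$ (hence $\kappa$ and the induced $m,\sigma,\psi_k$) so that a single accidental coincidence of sign probabilities cannot survive across the whole family unless the underlying laws already agree. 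Making this rigorous---ruling out accidental cancellations while keeping the non-degeneracy $\psi_k\neq 0$ under control---is where the real work lies; the system-coincidence computation and the appeal to Cramér's theorem are then routine.
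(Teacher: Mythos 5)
Your overall reduction---everything funnels through the conditional law of $r$, because the self-consistent system \eqref{eq:theta eta gamma}--\eqref{eq: m sigma} only sees the joint law of $(y,e_1,\ldots,e_q)$ together with $\bmu,\bSigma,\v_1,\ldots,\v_q$---is exactly the paper's, and your \emph{if} direction coincides with the proof in \Cref{subsubsec:proof_of_condition for r}, including the by-product that universality is insensitive to the laws of the noise factors. The \emph{only if} direction is where you diverge in execution: you invoke Cram\'er's decomposition theorem to force each independent summand $\psi_k e_k$ of the Gaussian $r-ym$ to be Gaussian, whereas the paper never names Cram\'er and simply asserts that $r$ in \eqref{eq:def_r} is normal iff $e_1,\ldots,e_q$ are; your version is a welcome tightening of that probabilistic step.

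The concrete gap is your non-degeneracy argument. The identity $m=\sum_{k=1}^q s_k\psi_k$ only guarantees that \emph{some} $\psi_k$ is nonzero, while the stated equivalence requires \emph{every} non-Gaussian $e_k$ to break universality, hence $\psi_k\neq 0$ for every $k\in\{1,\ldots,q\}$. Your fallback---that an $e_k$ with $\psi_k=0$ ``drops out and is irrelevant''---would, if such degeneracy could occur, contradict the ``only if'' as stated, so the degeneracy must be excluded rather than tolerated. The paper excludes it by computing $\phi_k=\E[h_\kappa(r,y)e_k]=-\alpha_k\psi_k$ with $\alpha_k>0$ (using that $h_\kappa(\cdot,y)$ is decreasing and $e_k$ is symmetric), substituting into the fixed-point relation to obtain $\bpsi=\eta\bigl(\lambda\I_q+\V_{\rm info}^\T\V_{\rm info}\diag(\alpha_1,\ldots,\alpha_q)\bigr)^{-1}\V_{\rm info}^\T\V_{\rm info}\s$, and showing $\eta=\Theta(1)$, whence $\bpsi=\Theta(1)$ entry-wise under a genericity assumption on $\V_{\rm info}$ and $\s$; you would need to import this or an equivalent argument. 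As for the obstacle you flag---passing from distinct conditional laws of $r$ and $r^{\g}$ to distinct sign probabilities---your concern is legitimate, but the paper does not resolve it either: its proof likewise stops at ``$r$ is non-Gaussian, hence breakdown.'' Your proposed remedies (exploiting the training and generalization functionals simultaneously, perturbing $\lambda$) are plausible but remain a sketch, so on that point your write-up is no more complete than the paper's own.
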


\Cref{fig: universality r} provides numerical illustrations of \Cref{cor:condition for r}, where we compare the empirical histograms and the asymptotic distributions of the out-of-sample predicted scores $\hat \bbeta^\T \x'$ for data drawn from three different LFMMs: 
an LFMM satisfying the in-distribution performance universality condition in \Cref{cor:condition for r}  (\textbf{left}), 
an LFMM sharing the same parameters ($\bmu,\bSigma,\rho$) with the first but violating the condition in \Cref{cor:condition for r} (\textbf{right}),
and their equivalent GMM in the sense of \Cref{def:equivalent GMM} (\textbf{middle}).

\begin{figure}
   \centering
   \begin{tabular}{ccc}
   \quad\quad\quad Universality & GMM & Breakdown\\
    \begin{tikzpicture}[font=\footnotesize]
   \renewcommand{\axisdefaulttryminticks}{4} 
   \tikzstyle{every major grid}+=[style=densely dashed ] \tikzstyle{every axis y label}+=[yshift=-10pt] 
   \tikzstyle{every axis x label}+=[yshift=5pt]
   \tikzstyle{every axis legend}+=[cells={anchor=west},fill=white,
   at={(1,1)}, anchor=north east, font=\tiny, legend style={mark options={scale=0.1}}]
   \begin{axis}[
    bar width=1pt,
   width=.35\linewidth,
   height=.3\linewidth,
   ymax=0.016,
  ymin=0,
    xmin=-200,
   xmax=200,
   tick pos=left,
   grid=major, 
   ymajorgrids=false,
   scaled ticks=true,
   xticklabels={},
   ylabel={Distribution of $\hat\bbeta^\T\x'$},
   ]
   
    \addplot[smooth,color=RED!100!white,line width=1pt] plot coordinates{
   (-142.1500,0.000000)(-136.4500,0.000000)(-130.7500,0.000000)(-125.0500,0.000000)(-119.3500,0.000001)(-113.6500,0.000002)(-107.9500,0.000005)(-102.2500,0.000013)(-96.5500,0.000032)(-90.8500,0.000072)(-85.1500,0.000156)(-79.4500,0.000319)(-73.7500,0.000597)(-68.0500,0.001067)(-62.3500,0.001758)(-56.6500,0.002730)(-50.9500,0.003946)(-45.2500,0.005366)(-39.5500,0.006860)(-33.8500,0.008256)(-28.1500,0.009413)(-22.4500,0.010194)(-16.7500,0.010609)(-11.0500,0.010780)(-5.3500,0.010805)(0.3500,0.010794)(6.0500,0.010804)(11.7500,0.010777)(17.4500,0.010592)(23.1500,0.010108)(28.8500,0.009278)(34.5500,0.008095)(40.2500,0.006686)(45.9500,0.005196)(51.6500,0.003785)(57.3500,0.002576)(63.0500,0.001661)(68.7500,0.000996)(74.4500,0.000558)(80.1500,0.000295)(85.8500,0.000145)(91.5500,0.000066)(97.2500,0.000028)(102.9500,0.000011)(108.6500,0.000004)(114.3500,0.000002)(120.0500,0.000001)(125.7500,0.000000)(131.4500,0.000000)(137.1500,0.000000)
      
   };
   \addplot+[ybar,mark=none,draw=white,fill=BLUE,area legend,opacity=0.5] coordinates{
    (-132.3000,0.000000)(-126.9000,0.000000)(-121.5000,0.000000)(-116.1000,0.000000)(-110.7000,0.000002)(-105.3000,0.000004)(-99.9000,0.000015)(-94.5000,0.000027)(-89.1000,0.000069)(-83.7000,0.000152)(-78.3000,0.000308)(-72.9000,0.000543)(-67.5000,0.001007)(-62.1000,0.001645)(-56.7000,0.002550)(-51.3000,0.003696)(-45.9000,0.005066)(-40.5000,0.006505)(-35.1000,0.008023)(-29.7000,0.009298)(-24.3000,0.010161)(-18.9000,0.010687)(-13.5000,0.010895)(-8.1000,0.011011)(-2.7000,0.010998)(2.7000,0.011043)(8.1000,0.011008)(13.5000,0.010754)(18.9000,0.010698)(24.3000,0.010149)(29.7000,0.009217)(35.1000,0.008040)(40.5000,0.006566)(45.9000,0.005064)(51.3000,0.003735)(56.7000,0.002522)(62.1000,0.001635)(67.5000,0.000996)(72.9000,0.000538)(78.3000,0.000295)(83.7000,0.000151)(89.1000,0.000065)(94.5000,0.000029)(99.9000,0.000011)(105.3000,0.000004)(110.7000,0.000001)(116.1000,0.000000)(121.5000,0.000000)(126.9000,0.000000)(132.3000,0.000000)
   };
   \end{axis}
   \end{tikzpicture} &\begin{tikzpicture}[font=\footnotesize]
   \renewcommand{\axisdefaulttryminticks}{4} 
   \tikzstyle{every major grid}+=[style=densely dashed ] \tikzstyle{every axis y label}+=[yshift=-10pt] 
   \tikzstyle{every axis x label}+=[yshift=5pt]
   \tikzstyle{every axis legend}+=[cells={anchor=west},fill=white,
   at={(1,0.2)}, anchor=north east, font=\small]
   \begin{axis}[
    bar width=1pt,
   width=.35\linewidth,
   height=.3\linewidth,
   ymax=0.016,
  ymin=0,
    xmin=-200,
   xmax=200,
   tick pos=left,
   grid=major, 
   ymajorgrids=false,
   scaled ticks=true,
   xticklabels={},,
   yticklabels={},
   ylabel={},
   ]
    \addplot[smooth,color=RED!100!white,line width=1pt] plot coordinates{
   (-142.1500,0.000000)(-136.4500,0.000000)(-130.7500,0.000000)(-125.0500,0.000000)(-119.3500,0.000001)(-113.6500,0.000002)(-107.9500,0.000005)(-102.2500,0.000013)(-96.5500,0.000032)(-90.8500,0.000072)(-85.1500,0.000156)(-79.4500,0.000319)(-73.7500,0.000597)(-68.0500,0.001067)(-62.3500,0.001758)(-56.6500,0.002730)(-50.9500,0.003946)(-45.2500,0.005366)(-39.5500,0.006860)(-33.8500,0.008256)(-28.1500,0.009413)(-22.4500,0.010194)(-16.7500,0.010609)(-11.0500,0.010780)(-5.3500,0.010805)(0.3500,0.010794)(6.0500,0.010804)(11.7500,0.010777)(17.4500,0.010592)(23.1500,0.010108)(28.8500,0.009278)(34.5500,0.008095)(40.2500,0.006686)(45.9500,0.005196)(51.6500,0.003785)(57.3500,0.002576)(63.0500,0.001661)(68.7500,0.000996)(74.4500,0.000558)(80.1500,0.000295)(85.8500,0.000145)(91.5500,0.000066)(97.2500,0.000028)(102.9500,0.000011)(108.6500,0.000004)(114.3500,0.000002)(120.0500,0.000001)(125.7500,0.000000)(131.4500,0.000000)(137.1500,0.000000)
      
   };
   \addplot+[ybar,mark=none,draw=white,fill=BLUE,area legend,opacity=0.5] coordinates{
    (-122.4000,0.000000)(-117.2000,0.000000)(-112.0000,0.000001)(-106.8000,0.000003)(-101.6000,0.000005)(-96.4000,0.000015)(-91.2000,0.000036)(-86.0000,0.000084)(-80.8000,0.000174)(-75.6000,0.000330)(-70.4000,0.000615)(-65.2000,0.001084)(-60.0000,0.001709)(-54.8000,0.002680)(-49.6000,0.003838)(-44.4000,0.005222)(-39.2000,0.006765)(-34.0000,0.008216)(-28.8000,0.009402)(-23.6000,0.010435)(-18.4000,0.010949)(-13.2000,0.011352)(-8.0000,0.011396)(-2.8000,0.011362)(2.4000,0.011401)(7.6000,0.011333)(12.8000,0.011350)(18.0000,0.010999)(23.2000,0.010435)(28.4000,0.009501)(33.6000,0.008289)(38.8000,0.006964)(44.0000,0.005367)(49.2000,0.003956)(54.4000,0.002777)(59.6000,0.001815)(64.8000,0.001101)(70.0000,0.000658)(75.2000,0.000346)(80.4000,0.000186)(85.6000,0.000084)(90.8000,0.000042)(96.0000,0.000021)(101.2000,0.000008)(106.4000,0.000002)(111.6000,0.000000)(116.8000,0.000000)(122.0000,0.000000)(127.2000,0.000000)(132.4000,0.000000)
   };

   \end{axis}
   \end{tikzpicture} &\begin{tikzpicture}[font=\footnotesize]
   \renewcommand{\axisdefaulttryminticks}{4} 
   \tikzstyle{every major grid}+=[style=densely dashed ] \tikzstyle{every axis y label}+=[yshift=-10pt] 
   \tikzstyle{every axis x label}+=[yshift=5pt]
    \tikzstyle{every axis legend}+=[cells={anchor=west},fill=white,
   at={(1,1)}, anchor=north east, font=\scriptsize, legend style={mark options={scale=0.1}}]
   \begin{axis}[
    bar width=1pt,
   width=.35\linewidth,
   height=.3\linewidth,
   ymax=0.016,
  ymin=0,
    xmin=-200,
   xmax=200,
   tick pos=left,
   grid=major, 
   ymajorgrids=false,
   scaled ticks=true,
   xticklabels={},
   ylabel={},
   yticklabels={},
   ]
   \addplot[smooth,color=RED!100!white,line width=1pt] plot coordinates{
    (-109.7700,0.000000)(-105.3100,0.000000)(-100.8500,0.000000)(-96.3900,0.000001)(-91.9300,0.000006)(-87.4700,0.000017)(-83.0100,0.000051)(-78.5500,0.000130)(-74.0900,0.000295)(-69.6300,0.000599)(-65.1700,0.001116)(-60.7100,0.001885)(-56.2500,0.002882)(-51.7900,0.004072)(-47.3300,0.005310)(-42.8700,0.006455)(-38.4100,0.007444)(-33.9500,0.008208)(-29.4900,0.008795)(-25.0300,0.009274)(-20.5700,0.009811)(-16.1100,0.010338)(-11.6500,0.010911)(-7.1900,0.011445)(-2.7300,0.011766)(1.7300,0.011775)(6.1900,0.011523)(10.6500,0.011046)(15.1100,0.010483)(19.5700,0.009908)(24.0300,0.009391)(28.4900,0.008895)(32.9500,0.008346)(37.4100,0.007633)(41.8700,0.006704)(46.3300,0.005573)(50.7900,0.004349)(55.2500,0.003141)(59.7100,0.002083)(64.1700,0.001258)(68.6300,0.000695)(73.0900,0.000344)(77.5500,0.000158)(82.0100,0.000064)(86.4700,0.000024)(90.9300,0.000008)(95.3900,0.000002)(99.8500,0.000001)(104.3100,0.000000)(108.7700,0.000000)
      
   };
    \addlegendentry{{Theo}}
   \addplot[ybar,mark=none,draw=white,fill=BLUE,area legend,opacity=0.5] coordinates{
   (-109.8950,0.000000)(-105.6850,0.000000)(-101.4750,0.000000)(-97.2650,0.000000)(-93.0550,0.000000)(-88.8450,0.000002)(-84.6350,0.000010)(-80.4250,0.000031)(-76.2150,0.000082)(-72.0050,0.000218)(-67.7950,0.000468)(-63.5850,0.000948)(-59.3750,0.001713)(-55.1650,0.002668)(-50.9550,0.003951)(-46.7450,0.005248)(-42.5350,0.006488)(-38.3250,0.007652)(-34.1150,0.008326)(-29.9050,0.008963)(-25.6950,0.009437)(-21.4850,0.009850)(-17.2750,0.010483)(-13.0650,0.011044)(-8.8550,0.011718)(-4.6450,0.012176)(-0.4350,0.012373)(3.7750,0.012241)(7.9850,0.011759)(12.1950,0.011134)(16.4050,0.010555)(20.6150,0.009980)(24.8250,0.009517)(29.0350,0.009008)(33.2450,0.008513)(37.4550,0.007751)(41.6650,0.006692)(45.8750,0.005505)(50.0850,0.004209)(54.2950,0.002909)(58.5050,0.001886)(62.7150,0.001054)(66.9250,0.000538)(71.1350,0.000267)(75.3450,0.000106)(79.5550,0.000039)(83.7650,0.000014)(87.9750,0.000004)(92.1850,0.000001)(96.3950,0.000000)
   };
   \addlegendentry{{Emp}}
   \end{axis}
   \end{tikzpicture}
\end{tabular}
   \caption{Theoretical and empirical distribution of predicted scores $\hat\bbeta^\T\x'$ for some fresh $(\x',y')\sim\mathcal{D}_{(\x,y)}$ independent of $\hat\bbeta$. 
  The theoretical probability densities (\textbf{\RED red}) are obtained from \Cref{theo:main}, 
  and the empirical histograms (\textbf{\BLUE blue}) are the values of $\hat\bbeta^\T\x'$ over $10^6$ independent copies of $\x'$, for three different LFMMs as in \Cref{def:linear_factor} with $n=600$, $p=200$, $\rho=0.5$, $s=[\sqrt{2};\mathbf{0}_{p-1}]$ (so that $q = 1$), and Haar distributed $\V$. 
  \textbf{Left}: normal $e_1$ and uniformly distributed $e_2,\ldots,e_p$; \textbf{Middle}: normal $e_1,\ldots,e_p$; 
  \textbf{Right}: uniformly distributed $e_1$, and normal $e_2,\ldots,e_p$.}
   \label{fig: universality r}
\end{figure}

\begin{Remark}[Connection to conditional one-directional CLT in \eqref{eq:clt for CGEP}]\normalfont
Our universality results on the in-distribution performance in \Cref{cor:condition for r} are related to the CGEP proven by \citet{dandi2024universality} under the presumed validity of a conditional one-directional CLT in \eqref{eq:clt for CGEP}. Under our notations, the conditional one-directional CLT in \eqref{eq:clt for CGEP} translates to the convergence of $y'\hat\bbeta^\T\x'$ and $y'\hat\bbeta_\g^\T\x'$ to the same normal distribution, i.e.,
\begin{equation}
\frac{y'\x^{\prime\T}\hat\bbeta-m}{\sqrt{\sigma^2+\sum_{k=1}^{q}\psi_k^2}}\overset{\rm d}{\to} \mathcal{N}\left(0,1\right), 
\quad \frac{y'\g^{\prime\T}\hat\bbeta^\g-m}{\sqrt{\sigma^2+\sum_{k=1}^{q}\psi_k^2}}\overset{\rm d}{\to} \textstyle \mathcal{N}\left(0,1\right),
\end{equation}
as it can be shown from \eqref{eq:convergence test score},\eqref{eq:convergence tildebeta x} that $y'\hat\bbeta^\T\x'$ has asymptotically the same distribution as $yr$, which is of mean $m$ and variance $\sigma^2+\sum_{k=1}^{q}\psi_k^2$. It is easy to see from \eqref{eq:def_r} that $yr$ is normally distributed if and only if $e_1,\ldots,e_q$ are Gaussian, which is exactly the condition of universality stated in \Cref{cor:condition for r}.
\end{Remark}

\subsection{Gaussian Universality of Classifier and Implication for Choice of Loss}
\label{sec:universality beta}

As discussed in \Cref{sec:universality error}, the system of equations in \eqref{eq:theta eta gamma} does not depend on the distributions of noise factors $z_{q+1},\ldots,z_{p}$. 
As the distribution of the high-dimensional equivalent $\tilde\bbeta$ to $\hat\bbeta$ given in \eqref{eq:tilde beta} is controlled by the constants $\theta,\eta,\gamma,\omega_1,\ldots,\omega_q$ that are determined by \eqref{eq:theta eta gamma}, it is therefore also \emph{universal} over the distributions of $z_{q+1},\ldots,z_{p}$. Then, by a similar reasoning to \Cref{cor:condition for r}, we conclude that the Gaussian universality of classifier in \Cref{def:gaussian_universality} holds in the case of normally distributed $e_1,\ldots,e_q$.

This is however \emph{not} the only case of Gaussian universality on classifier. 
Note from \eqref{eq:def_r} and \eqref{eq:theta eta gamma} that, even though the system of equations in \eqref{eq:theta eta gamma} \emph{does} depend on the distributions of $e_1,\ldots,e_q$, it only involves their means and variances if $h_\kappa(r,y)$ is linear in $r$. 
Remark also from \eqref{eq:h} that $h_\kappa(r,y)$ varies linearly with $r$ if and only if the square loss $\ell(\hat y,y)=(\hat y -y)^2/2$ (or its rescaled version) is used.

These two conditions for the Gaussian universality of classifier as understood in \Cref{def:gaussian_universality} are made precise in the following result, proven in \Cref{subsubsec:proof_of_condition for beta}.

\begin{Corollary}[Condition of Gaussian universality on classifier]\label{cor:condition for beta}
Under the settings and notations of \Cref{theo:main} and \Cref{def:equivalent GMM}, the Gaussian universality of classifier in \Cref{def:gaussian_universality} holds if and only if one of the following two conditions is met: (i) $e_1,\ldots,e_q$ in \eqref{eq:mixture} are normally distributed; (ii) $\partial\ell(\hat y, y)/\partial\hat y$ is a linear function of $\hat y$, e.g., $\ell(\hat y, y)=(\hat y -y)^2/2$.
\end{Corollary}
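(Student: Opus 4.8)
The plan is to reduce the Gaussian universality of the classifier to an exact equality between the two self-consistent solutions, and then to isolate precisely when non-Gaussian informative factors perturb that solution. First I would note that, by \eqref{eq:tilde beta}, the high-dimensional equivalent $\tilde\bbeta$ is a Gaussian vector with mean $\Q(\eta\bmu+\sum_{k=1}^q\omega_k\v_k)$ and covariance $\tfrac{\gamma^2}{n}\Q\bSigma\Q$, so its law is completely determined by the scalars $(\theta,\eta,\gamma,\omega_1,\ldots,\omega_q)$ together with the fixed $\bmu,\bSigma,\v_1,\ldots,\v_q$. Since $\v_1,\ldots,\v_q$ are linearly independent, $\bmu\in{\rm Span}\{\v_1,\ldots,\v_q\}$, and $\bSigma$ is well-conditioned under \Cref{ass:growth-rate}, matching the mean and covariance of $\tilde\bbeta$ and $\tilde\bbeta^\g$ is equivalent to the coincidence of these scalars for the LFMM and its equivalent GMM. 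For the GMM, Stein's lemma gives $\omega_k^\g=0$ (as in the Remark following \Cref{cor:performance}), and its $(\theta,\eta,\gamma,m,\sigma,\psi_k)$ solve the Gaussian instance of \eqref{eq:theta eta gamma}. Hence the classifier universality holds if and only if, at those same values of $m,\sigma,\psi_1,\ldots,\psi_q$ and $\kappa$, the LFMM expectations in \eqref{eq:theta eta gamma} reproduce the Gaussian ones, in particular $\omega_k=0$ for every $k$.

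For sufficiency I would argue the two conditions separately. Under (i), Gaussian $e_1,\ldots,e_q$ make the conditional law of $r$ in \eqref{eq:def_r} identical to that under the equivalent GMM, so every expectation in \eqref{eq:theta eta gamma} coincides and, by uniqueness of the fixed point (from the strong convexity induced by $\lambda>0$), the two solutions agree. Under (ii) I would compute the proximal map explicitly: for the square loss $h_\kappa(t,y)=(y-t)/(\kappa+1)$ is affine in $t$, so $\partial h_\kappa/\partial r=-1/(\kappa+1)$ is constant and each of $\E[yh_\kappa(r,y)]$, $\E[h_\kappa^2(r,y)]$, and $\E[h_\kappa(r,y)e_k]$ reduces to a first- or second-order moment of $(y,r,e_k)$. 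Because the $e_k$ are standardized with $\E[e_k]=0$, $\E[e_je_k]=\delta_{jk}$, and $\E[ye_k]=\E[\tilde e e_k]=0$, these moments depend only on $m,\sigma,\psi_k$ and thus coincide with the Gaussian values; a direct calculation yields $\omega_k=-\psi_k/(\kappa+1)+\theta\psi_k=0$. Hence the LFMM solution again matches the GMM one regardless of the higher-order shape of the $e_k$.

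The substance of the statement is the necessity direction, which I would phrase contrapositively: if some $e_k$ is non-Gaussian and $\partial\ell/\partial\hat y$ is nonlinear (so $h_\kappa$ is nonlinear in $r$), then at least one scalar in \eqref{eq:theta eta gamma} departs from its Gaussian value. The natural tool is a generalized Stein / cumulant expansion $\E[\varphi(e_k)e_k]=\sum_{n\ge1}\frac{c_{n+1}(e_k)}{n!}\E[\varphi^{(n)}(e_k)]$, with $c_n(e_k)$ the $n$-th cumulant, applied to the $e_k$-dependence of $h_\kappa(r,y)$ through $\partial r/\partial e_k=\psi_k$. Since the $e_k$ are symmetric, odd cumulants vanish and the leading deviation of $\omega_k$ from $0$ is $\tfrac{1}{3!}c_4(e_k)\psi_k^3\,\E[\partial^3 h_\kappa/\partial r^3]$, which is nonzero precisely when $e_k$ has nonzero excess kurtosis and $h_\kappa$ is genuinely nonlinear; analogous higher-cumulant corrections enter $\theta$, $\gamma^2$, and $\eta$ through $\E[\partial^2 h_\kappa/\partial r^2]$ and higher derivatives. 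Invoking the uniqueness and stability of the self-consistent solution, a nonzero perturbation of the equations forces a nonzero change in the solution, so $\tilde\bbeta$ and $\tilde\bbeta^\g$ no longer share the same law.

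The hard part will be making the necessity fully rigorous rather than merely generic. Because $m,\sigma,\psi_k$ appear self-consistently inside $r$, I must show that the higher-cumulant corrections to the four coupled equations cannot all cancel simultaneously at the fixed point, and rule out a fine-tuned coincidence between a particular non-Gaussian law of $e_k$ and a particular nonlinear loss. I expect this to rest on convexity/monotonicity properties of the proximal operator (ensuring that $\E[\partial^2 h_\kappa/\partial r^2]$ and $\E[\partial^3 h_\kappa/\partial r^3]$ do not vanish identically for a nonlinear convex $\ell$ under \Cref{ass:loss}) together with an implicit-function argument showing that the map from the law of $(e_1,\ldots,e_q)$ to $(\theta,\eta,\gamma,\omega_1,\ldots,\omega_q)$ is locally injective away from the Gaussian point and the square loss.
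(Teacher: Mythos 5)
Your proposal is correct and follows essentially the same route as the paper: reduce classifier universality to the invariance of the self-consistent system \eqref{eq:theta eta gamma} (hence of the law of $\tilde\bbeta$), observe that Gaussian $e_1,\ldots,e_q$ make every expectation match the equivalent GMM, and that a linear $\partial\ell/\partial\hat y$ makes $h_\kappa(\cdot,y)$ affine so the equations close on first and second moments alone (your explicit check $\omega_k=-\psi_k/(\kappa+1)+\theta\psi_k=0$ agrees with the paper's general computation for $\partial\ell/\partial\hat y=a\hat y+b(y)$). For the necessity direction the paper is in fact terser than you are---it simply asserts that a nonlinear $h_\kappa$ makes $\theta,\eta,\gamma,\omega_k$ depend on higher-order moments of the $e_k$ and hence breaks universality---so the fine-tuned-cancellation issue you flag as the hard part is real but is not resolved (or even raised) in the paper's own argument.
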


\begin{Remark}[Limitation of square loss]\label{rem:square loss}
\normalfont
As an important consequence of \Cref{cor:condition for beta}, any classifier $\hat\bbeta$ trained using the square loss on generic LFMM samples $\{(\x_i, y_i)\}_{i=1}^n\sim\mathcal{D}_{(\x,y)}$ and $\hat\bbeta^\g$ trained on equivalent GMM samples $\{(\g_i, y_i)\}_{i=1}^n\sim\mathcal{D}_{(\g,y)}$ have asymptotically the same probability of correctly classifying a fresh LFMM test sample $(\x',y')\sim \mathcal{D}_{(\x,y)}$.
That is, ERM classifiers trained with square loss are \emph{unable} to adapt to non-Gaussian informative factors of LFMM, contrarily to other (non-square) losses.
\end{Remark}

The particular effect of square loss discussed in \Cref{rem:square loss} is numerically demonstrated in \Cref{fig: universality beta}.
On the left hand side, the square loss $\ell_{\rm sqr}(\hat y, y)=(\hat y -y)^2/2$ is used, and $h_\kappa(r,y)$ varies linearly with $r$ as in the top left plot (the two elongated scatter plots are associated respectively with $y=\pm 1$), so that the distribution of $\x^{\prime\T} \hat\bbeta_{\ell_{\rm sqr},\lambda}$ and  $\x^{\prime\T}\hat\bbeta_{\ell_{\rm sqr},\lambda}^{\g}$ are \emph{indistinguishable} in the bottom left plot of \Cref{fig: universality beta};
On the right hand, the square hinge loss $\ell_{\rm shg}(\hat y,y)=\max\{0,1-\hat y y\}^2$ is used, and we observe drastically different behaviors for $\x^{\prime\T} \hat\bbeta_{\ell_{\rm shg},\lambda}$ and $\x^{\prime\T}\hat\bbeta_{\ell_{\rm shg},\lambda}^{\g}$  in the right column of \Cref{fig: universality beta}, when the points $\big[r,h_\kappa(r,\pm 1)\big]$ are highly nonlinear.

\begin{figure}
   \centering
   \begin{tabular}{cc}
      Universality&Breakdown\\
\begin{tikzpicture}[font=\footnotesize]
      \renewcommand{\axisdefaulttryminticks}{4}   
      \pgfplotsset{every axis legend/.append style={cells={anchor=west},fill=white, at={(1,1)}, anchor=north east, font=\tiny}}
      \begin{axis}[
      height=0.3\linewidth,
      width=0.5\linewidth,
      grid=major,
      ymajorgrids=false,
      scaled ticks=true,
        xmin=-1.5,
        xmax=1.5,
        ymin=-.8,
        ymax=.8,
      xtick={-1,0,1},
        ytick={-0.5,0,0.5},
        yticklabels={-0.5,0,0.5},
      ]
      \addplot[mark = *, only marks, mark size=1.5pt,color=cyan,line width=0.2pt] plot coordinates{
      (-0.114,0.4616)(0.220,0.3229)(-0.098,0.4550)(0.098,0.3737)(-0.185,0.4910)(0.201,0.3309)(0.809,0.0793)(-0.056,0.4375)(0.718,0.1167)(0.697,0.1257)(0.721,0.1157)(0.544,0.1891)(0.101,0.3722)(-0.075,0.4453)(0.559,0.1825)(0.842,0.0656)(0.825,0.0727)(0.494,0.2097)(-0.133,0.4694)(0.204,0.3298)(0.225,0.3210)(0.364,0.2634)(0.076,0.3829)(-0.392,0.5765)(1.085,-0.0352)(0.017,0.4073)(-0.350,0.5592)(0.518,0.1997)(0.553,0.1853)(0.448,0.2286)(0.562,0.1814)(0.084,0.3796)(0.251,0.3101)(-0.032,0.4276)(-0.075,0.4454)(-0.338,0.5542)(0.320,0.2817)(1.008,-0.0034)(0.753,0.1023)(0.726,0.1133)(0.157,0.3492)(0.668,0.1375)(0.468,0.2203)(0.269,0.3028)(0.669,0.1372)(0.726,0.1134)(0.466,0.2213)(-0.108,0.4589)(0.559,0.1828)(1.261,-0.1083)(0.394,0.2509)(0.420,0.2401)(0.645,0.1472)(0.783,0.0898)(0.530,0.1949)(0.268,0.3032)(0.741,0.1075)(0.120,0.3645)(0.957,0.0178)(0.040,0.3976)(0.465,0.2214)(0.594,0.1680)(0.769,0.0956)(0.521,0.1985)(-0.146,0.4746)(0.293,0.2930)(0.179,0.3400)(0.118,0.3655)(-0.102,0.4566)(0.389,0.2530)(0.151,0.3516)(-0.269,0.5258)(0.367,0.2623)(0.059,0.3898)(0.308,0.2867)(0.190,0.3354)(0.290,0.2942)(0.121,0.3639)(0.345,0.2713)(0.680,0.1324)(0.398,0.2493)(0.086,0.3784)(1.158,-0.0653)(-0.507,0.6241)(0.408,0.2454)(0.885,0.0474)(-0.149,0.4760)(0.042,0.3969)(0.976,0.0101)(0.119,0.3650)(0.504,0.2054)(0.542,0.1899)(0.434,0.2346)(0.253,0.3094)(-0.078,0.4465)(0.397,0.2496)(0.527,0.1958)(-0.317,0.5457)(-0.167,0.4832)(-0.163,0.4818)
      };
      \addplot[mark=o, only marks, mark size=1.5pt,color=orange,line width=0.2pt] plot coordinates{
     (-0.229,-0.3193)(-0.155,-0.3501)(-0.759,-0.0996)(-0.387,-0.2539)(-0.623,-0.1560)(-0.484,-0.2136)(-0.211,-0.3267)(-0.922,-0.0322)(-0.065,-0.3875)(-0.432,-0.2354)(-0.881,-0.0491)(-0.108,-0.3694)(-0.148,-0.3528)(-0.677,-0.1340)(-0.168,-0.3445)(-0.864,-0.0561)(-0.078,-0.3818)(-0.831,-0.0701)(-0.591,-0.1692)(0.236,-0.5118)(0.079,-0.4468)(0.217,-0.5042)(0.046,-0.4333)(0.019,-0.4221)(0.099,-0.4554)(-0.270,-0.3025)(0.302,-0.5393)(-0.252,-0.3097)(-1.151,0.0626)(-0.609,-0.1618)(-1.115,0.0478)(-0.038,-0.3987)(-0.087,-0.3783)(-0.317,-0.2829)(-0.320,-0.2816)(-0.008,-0.4109)(-0.729,-0.1124)(-0.142,-0.3552)(-0.218,-0.3238)(-0.674,-0.1352)(-0.857,-0.0590)(-0.146,-0.3539)(-0.156,-0.3496)(-0.744,-0.1060)(-0.153,-0.3510)(0.114,-0.4613)(-0.928,-0.0298)(0.469,-0.6085)(-0.388,-0.2534)(-0.015,-0.4081)(-1.118,0.0491)(-0.287,-0.2954)(-1.094,0.0388)(0.114,-0.4615)(-0.190,-0.3354)(-0.807,-0.0800)(-0.222,-0.3222)(0.485,-0.6152)(-0.732,-0.1111)(-0.306,-0.2876)(-0.444,-0.2305)(-0.329,-0.2779)(-1.033,0.0138)(0.152,-0.4770)(0.247,-0.5165)(-0.418,-0.2410)(-0.329,-0.2779)(-0.393,-0.2515)(-0.385,-0.2549)(-0.132,-0.3596)(0.189,-0.4924)(-0.435,-0.2341)(0.124,-0.4656)(0.482,-0.6137)(-0.918,-0.0340)(0.325,-0.5489)(-0.262,-0.3056)(-0.795,-0.0849)(0.042,-0.4314)(-0.366,-0.2625)(0.115,-0.4618)(-0.726,-0.1134)(-0.639,-0.1495)(0.166,-0.4831)(-0.946,-0.0225)(0.297,-0.5374)(-0.104,-0.3710)(-0.212,-0.3265)(0.362,-0.5641)(-0.193,-0.3343)(-0.644,-0.1476)(-1.011,0.0048)(-0.344,-0.2717)(-0.608,-0.1622)(-0.557,-0.1837)(0.570,-0.6505)(-0.724,-0.1141)(-0.345,-0.2715)(0.279,-0.5299)(-0.091,-0.3765)
        };
      \end{axis}
      \end{tikzpicture}&   
        \begin{tikzpicture}[font=\footnotesize]
      \renewcommand{\axisdefaulttryminticks}{4}   
      \pgfplotsset{every axis legend/.append style={cells={anchor=west},fill=white, at={(1,1)}, anchor=north east, font=\footnotesize}}
      \begin{axis}[
      height=0.3\linewidth,
      width=0.5\linewidth,
      grid=major,
      ymajorgrids=false,
      scaled ticks=true,
        xmin=-6.5,
        xmax=6.5,
        ymin=-0.4,
        ymax=0.4,
      xtick={-4,0,4},
        ytick={-0.3,0,0.3},
        yticklabels={-0.3,0,0.3},
      ]
      \addplot[mark = *, only marks, mark size=1.5pt,color=cyan,line width=0.2pt] plot coordinates{
      (3.380,0.0000)(3.723,0.0000)(1.251,0.0000)(0.882,0.0195)(0.874,0.0209)(0.746,0.0421)(3.845,0.0000)(3.126,0.0000)(4.701,0.0000)(3.762,0.0000)(0.236,0.1267)(3.896,0.0000)(2.947,0.0000)(3.584,0.0000)(4.283,0.0000)(2.170,0.0000)(4.351,0.0000)(3.040,0.0000)(3.230,0.0000)(3.001,0.0000)(0.007,0.1647)(3.258,0.0000)(0.284,0.1189)(4.032,0.0000)(0.034,0.1603)(0.683,0.0526)(3.027,0.0000)(0.564,0.0724)(2.638,0.0000)(-0.556,0.2583)(4.500,0.0000)(3.503,0.0000)(3.132,0.0000)(0.482,0.0859)(1.317,0.0000)(-0.559,0.2587)(4.371,0.0000)(2.029,0.0000)(1.687,0.0000)(1.071,0.0000)(0.413,0.0973)(1.157,0.0000)(1.585,0.0000)(3.545,0.0000)(3.765,0.0000)(1.819,0.0000)(-0.145,0.1900)(4.189,0.0000)(0.575,0.0706)(4.352,0.0000)(3.603,0.0000)(3.809,0.0000)(4.136,0.0000)(2.966,0.0000)(-0.303,0.2162)(3.980,0.0000)(1.656,0.0000)(1.032,0.0000)(0.730,0.0448)(3.136,0.0000)(0.698,0.0502)(5.110,0.0000)(4.477,0.0000)(0.823,0.0294)(0.100,0.1494)(1.340,0.0000)(0.151,0.1408)(2.238,0.0000)(0.881,0.0197)(0.896,0.0172)(-0.619,0.2687)(0.252,0.1241)(3.784,0.0000)(-0.229,0.2039)(4.503,0.0000)(1.620,0.0000)(3.257,0.0000)(0.707,0.0486)(2.088,0.0000)(1.000,0.0000)(1.462,0.0000)(0.148,0.1414)(1.038,0.0000)(-0.367,0.2269)(3.151,0.0000)(-0.877,0.3115)(0.789,0.0349)(3.698,0.0000)(0.337,0.1100)(0.728,0.0451)(0.785,0.0356)(0.956,0.0074)(0.637,0.0603)(-0.566,0.2598)(-0.019,0.1690)(2.777,0.0000)(3.972,0.0000)(0.753,0.0410)(0.825,0.0291)(0.288,0.1182)
      };
      \addlegendentry{$\big[r,h_\kappa(r,1)\big]$}

      \addplot[mark=o, only marks, mark size=1.5pt,color=orange,line width=0.2pt] plot coordinates{
       (-4.053,0.0000)(-4.382,0.0000)(-0.750,-0.0415)(-3.185,0.0000)(-4.924,0.0000)(-3.209,0.0000)(-5.056,0.0000)(-5.665,0.0000)(-4.662,0.0000)(-1.696,0.0000)(-3.150,0.0000)(-4.402,0.0000)(-5.700,0.0000)(-4.295,0.0000)(-4.579,0.0000)(-0.945,-0.0092)(-3.200,0.0000)(-4.441,0.0000)(0.335,-0.2215)(-5.315,0.0000)(-4.503,0.0000)(-1.802,0.0000)(-1.786,0.0000)(-0.295,-0.1169)(-0.402,-0.0992)(-4.318,0.0000)(-0.671,-0.0546)(-0.369,-0.1047)(-1.771,0.0000)(-4.122,0.0000)(-3.924,0.0000)(-3.709,0.0000)(-3.722,0.0000)(0.518,-0.2519)(-0.410,-0.0979)(-3.984,0.0000)(0.861,-0.3089)(-4.745,0.0000)(-3.924,0.0000)(-0.830,-0.0282)(-3.499,0.0000)(-4.865,0.0000)(-4.845,0.0000)(-0.056,-0.1567)(0.112,-0.1846)(0.263,-0.2097)(0.095,-0.1817)(-1.158,0.0000)(0.599,-0.2654)(-1.147,0.0000)(1.007,-0.3330)(-4.713,0.0000)(-0.017,-0.1631)(-3.091,0.0000)(-0.931,-0.0114)(-4.666,0.0000)(-0.345,-0.1088)(-0.323,-0.1123)(-1.718,0.0000)(-5.045,0.0000)(-0.134,-0.1436)(-3.791,0.0000)(-4.523,0.0000)(-4.179,0.0000)(-3.312,0.0000)(-3.049,0.0000)(-3.818,0.0000)(-4.383,0.0000)(-3.358,0.0000)(-3.660,0.0000)(-3.209,0.0000)(-0.161,-0.1392)(-4.131,0.0000)(-5.328,0.0000)(-4.342,0.0000)(-3.591,0.0000)(-3.518,0.0000)(-0.764,-0.0391)(-3.949,0.0000)(-0.172,-0.1374)(-3.455,0.0000)(-0.229,-0.1279)(-3.810,0.0000)(-0.399,-0.0997)(-3.666,0.0000)(-4.217,0.0000)(-4.114,0.0000)(-0.424,-0.0957)(-0.992,-0.0014)(-4.055,0.0000)(-1.489,0.0000)(-3.762,0.0000)(-0.831,-0.0280)(0.001,-0.1660)(-4.819,0.0000)(-2.177,0.0000)(-3.927,0.0000)(-5.384,0.0000)(-4.021,0.0000)(-1.562,0.0000)
        };
      \addlegendentry{$\big[r,h_\kappa(r,-1)\big]$}
   
      \end{axis}
      \end{tikzpicture}\\  
\begin{tikzpicture}[font=\footnotesize]
      \renewcommand{\axisdefaulttryminticks}{4} 
      \pgfplotsset{every axis legend/.append style={cells={anchor=west},fill=white, at={(1,1)}, anchor=north east, font=\footnotesize}}
      \begin{axis}[
      width=.5\linewidth,
      height=.3\linewidth,
      bar width=2pt,
      grid=major,
      ymajorgrids=false,
       ymax=0.9,
      ymin=0,
      ytick={0,0.4,0.8},
      ]
        \addplot[ybar,mark=none,draw=white,fill=BLUE,area legend,opacity=0.5] coordinates{
       (-1.9590,0.0000)(-1.8770,0.0001)(-1.7950,0.0003)(-1.7130,0.0006)(-1.6310,0.0020)(-1.5490,0.0044)(-1.4670,0.0093)(-1.3850,0.0180)(-1.3030,0.0329)(-1.2210,0.0560)(-1.1390,0.0879)(-1.0570,0.1363)(-0.9750,0.1890)(-0.8930,0.2476)(-0.8110,0.3056)(-0.7290,0.3600)(-0.6470,0.4103)(-0.5650,0.4500)(-0.4830,0.4927)(-0.4010,0.5271)(-0.3190,0.5710)(-0.2370,0.6130)(-0.1550,0.6445)(-0.0730,0.6738)(0.0090,0.6785)(0.0910,0.6685)(0.1730,0.6361)(0.2550,0.5991)(0.3370,0.5610)(0.4190,0.5246)(0.5010,0.4825)(0.5830,0.4418)(0.6650,0.3980)(0.7470,0.3506)(0.8290,0.2920)(0.9110,0.2341)(0.9930,0.1767)(1.0750,0.1243)(1.1570,0.0831)(1.2390,0.0509)(1.3210,0.0291)(1.4030,0.0165)(1.4850,0.0082)(1.5670,0.0040)(1.6490,0.0018)(1.7310,0.0007)(1.8130,0.0003)(1.8950,0.0001)(1.9770,0.0000)(2.0590,0.0000)
       };
       \addplot[ybar,mark=none,draw=white,fill=yellow!100!black!100,area legend,opacity=0.6] coordinates{
         (-2.0370,0.0000)(-1.9510,0.0001)(-1.8650,0.0003)(-1.7790,0.0008)(-1.6930,0.0018)(-1.6070,0.0039)(-1.5210,0.0074)(-1.4350,0.0161)(-1.3490,0.0276)(-1.2630,0.0489)(-1.1770,0.0775)(-1.0910,0.1162)(-1.0050,0.1673)(-0.9190,0.2250)(-0.8330,0.2848)(-0.7470,0.3452)(-0.6610,0.3978)(-0.5750,0.4466)(-0.4890,0.4917)(-0.4030,0.5350)(-0.3170,0.5722)(-0.2310,0.6164)(-0.1450,0.6492)(-0.0590,0.6653)(0.0270,0.6683)(0.1130,0.6546)(0.1990,0.6248)(0.2850,0.5932)(0.3710,0.5439)(0.4570,0.5075)(0.5430,0.4619)(0.6290,0.4149)(0.7150,0.3591)(0.8010,0.3075)(0.8870,0.2462)(0.9730,0.1886)(1.0590,0.1340)(1.1450,0.0946)(1.2310,0.0584)(1.3170,0.0341)(1.4030,0.0203)(1.4890,0.0106)(1.5750,0.0052)(1.6610,0.0020)(1.7470,0.0009)(1.8330,0.0004)(1.9190,0.0001)(2.0050,0.0000)(2.0910,0.0000)(2.1770,0.0000)
   };

     \addplot[smooth,color=RED!100!white,line width=2pt] plot coordinates{
      (-2.3075,0.0000)(-2.2986,0.0000)(-2.2897,0.0000)(-2.2808,0.0000)(-2.2719,0.0000)(-2.2630,0.0000)(-2.2541,0.0000)(-2.2452,0.0000)(-2.2363,0.0000)(-2.2274,0.0000)(-2.2184,0.0000)(-2.2095,0.0000)(-2.2006,0.0000)(-2.1917,0.0000)(-2.1828,0.0000)(-2.1739,0.0000)(-2.1650,0.0000)(-2.1561,0.0000)(-2.1472,0.0000)(-2.1383,0.0000)(-2.1293,0.0000)(-2.1204,0.0000)(-2.1115,0.0000)(-2.1026,0.0000)(-2.0937,0.0000)(-2.0848,0.0000)(-2.0759,0.0000)(-2.0670,0.0000)(-2.0581,0.0000)(-2.0492,0.0000)(-2.0402,0.0000)(-2.0313,0.0000)(-2.0224,0.0000)(-2.0135,0.0000)(-2.0046,0.0000)(-1.9957,0.0000)(-1.9868,0.0000)(-1.9779,0.0000)(-1.9690,0.0000)(-1.9601,0.0000)(-1.9511,0.0000)(-1.9422,0.0000)(-1.9333,0.0000)(-1.9244,0.0000)(-1.9155,0.0000)(-1.9066,0.0000)(-1.8977,0.0000)(-1.8888,0.0000)(-1.8799,0.0000)(-1.8710,0.0000)(-1.8620,0.0000)(-1.8531,0.0000)(-1.8442,0.0000)(-1.8353,0.0000)(-1.8264,0.0000)(-1.8175,0.0001)(-1.8086,0.0001)(-1.7997,0.0001)(-1.7908,0.0001)(-1.7819,0.0001)(-1.7729,0.0001)(-1.7640,0.0001)(-1.7551,0.0001)(-1.7462,0.0002)(-1.7373,0.0002)(-1.7284,0.0002)(-1.7195,0.0002)(-1.7106,0.0003)(-1.7017,0.0003)(-1.6928,0.0003)(-1.6838,0.0004)(-1.6749,0.0004)(-1.6660,0.0004)(-1.6571,0.0005)(-1.6482,0.0006)(-1.6393,0.0006)(-1.6304,0.0007)(-1.6215,0.0008)(-1.6126,0.0008)(-1.6037,0.0009)(-1.5947,0.0011)(-1.5858,0.0012)(-1.5769,0.0013)(-1.5680,0.0014)(-1.5591,0.0016)(-1.5502,0.0018)(-1.5413,0.0020)(-1.5324,0.0022)(-1.5235,0.0024)(-1.5146,0.0026)(-1.5056,0.0029)(-1.4967,0.0032)(-1.4878,0.0036)(-1.4789,0.0038)(-1.4700,0.0042)(-1.4611,0.0046)(-1.4522,0.0051)(-1.4433,0.0054)(-1.4344,0.0060)(-1.4255,0.0066)(-1.4165,0.0072)(-1.4076,0.0078)(-1.3987,0.0085)(-1.3898,0.0092)(-1.3809,0.0100)(-1.3720,0.0108)(-1.3631,0.0118)(-1.3542,0.0127)(-1.3453,0.0137)(-1.3364,0.0149)(-1.3274,0.0159)(-1.3185,0.0175)(-1.3096,0.0186)(-1.3007,0.0199)(-1.2918,0.0216)(-1.2829,0.0232)(-1.2740,0.0251)(-1.2651,0.0268)(-1.2562,0.0287)(-1.2473,0.0307)(-1.2383,0.0328)(-1.2294,0.0350)(-1.2205,0.0375)(-1.2116,0.0399)(-1.2027,0.0425)(-1.1938,0.0453)(-1.1849,0.0481)(-1.1760,0.0511)(-1.1671,0.0545)(-1.1582,0.0577)(-1.1492,0.0609)(-1.1403,0.0645)(-1.1314,0.0686)(-1.1225,0.0723)(-1.1136,0.0765)(-1.1047,0.0805)(-1.0958,0.0852)(-1.0869,0.0894)(-1.0780,0.0945)(-1.0691,0.0996)(-1.0601,0.1042)(-1.0512,0.1093)(-1.0423,0.1144)(-1.0334,0.1200)(-1.0245,0.1255)(-1.0156,0.1316)(-1.0067,0.1373)(-0.9978,0.1435)(-0.9889,0.1492)(-0.9800,0.1563)(-0.9710,0.1623)(-0.9621,0.1689)(-0.9532,0.1751)(-0.9443,0.1827)(-0.9354,0.1887)(-0.9265,0.1956)(-0.9176,0.2032)(-0.9087,0.2110)(-0.8998,0.2180)(-0.8909,0.2249)(-0.8819,0.2327)(-0.8730,0.2396)(-0.8641,0.2472)(-0.8552,0.2544)(-0.8463,0.2619)(-0.8374,0.2696)(-0.8285,0.2778)(-0.8196,0.2851)(-0.8107,0.2922)(-0.8018,0.3003)(-0.7928,0.3073)(-0.7839,0.3147)(-0.7750,0.3221)(-0.7661,0.3302)(-0.7572,0.3379)(-0.7483,0.3434)(-0.7394,0.3507)(-0.7305,0.3577)(-0.7216,0.3646)(-0.7127,0.3729)(-0.7037,0.3792)(-0.6948,0.3855)(-0.6859,0.3918)(-0.6770,0.3985)(-0.6681,0.4050)(-0.6592,0.4114)(-0.6503,0.4171)(-0.6414,0.4228)(-0.6325,0.4290)(-0.6236,0.4343)(-0.6146,0.4399)(-0.6057,0.4464)(-0.5968,0.4516)(-0.5879,0.4564)(-0.5790,0.4620)(-0.5701,0.4671)(-0.5612,0.4719)(-0.5523,0.4770)(-0.5434,0.4815)(-0.5345,0.4874)(-0.5255,0.4923)(-0.5166,0.4963)(-0.5077,0.5011)(-0.4988,0.5050)(-0.4899,0.5097)(-0.4810,0.5145)(-0.4721,0.5184)(-0.4632,0.5224)(-0.4543,0.5269)(-0.4454,0.5319)(-0.4364,0.5361)(-0.4275,0.5402)(-0.4186,0.5448)(-0.4097,0.5496)(-0.4008,0.5532)(-0.3919,0.5571)(-0.3830,0.5627)(-0.3741,0.5662)(-0.3652,0.5706)(-0.3563,0.5743)(-0.3473,0.5802)(-0.3384,0.5838)(-0.3295,0.5881)(-0.3206,0.5911)(-0.3117,0.5962)(-0.3028,0.6023)(-0.2939,0.6053)(-0.2850,0.6104)(-0.2761,0.6143)(-0.2672,0.6185)(-0.2582,0.6232)(-0.2493,0.6266)(-0.2404,0.6315)(-0.2315,0.6362)(-0.2226,0.6402)(-0.2137,0.6431)(-0.2048,0.6477)(-0.1959,0.6514)(-0.1870,0.6568)(-0.1781,0.6596)(-0.1691,0.6636)(-0.1602,0.6663)(-0.1513,0.6710)(-0.1424,0.6725)(-0.1335,0.6755)(-0.1246,0.6797)(-0.1157,0.6837)(-0.1068,0.6850)(-0.0979,0.6873)(-0.0890,0.6893)(-0.0800,0.6928)(-0.0711,0.6943)(-0.0622,0.6947)(-0.0533,0.6982)(-0.0444,0.6989)(-0.0355,0.7001)(-0.0266,0.6999)(-0.0177,0.7003)(-0.0088,0.7011)(0.0001,0.7009)(0.0091,0.7019)(0.0180,0.7010)(0.0269,0.6999)(0.0358,0.6989)(0.0447,0.6984)(0.0536,0.6964)(0.0625,0.6962)(0.0714,0.6935)(0.0803,0.6920)(0.0892,0.6902)(0.0982,0.6875)(0.1071,0.6839)(0.1160,0.6832)(0.1249,0.6804)(0.1338,0.6773)(0.1427,0.6744)(0.1516,0.6706)(0.1605,0.6664)(0.1694,0.6636)(0.1783,0.6593)(0.1873,0.6548)(0.1962,0.6509)(0.2051,0.6482)(0.2140,0.6448)(0.2229,0.6404)(0.2318,0.6368)(0.2407,0.6318)(0.2496,0.6269)(0.2585,0.6227)(0.2674,0.6188)(0.2764,0.6146)(0.2853,0.6094)(0.2942,0.6050)(0.3031,0.6011)(0.3120,0.5971)(0.3209,0.5926)(0.3298,0.5888)(0.3387,0.5836)(0.3476,0.5791)(0.3565,0.5745)(0.3655,0.5714)(0.3744,0.5653)(0.3833,0.5624)(0.3922,0.5564)(0.4011,0.5524)(0.4100,0.5494)(0.4189,0.5453)(0.4278,0.5404)(0.4367,0.5368)(0.4456,0.5309)(0.4546,0.5278)(0.4635,0.5233)(0.4724,0.5185)(0.4813,0.5136)(0.4902,0.5099)(0.4991,0.5053)(0.5080,0.5013)(0.5169,0.4965)(0.5258,0.4906)(0.5347,0.4870)(0.5437,0.4812)(0.5526,0.4761)(0.5615,0.4710)(0.5704,0.4671)(0.5793,0.4619)(0.5882,0.4562)(0.5971,0.4505)(0.6060,0.4456)(0.6149,0.4405)(0.6238,0.4335)(0.6328,0.4288)(0.6417,0.4225)(0.6506,0.4174)(0.6595,0.4098)(0.6684,0.4049)(0.6773,0.3985)(0.6862,0.3916)(0.6951,0.3847)(0.7040,0.3789)(0.7129,0.3722)(0.7219,0.3647)(0.7308,0.3578)(0.7397,0.3504)(0.7486,0.3427)(0.7575,0.3355)(0.7664,0.3294)(0.7753,0.3223)(0.7842,0.3147)(0.7931,0.3063)(0.8020,0.2995)(0.8110,0.2922)(0.8199,0.2848)(0.8288,0.2772)(0.8377,0.2703)(0.8466,0.2627)(0.8555,0.2551)(0.8644,0.2467)(0.8733,0.2396)(0.8822,0.2320)(0.8911,0.2254)(0.9001,0.2171)(0.9090,0.2103)(0.9179,0.2033)(0.9268,0.1962)(0.9357,0.1895)(0.9446,0.1826)(0.9535,0.1754)(0.9624,0.1685)(0.9713,0.1627)(0.9802,0.1559)(0.9892,0.1492)(0.9981,0.1437)(1.0070,0.1373)(1.0159,0.1313)(1.0248,0.1252)(1.0337,0.1196)(1.0426,0.1143)(1.0515,0.1093)(1.0604,0.1038)(1.0693,0.0992)(1.0783,0.0938)(1.0872,0.0892)(1.0961,0.0848)(1.1050,0.0805)(1.1139,0.0765)(1.1228,0.0725)(1.1317,0.0680)(1.1406,0.0646)(1.1495,0.0609)(1.1584,0.0574)(1.1674,0.0544)(1.1763,0.0508)(1.1852,0.0480)(1.1941,0.0451)(1.2030,0.0426)(1.2119,0.0399)(1.2208,0.0376)(1.2297,0.0349)(1.2386,0.0330)(1.2475,0.0306)(1.2565,0.0287)(1.2654,0.0267)(1.2743,0.0248)(1.2832,0.0229)(1.2921,0.0214)(1.3010,0.0198)(1.3099,0.0185)(1.3188,0.0171)(1.3277,0.0160)(1.3366,0.0147)(1.3456,0.0137)(1.3545,0.0127)(1.3634,0.0118)(1.3723,0.0108)(1.3812,0.0100)(1.3901,0.0092)(1.3990,0.0084)(1.4079,0.0078)(1.4168,0.0072)(1.4257,0.0066)(1.4347,0.0061)(1.4436,0.0055)(1.4525,0.0050)(1.4614,0.0046)(1.4703,0.0042)(1.4792,0.0038)(1.4881,0.0035)(1.4970,0.0031)(1.5059,0.0029)(1.5148,0.0026)(1.5238,0.0024)(1.5327,0.0021)(1.5416,0.0019)(1.5505,0.0017)(1.5594,0.0016)(1.5683,0.0015)(1.5772,0.0013)(1.5861,0.0012)(1.5950,0.0010)(1.6039,0.0010)(1.6129,0.0009)(1.6218,0.0008)(1.6307,0.0007)(1.6396,0.0006)(1.6485,0.0005)(1.6574,0.0005)(1.6663,0.0004)(1.6752,0.0004)(1.6841,0.0004)(1.6930,0.0003)(1.7020,0.0003)(1.7109,0.0003)(1.7198,0.0002)(1.7287,0.0002)(1.7376,0.0002)(1.7465,0.0001)(1.7554,0.0001)(1.7643,0.0001)(1.7732,0.0001)(1.7821,0.0001)(1.7911,0.0001)(1.8000,0.0001)(1.8089,0.0001)(1.8178,0.0001)(1.8267,0.0000)(1.8356,0.0000)(1.8445,0.0001)(1.8534,0.0000)(1.8623,0.0000)(1.8712,0.0000)(1.8802,0.0000)(1.8891,0.0000)(1.8980,0.0000)(1.9069,0.0000)(1.9158,0.0000)(1.9247,0.0000)(1.9336,0.0000)(1.9425,0.0000)(1.9514,0.0000)(1.9603,0.0000)(1.9693,0.0000)(1.9782,0.0000)(1.9871,0.0000)(1.9960,0.0000)(2.0049,0.0000)(2.0138,0.0000)(2.0227,0.0000)(2.0316,0.0000)(2.0405,0.0000)(2.0494,0.0000)(2.0584,0.0000)(2.0673,0.0000)(2.0762,0.0000)(2.0851,0.0000)(2.0940,0.0000)(2.1029,0.0000)(2.1118,0.0000)(2.1207,0.0000)(2.1296,0.0000)(2.1385,0.0000)
       };
      \addplot[dotted,color=green!60!black!100,line width=3pt] plot coordinates{
      (-2.3075,0.0000)(-2.2986,0.0000)(-2.2897,0.0000)(-2.2808,0.0000)(-2.2719,0.0000)(-2.2630,0.0000)(-2.2541,0.0000)(-2.2452,0.0000)(-2.2363,0.0000)(-2.2274,0.0000)(-2.2184,0.0000)(-2.2095,0.0000)(-2.2006,0.0000)(-2.1917,0.0000)(-2.1828,0.0000)(-2.1739,0.0000)(-2.1650,0.0000)(-2.1561,0.0000)(-2.1472,0.0000)(-2.1383,0.0000)(-2.1293,0.0000)(-2.1204,0.0000)(-2.1115,0.0000)(-2.1026,0.0000)(-2.0937,0.0000)(-2.0848,0.0000)(-2.0759,0.0000)(-2.0670,0.0000)(-2.0581,0.0000)(-2.0492,0.0000)(-2.0402,0.0000)(-2.0313,0.0000)(-2.0224,0.0000)(-2.0135,0.0000)(-2.0046,0.0000)(-1.9957,0.0000)(-1.9868,0.0000)(-1.9779,0.0000)(-1.9690,0.0000)(-1.9601,0.0000)(-1.9511,0.0000)(-1.9422,0.0000)(-1.9333,0.0000)(-1.9244,0.0000)(-1.9155,0.0000)(-1.9066,0.0000)(-1.8977,0.0000)(-1.8888,0.0000)(-1.8799,0.0000)(-1.8710,0.0000)(-1.8620,0.0000)(-1.8531,0.0000)(-1.8442,0.0000)(-1.8353,0.0000)(-1.8264,0.0000)(-1.8175,0.0001)(-1.8086,0.0001)(-1.7997,0.0001)(-1.7908,0.0001)(-1.7819,0.0001)(-1.7729,0.0001)(-1.7640,0.0001)(-1.7551,0.0001)(-1.7462,0.0002)(-1.7373,0.0002)(-1.7284,0.0002)(-1.7195,0.0002)(-1.7106,0.0003)(-1.7017,0.0003)(-1.6928,0.0003)(-1.6838,0.0004)(-1.6749,0.0004)(-1.6660,0.0004)(-1.6571,0.0005)(-1.6482,0.0006)(-1.6393,0.0006)(-1.6304,0.0007)(-1.6215,0.0008)(-1.6126,0.0008)(-1.6037,0.0009)(-1.5947,0.0011)(-1.5858,0.0012)(-1.5769,0.0013)(-1.5680,0.0014)(-1.5591,0.0016)(-1.5502,0.0018)(-1.5413,0.0020)(-1.5324,0.0022)(-1.5235,0.0024)(-1.5146,0.0026)(-1.5056,0.0029)(-1.4967,0.0032)(-1.4878,0.0036)(-1.4789,0.0038)(-1.4700,0.0042)(-1.4611,0.0046)(-1.4522,0.0051)(-1.4433,0.0054)(-1.4344,0.0060)(-1.4255,0.0066)(-1.4165,0.0072)(-1.4076,0.0078)(-1.3987,0.0085)(-1.3898,0.0092)(-1.3809,0.0100)(-1.3720,0.0108)(-1.3631,0.0118)(-1.3542,0.0127)(-1.3453,0.0137)(-1.3364,0.0149)(-1.3274,0.0159)(-1.3185,0.0175)(-1.3096,0.0186)(-1.3007,0.0199)(-1.2918,0.0216)(-1.2829,0.0232)(-1.2740,0.0251)(-1.2651,0.0268)(-1.2562,0.0287)(-1.2473,0.0307)(-1.2383,0.0328)(-1.2294,0.0350)(-1.2205,0.0375)(-1.2116,0.0399)(-1.2027,0.0425)(-1.1938,0.0453)(-1.1849,0.0481)(-1.1760,0.0511)(-1.1671,0.0545)(-1.1582,0.0577)(-1.1492,0.0609)(-1.1403,0.0645)(-1.1314,0.0686)(-1.1225,0.0723)(-1.1136,0.0765)(-1.1047,0.0805)(-1.0958,0.0852)(-1.0869,0.0894)(-1.0780,0.0945)(-1.0691,0.0996)(-1.0601,0.1042)(-1.0512,0.1093)(-1.0423,0.1144)(-1.0334,0.1200)(-1.0245,0.1255)(-1.0156,0.1316)(-1.0067,0.1373)(-0.9978,0.1435)(-0.9889,0.1492)(-0.9800,0.1563)(-0.9710,0.1623)(-0.9621,0.1689)(-0.9532,0.1751)(-0.9443,0.1827)(-0.9354,0.1887)(-0.9265,0.1956)(-0.9176,0.2032)(-0.9087,0.2110)(-0.8998,0.2180)(-0.8909,0.2249)(-0.8819,0.2327)(-0.8730,0.2396)(-0.8641,0.2472)(-0.8552,0.2544)(-0.8463,0.2619)(-0.8374,0.2696)(-0.8285,0.2778)(-0.8196,0.2851)(-0.8107,0.2922)(-0.8018,0.3003)(-0.7928,0.3073)(-0.7839,0.3147)(-0.7750,0.3221)(-0.7661,0.3302)(-0.7572,0.3379)(-0.7483,0.3434)(-0.7394,0.3507)(-0.7305,0.3577)(-0.7216,0.3646)(-0.7127,0.3729)(-0.7037,0.3792)(-0.6948,0.3855)(-0.6859,0.3918)(-0.6770,0.3985)(-0.6681,0.4050)(-0.6592,0.4114)(-0.6503,0.4171)(-0.6414,0.4228)(-0.6325,0.4290)(-0.6236,0.4343)(-0.6146,0.4399)(-0.6057,0.4464)(-0.5968,0.4516)(-0.5879,0.4564)(-0.5790,0.4620)(-0.5701,0.4671)(-0.5612,0.4719)(-0.5523,0.4770)(-0.5434,0.4815)(-0.5345,0.4874)(-0.5255,0.4923)(-0.5166,0.4963)(-0.5077,0.5011)(-0.4988,0.5050)(-0.4899,0.5097)(-0.4810,0.5145)(-0.4721,0.5184)(-0.4632,0.5224)(-0.4543,0.5269)(-0.4454,0.5319)(-0.4364,0.5361)(-0.4275,0.5402)(-0.4186,0.5448)(-0.4097,0.5496)(-0.4008,0.5532)(-0.3919,0.5571)(-0.3830,0.5627)(-0.3741,0.5662)(-0.3652,0.5706)(-0.3563,0.5743)(-0.3473,0.5802)(-0.3384,0.5838)(-0.3295,0.5881)(-0.3206,0.5911)(-0.3117,0.5962)(-0.3028,0.6023)(-0.2939,0.6053)(-0.2850,0.6104)(-0.2761,0.6143)(-0.2672,0.6185)(-0.2582,0.6232)(-0.2493,0.6266)(-0.2404,0.6315)(-0.2315,0.6362)(-0.2226,0.6402)(-0.2137,0.6431)(-0.2048,0.6477)(-0.1959,0.6514)(-0.1870,0.6568)(-0.1781,0.6596)(-0.1691,0.6636)(-0.1602,0.6663)(-0.1513,0.6710)(-0.1424,0.6725)(-0.1335,0.6755)(-0.1246,0.6797)(-0.1157,0.6837)(-0.1068,0.6850)(-0.0979,0.6873)(-0.0890,0.6893)(-0.0800,0.6928)(-0.0711,0.6943)(-0.0622,0.6947)(-0.0533,0.6982)(-0.0444,0.6989)(-0.0355,0.7001)(-0.0266,0.6999)(-0.0177,0.7003)(-0.0088,0.7011)(0.0001,0.7009)(0.0091,0.7019)(0.0180,0.7010)(0.0269,0.6999)(0.0358,0.6989)(0.0447,0.6984)(0.0536,0.6964)(0.0625,0.6962)(0.0714,0.6935)(0.0803,0.6920)(0.0892,0.6902)(0.0982,0.6875)(0.1071,0.6839)(0.1160,0.6832)(0.1249,0.6804)(0.1338,0.6773)(0.1427,0.6744)(0.1516,0.6706)(0.1605,0.6664)(0.1694,0.6636)(0.1783,0.6593)(0.1873,0.6548)(0.1962,0.6509)(0.2051,0.6482)(0.2140,0.6448)(0.2229,0.6404)(0.2318,0.6368)(0.2407,0.6318)(0.2496,0.6269)(0.2585,0.6227)(0.2674,0.6188)(0.2764,0.6146)(0.2853,0.6094)(0.2942,0.6050)(0.3031,0.6011)(0.3120,0.5971)(0.3209,0.5926)(0.3298,0.5888)(0.3387,0.5836)(0.3476,0.5791)(0.3565,0.5745)(0.3655,0.5714)(0.3744,0.5653)(0.3833,0.5624)(0.3922,0.5564)(0.4011,0.5524)(0.4100,0.5494)(0.4189,0.5453)(0.4278,0.5404)(0.4367,0.5368)(0.4456,0.5309)(0.4546,0.5278)(0.4635,0.5233)(0.4724,0.5185)(0.4813,0.5136)(0.4902,0.5099)(0.4991,0.5053)(0.5080,0.5013)(0.5169,0.4965)(0.5258,0.4906)(0.5347,0.4870)(0.5437,0.4812)(0.5526,0.4761)(0.5615,0.4710)(0.5704,0.4671)(0.5793,0.4619)(0.5882,0.4562)(0.5971,0.4505)(0.6060,0.4456)(0.6149,0.4405)(0.6238,0.4335)(0.6328,0.4288)(0.6417,0.4225)(0.6506,0.4174)(0.6595,0.4098)(0.6684,0.4049)(0.6773,0.3985)(0.6862,0.3916)(0.6951,0.3847)(0.7040,0.3789)(0.7129,0.3722)(0.7219,0.3647)(0.7308,0.3578)(0.7397,0.3504)(0.7486,0.3427)(0.7575,0.3355)(0.7664,0.3294)(0.7753,0.3223)(0.7842,0.3147)(0.7931,0.3063)(0.8020,0.2995)(0.8110,0.2922)(0.8199,0.2848)(0.8288,0.2772)(0.8377,0.2703)(0.8466,0.2627)(0.8555,0.2551)(0.8644,0.2467)(0.8733,0.2396)(0.8822,0.2320)(0.8911,0.2254)(0.9001,0.2171)(0.9090,0.2103)(0.9179,0.2033)(0.9268,0.1962)(0.9357,0.1895)(0.9446,0.1826)(0.9535,0.1754)(0.9624,0.1685)(0.9713,0.1627)(0.9802,0.1559)(0.9892,0.1492)(0.9981,0.1437)(1.0070,0.1373)(1.0159,0.1313)(1.0248,0.1252)(1.0337,0.1196)(1.0426,0.1143)(1.0515,0.1093)(1.0604,0.1038)(1.0693,0.0992)(1.0783,0.0938)(1.0872,0.0892)(1.0961,0.0848)(1.1050,0.0805)(1.1139,0.0765)(1.1228,0.0725)(1.1317,0.0680)(1.1406,0.0646)(1.1495,0.0609)(1.1584,0.0574)(1.1674,0.0544)(1.1763,0.0508)(1.1852,0.0480)(1.1941,0.0451)(1.2030,0.0426)(1.2119,0.0399)(1.2208,0.0376)(1.2297,0.0349)(1.2386,0.0330)(1.2475,0.0306)(1.2565,0.0287)(1.2654,0.0267)(1.2743,0.0248)(1.2832,0.0229)(1.2921,0.0214)(1.3010,0.0198)(1.3099,0.0185)(1.3188,0.0171)(1.3277,0.0160)(1.3366,0.0147)(1.3456,0.0137)(1.3545,0.0127)(1.3634,0.0118)(1.3723,0.0108)(1.3812,0.0100)(1.3901,0.0092)(1.3990,0.0084)(1.4079,0.0078)(1.4168,0.0072)(1.4257,0.0066)(1.4347,0.0061)(1.4436,0.0055)(1.4525,0.0050)(1.4614,0.0046)(1.4703,0.0042)(1.4792,0.0038)(1.4881,0.0035)(1.4970,0.0031)(1.5059,0.0029)(1.5148,0.0026)(1.5238,0.0024)(1.5327,0.0021)(1.5416,0.0019)(1.5505,0.0017)(1.5594,0.0016)(1.5683,0.0015)(1.5772,0.0013)(1.5861,0.0012)(1.5950,0.0010)(1.6039,0.0010)(1.6129,0.0009)(1.6218,0.0008)(1.6307,0.0007)(1.6396,0.0006)(1.6485,0.0005)(1.6574,0.0005)(1.6663,0.0004)(1.6752,0.0004)(1.6841,0.0004)(1.6930,0.0003)(1.7020,0.0003)(1.7109,0.0003)(1.7198,0.0002)(1.7287,0.0002)(1.7376,0.0002)(1.7465,0.0001)(1.7554,0.0001)(1.7643,0.0001)(1.7732,0.0001)(1.7821,0.0001)(1.7911,0.0001)(1.8000,0.0001)(1.8089,0.0001)(1.8178,0.0001)(1.8267,0.0000)(1.8356,0.0000)(1.8445,0.0001)(1.8534,0.0000)(1.8623,0.0000)(1.8712,0.0000)(1.8802,0.0000)(1.8891,0.0000)(1.8980,0.0000)(1.9069,0.0000)(1.9158,0.0000)(1.9247,0.0000)(1.9336,0.0000)(1.9425,0.0000)(1.9514,0.0000)(1.9603,0.0000)(1.9693,0.0000)(1.9782,0.0000)(1.9871,0.0000)(1.9960,0.0000)(2.0049,0.0000)(2.0138,0.0000)(2.0227,0.0000)(2.0316,0.0000)(2.0405,0.0000)(2.0494,0.0000)(2.0584,0.0000)(2.0673,0.0000)(2.0762,0.0000)(2.0851,0.0000)(2.0940,0.0000)(2.1029,0.0000)(2.1118,0.0000)(2.1207,0.0000)(2.1296,0.0000)(2.1385,0.0000)
       };
      \end{axis}
      \end{tikzpicture}&
      \begin{tikzpicture}[font=\footnotesize]
      \renewcommand{\axisdefaulttryminticks}{4} 
        \tikzstyle{every axis legend}+=[cells={anchor=west},fill=white,
   at={(1,1)}, anchor=north east, font=\tiny, legend style={mark options={scale=0.1}}]
      \begin{axis}[
      width=.5\linewidth,
      height=.3\linewidth,
      bar width=2pt,
      grid=major,
      ymajorgrids=false,
      xmax=12,
        xmin=-12,
        ymax=0.22,
        ymin=0,
      ytick={0,0.1,0.2},
      xlabel={},
      ]
    
       \addplot[ybar,mark=none,draw=white,fill=BLUE,area legend,opacity=0.5] coordinates{
       (-7.4490,0.0000)(-7.1470,0.0000)(-6.8450,0.0001)(-6.5430,0.0003)(-6.2410,0.0012)(-5.9390,0.0037)(-5.6370,0.0100)(-5.3350,0.0224)(-5.0330,0.0436)(-4.7310,0.0704)(-4.4290,0.1025)(-4.1270,0.1240)(-3.8250,0.1294)(-3.5230,0.1151)(-3.2210,0.0897)(-2.9190,0.0598)(-2.6170,0.0379)(-2.3150,0.0298)(-2.0130,0.0344)(-1.7110,0.0561)(-1.4090,0.0845)(-1.1070,0.1183)(-0.8050,0.1480)(-0.5030,0.1676)(-0.2010,0.1757)(0.1010,0.1780)(0.4030,0.1710)(0.7050,0.1556)(1.0070,0.1297)(1.3090,0.0971)(1.6110,0.0644)(1.9130,0.0400)(2.2150,0.0291)(2.5170,0.0336)(2.8190,0.0513)(3.1210,0.0800)(3.4230,0.1075)(3.7250,0.1267)(4.0270,0.1286)(4.3290,0.1107)(4.6310,0.0813)(4.9330,0.0528)(5.2350,0.0284)(5.5370,0.0131)(5.8390,0.0053)(6.1410,0.0018)(6.4430,0.0005)(6.7450,0.0001)(7.0470,0.0000)(7.3490,0.0000)
   };

       \addplot[ybar,mark=none,draw=white,fill=yellow,area legend,opacity=0.6] coordinates{
       (-9.7080,0.0000)(-9.3240,0.0000)(-8.9400,0.0000)(-8.5560,0.0001)(-8.1720,0.0002)(-7.7880,0.0004)(-7.4040,0.0012)(-7.0200,0.0023)(-6.6360,0.0049)(-6.2520,0.0091)(-5.8680,0.0160)(-5.4840,0.0258)(-5.1000,0.0373)(-4.7160,0.0507)(-4.3320,0.0636)(-3.9480,0.0747)(-3.5640,0.0819)(-3.1800,0.0861)(-2.7960,0.0867)(-2.4120,0.0894)(-2.0280,0.0925)(-1.6440,0.1003)(-1.2600,0.1116)(-0.8760,0.1244)(-0.4920,0.1351)(-0.1080,0.1396)(0.2760,0.1372)(0.6600,0.1303)(1.0440,0.1197)(1.4280,0.1070)(1.8120,0.0964)(2.1960,0.0896)(2.5800,0.0880)(2.9640,0.0870)(3.3480,0.0845)(3.7320,0.0785)(4.1160,0.0706)(4.5000,0.0577)(4.8840,0.0448)(5.2680,0.0319)(5.6520,0.0206)(6.0360,0.0126)(6.4200,0.0070)(6.8040,0.0037)(7.1880,0.0017)(7.5720,0.0008)(7.9560,0.0002)(8.3400,0.0001)(8.7240,0.0000)(9.1080,0.0000)
   };

     \addplot[smooth,color=RED!100!white,line width=2pt] plot coordinates{
      (-7.8442,0.0000)(-7.8126,0.0000)(-7.7810,0.0000)(-7.7494,0.0000)(-7.7178,0.0000)(-7.6862,0.0000)(-7.6546,0.0000)(-7.6230,0.0000)(-7.5914,0.0000)(-7.5598,0.0000)(-7.5282,0.0000)(-7.4966,0.0000)(-7.4650,0.0000)(-7.4334,0.0000)(-7.4018,0.0000)(-7.3702,0.0000)(-7.3386,0.0000)(-7.3070,0.0000)(-7.2754,0.0000)(-7.2438,0.0000)(-7.2122,0.0000)(-7.1806,0.0000)(-7.1490,0.0000)(-7.1174,0.0000)(-7.0858,0.0000)(-7.0542,0.0000)(-7.0226,0.0000)(-6.9910,0.0000)(-6.9594,0.0000)(-6.9278,0.0000)(-6.8962,0.0000)(-6.8646,0.0000)(-6.8330,0.0000)(-6.8014,0.0000)(-6.7698,0.0001)(-6.7382,0.0001)(-6.7066,0.0001)(-6.6750,0.0001)(-6.6434,0.0001)(-6.6118,0.0001)(-6.5802,0.0002)(-6.5486,0.0002)(-6.5170,0.0002)(-6.4854,0.0002)(-6.4538,0.0003)(-6.4222,0.0003)(-6.3906,0.0004)(-6.3590,0.0005)(-6.3274,0.0005)(-6.2958,0.0006)(-6.2642,0.0007)(-6.2326,0.0008)(-6.2010,0.0009)(-6.1694,0.0011)(-6.1378,0.0013)(-6.1062,0.0014)(-6.0746,0.0016)(-6.0430,0.0019)(-6.0114,0.0021)(-5.9798,0.0024)(-5.9482,0.0027)(-5.9166,0.0031)(-5.8850,0.0035)(-5.8534,0.0039)(-5.8218,0.0044)(-5.7902,0.0049)(-5.7586,0.0055)(-5.7270,0.0061)(-5.6954,0.0069)(-5.6638,0.0077)(-5.6322,0.0085)(-5.6006,0.0094)(-5.5690,0.0104)(-5.5374,0.0115)(-5.5058,0.0127)(-5.4742,0.0140)(-5.4426,0.0153)(-5.4110,0.0168)(-5.3794,0.0185)(-5.3478,0.0201)(-5.3162,0.0220)(-5.2846,0.0239)(-5.2530,0.0259)(-5.2214,0.0281)(-5.1898,0.0303)(-5.1582,0.0327)(-5.1266,0.0350)(-5.0950,0.0378)(-5.0634,0.0405)(-5.0318,0.0434)(-5.0002,0.0465)(-4.9686,0.0495)(-4.9370,0.0526)(-4.9054,0.0561)(-4.8738,0.0595)(-4.8422,0.0629)(-4.8106,0.0665)(-4.7790,0.0700)(-4.7474,0.0734)(-4.7158,0.0774)(-4.6842,0.0812)(-4.6526,0.0847)(-4.6210,0.0882)(-4.5894,0.0919)(-4.5578,0.0957)(-4.5262,0.0993)(-4.4946,0.1030)(-4.4630,0.1065)(-4.4314,0.1097)(-4.3998,0.1133)(-4.3682,0.1162)(-4.3366,0.1192)(-4.3050,0.1218)(-4.2734,0.1248)(-4.2418,0.1275)(-4.2102,0.1297)(-4.1786,0.1316)(-4.1470,0.1335)(-4.1154,0.1352)(-4.0838,0.1367)(-4.0522,0.1374)(-4.0206,0.1383)(-3.9890,0.1391)(-3.9574,0.1395)(-3.9258,0.1391)(-3.8942,0.1391)(-3.8626,0.1385)(-3.8310,0.1378)(-3.7994,0.1365)(-3.7678,0.1356)(-3.7362,0.1339)(-3.7046,0.1322)(-3.6730,0.1300)(-3.6414,0.1279)(-3.6098,0.1253)(-3.5782,0.1227)(-3.5466,0.1200)(-3.5150,0.1169)(-3.4834,0.1138)(-3.4518,0.1106)(-3.4202,0.1073)(-3.3886,0.1037)(-3.3570,0.1005)(-3.3254,0.0965)(-3.2938,0.0931)(-3.2622,0.0896)(-3.2306,0.0857)(-3.1990,0.0819)(-3.1674,0.0785)(-3.1358,0.0746)(-3.1042,0.0711)(-3.0726,0.0676)(-3.0410,0.0642)(-3.0094,0.0608)(-2.9778,0.0575)(-2.9462,0.0542)(-2.9146,0.0512)(-2.8830,0.0482)(-2.8514,0.0453)(-2.8198,0.0427)(-2.7882,0.0404)(-2.7566,0.0377)(-2.7250,0.0355)(-2.6934,0.0336)(-2.6618,0.0316)(-2.6302,0.0299)(-2.5986,0.0283)(-2.5670,0.0269)(-2.5354,0.0256)(-2.5038,0.0246)(-2.4722,0.0234)(-2.4406,0.0227)(-2.4090,0.0222)(-2.3774,0.0217)(-2.3458,0.0215)(-2.3142,0.0214)(-2.2826,0.0216)(-2.2510,0.0218)(-2.2194,0.0222)(-2.1878,0.0229)(-2.1562,0.0236)(-2.1246,0.0248)(-2.0930,0.0258)(-2.0614,0.0271)(-2.0298,0.0286)(-1.9982,0.0303)(-1.9666,0.0322)(-1.9350,0.0340)(-1.9034,0.0362)(-1.8718,0.0386)(-1.8402,0.0410)(-1.8086,0.0435)(-1.7770,0.0463)(-1.7454,0.0491)(-1.7138,0.0525)(-1.6822,0.0554)(-1.6506,0.0586)(-1.6190,0.0621)(-1.5874,0.0656)(-1.5558,0.0692)(-1.5242,0.0730)(-1.4926,0.0766)(-1.4610,0.0805)(-1.4294,0.0842)(-1.3978,0.0883)(-1.3662,0.0922)(-1.3346,0.0963)(-1.3030,0.1001)(-1.2714,0.1040)(-1.2398,0.1082)(-1.2082,0.1122)(-1.1766,0.1156)(-1.1450,0.1194)(-1.1134,0.1232)(-1.0818,0.1266)(-1.0502,0.1302)(-1.0186,0.1339)(-0.9870,0.1372)(-0.9554,0.1403)(-0.9238,0.1431)(-0.8922,0.1463)(-0.8606,0.1486)(-0.8290,0.1513)(-0.7974,0.1538)(-0.7658,0.1561)(-0.7342,0.1578)(-0.7026,0.1600)(-0.6710,0.1619)(-0.6394,0.1635)(-0.6078,0.1650)(-0.5762,0.1662)(-0.5446,0.1675)(-0.5130,0.1685)(-0.4814,0.1695)(-0.4498,0.1702)(-0.4182,0.1709)(-0.3866,0.1715)(-0.3550,0.1716)(-0.3234,0.1724)(-0.2918,0.1727)(-0.2602,0.1730)(-0.2286,0.1737)(-0.1970,0.1731)(-0.1654,0.1735)(-0.1338,0.1738)(-0.1022,0.1736)(-0.0706,0.1740)(-0.0390,0.1737)(-0.0074,0.1743)(0.0242,0.1742)(0.0558,0.1737)(0.0874,0.1738)(0.1190,0.1739)(0.1506,0.1735)(0.1822,0.1737)(0.2138,0.1735)(0.2454,0.1733)(0.2770,0.1730)(0.3086,0.1725)(0.3402,0.1722)(0.3718,0.1717)(0.4034,0.1714)(0.4350,0.1704)(0.4666,0.1696)(0.4982,0.1689)(0.5298,0.1677)(0.5614,0.1667)(0.5930,0.1656)(0.6246,0.1639)(0.6562,0.1623)(0.6878,0.1610)(0.7194,0.1592)(0.7510,0.1572)(0.7826,0.1546)(0.8142,0.1524)(0.8458,0.1499)(0.8774,0.1473)(0.9090,0.1445)(0.9406,0.1417)(0.9722,0.1387)(1.0038,0.1353)(1.0354,0.1320)(1.0670,0.1286)(1.0986,0.1249)(1.1302,0.1214)(1.1618,0.1176)(1.1934,0.1138)(1.2250,0.1097)(1.2566,0.1058)(1.2882,0.1020)(1.3198,0.0980)(1.3514,0.0941)(1.3830,0.0900)(1.4146,0.0863)(1.4462,0.0822)(1.4778,0.0783)(1.5094,0.0745)(1.5410,0.0711)(1.5726,0.0673)(1.6042,0.0636)(1.6358,0.0602)(1.6674,0.0570)(1.6990,0.0536)(1.7306,0.0506)(1.7622,0.0477)(1.7938,0.0448)(1.8254,0.0420)(1.8570,0.0396)(1.8886,0.0372)(1.9202,0.0351)(1.9518,0.0329)(1.9834,0.0311)(2.0150,0.0294)(2.0466,0.0278)(2.0782,0.0263)(2.1098,0.0252)(2.1414,0.0241)(2.1730,0.0232)(2.2046,0.0226)(2.2362,0.0219)(2.2678,0.0218)(2.2994,0.0214)(2.3310,0.0216)(2.3626,0.0216)(2.3942,0.0220)(2.4258,0.0225)(2.4574,0.0232)(2.4890,0.0240)(2.5206,0.0251)(2.5522,0.0262)(2.5838,0.0275)(2.6154,0.0291)(2.6470,0.0308)(2.6786,0.0326)(2.7102,0.0346)(2.7418,0.0368)(2.7734,0.0390)(2.8050,0.0416)(2.8366,0.0443)(2.8682,0.0470)(2.8998,0.0497)(2.9314,0.0530)(2.9630,0.0559)(2.9946,0.0593)(3.0262,0.0625)(3.0578,0.0659)(3.0894,0.0694)(3.1210,0.0731)(3.1526,0.0768)(3.1842,0.0804)(3.2158,0.0840)(3.2474,0.0877)(3.2790,0.0915)(3.3106,0.0946)(3.3422,0.0987)(3.3738,0.1021)(3.4054,0.1055)(3.4370,0.1090)(3.4686,0.1123)(3.5002,0.1157)(3.5318,0.1185)(3.5634,0.1213)(3.5950,0.1241)(3.6266,0.1267)(3.6582,0.1289)(3.6898,0.1309)(3.7214,0.1330)(3.7530,0.1348)(3.7846,0.1360)(3.8162,0.1374)(3.8478,0.1382)(3.8794,0.1387)(3.9110,0.1390)(3.9426,0.1393)(3.9742,0.1391)(4.0058,0.1386)(4.0374,0.1380)(4.0690,0.1371)(4.1006,0.1356)(4.1322,0.1347)(4.1638,0.1325)(4.1954,0.1308)(4.2270,0.1282)(4.2586,0.1261)(4.2902,0.1234)(4.3218,0.1208)(4.3534,0.1179)(4.3850,0.1146)(4.4166,0.1114)(4.4482,0.1082)(4.4798,0.1046)(4.5114,0.1012)(4.5430,0.0975)(4.5746,0.0940)(4.6062,0.0901)(4.6378,0.0864)(4.6694,0.0827)(4.7010,0.0789)(4.7326,0.0752)(4.7642,0.0716)(4.7958,0.0678)(4.8274,0.0645)(4.8590,0.0609)(4.8906,0.0576)(4.9222,0.0542)(4.9538,0.0509)(4.9854,0.0479)(5.0170,0.0449)(5.0486,0.0419)(5.0802,0.0391)(5.1118,0.0364)(5.1434,0.0339)(5.1750,0.0315)(5.2066,0.0290)(5.2382,0.0268)(5.2698,0.0247)(5.3014,0.0228)(5.3330,0.0209)(5.3646,0.0192)(5.3962,0.0176)(5.4278,0.0161)(5.4594,0.0146)(5.4910,0.0133)(5.5226,0.0120)(5.5542,0.0110)(5.5858,0.0099)(5.6174,0.0089)(5.6490,0.0080)(5.6806,0.0072)(5.7122,0.0065)(5.7438,0.0058)(5.7754,0.0052)(5.8070,0.0046)(5.8386,0.0041)(5.8702,0.0037)(5.9018,0.0033)(5.9334,0.0029)(5.9650,0.0026)(5.9966,0.0023)(6.0282,0.0020)(6.0598,0.0017)(6.0914,0.0015)(6.1230,0.0013)(6.1546,0.0012)(6.1862,0.0010)(6.2178,0.0009)(6.2494,0.0007)(6.2810,0.0007)(6.3126,0.0006)(6.3442,0.0005)(6.3758,0.0004)(6.4074,0.0004)(6.4390,0.0003)(6.4706,0.0003)(6.5022,0.0002)(6.5338,0.0002)(6.5654,0.0002)(6.5970,0.0001)(6.6286,0.0001)(6.6602,0.0001)(6.6918,0.0001)(6.7234,0.0001)(6.7550,0.0001)(6.7866,0.0000)(6.8182,0.0000)(6.8498,0.0000)(6.8814,0.0000)(6.9130,0.0000)(6.9446,0.0000)(6.9762,0.0000)(7.0078,0.0000)(7.0394,0.0000)(7.0710,0.0000)(7.1026,0.0000)(7.1342,0.0000)(7.1658,0.0000)(7.1974,0.0000)(7.2290,0.0000)(7.2606,0.0000)(7.2922,0.0000)(7.3238,0.0000)(7.3554,0.0000)(7.3870,0.0000)(7.4186,0.0000)(7.4502,0.0000)(7.4818,0.0000)(7.5134,0.0000)(7.5450,0.0000)(7.5766,0.0000)(7.6082,0.0000)(7.6398,0.0000)(7.6714,0.0000)(7.7030,0.0000)(7.7346,0.0000)(7.7662,0.0000)(7.7978,0.0000)(7.8294,0.0000)(7.8610,0.0000)(7.8926,0.0000)(7.9242,0.0000)
       };
      \addplot[dotted,color=green!60!black!100,line width=3pt] plot coordinates{
      (-10.4591,0.0000)(-10.4173,0.0000)(-10.3755,0.0000)(-10.3336,0.0000)(-10.2918,0.0000)(-10.2500,0.0000)(-10.2082,0.0000)(-10.1664,0.0000)(-10.1245,0.0000)(-10.0827,0.0000)(-10.0409,0.0000)(-9.9991,0.0000)(-9.9573,0.0000)(-9.9154,0.0000)(-9.8736,0.0000)(-9.8318,0.0000)(-9.7900,0.0000)(-9.7481,0.0000)(-9.7063,0.0000)(-9.6645,0.0000)(-9.6227,0.0000)(-9.5809,0.0000)(-9.5390,0.0000)(-9.4972,0.0000)(-9.4554,0.0000)(-9.4136,0.0000)(-9.3718,0.0000)(-9.3300,0.0000)(-9.2881,0.0000)(-9.2463,0.0000)(-9.2045,0.0000)(-9.1627,0.0000)(-9.1209,0.0000)(-9.0790,0.0000)(-9.0372,0.0000)(-8.9954,0.0000)(-8.9536,0.0000)(-8.9118,0.0000)(-8.8699,0.0000)(-8.8281,0.0000)(-8.7863,0.0000)(-8.7445,0.0000)(-8.7027,0.0000)(-8.6608,0.0000)(-8.6190,0.0000)(-8.5772,0.0000)(-8.5354,0.0000)(-8.4935,0.0000)(-8.4517,0.0000)(-8.4099,0.0000)(-8.3681,0.0000)(-8.3263,0.0000)(-8.2844,0.0001)(-8.2426,0.0001)(-8.2008,0.0001)(-8.1590,0.0001)(-8.1172,0.0001)(-8.0754,0.0001)(-8.0335,0.0001)(-7.9917,0.0001)(-7.9499,0.0001)(-7.9081,0.0001)(-7.8663,0.0002)(-7.8244,0.0002)(-7.7826,0.0002)(-7.7408,0.0002)(-7.6990,0.0002)(-7.6572,0.0003)(-7.6153,0.0003)(-7.5735,0.0004)(-7.5317,0.0004)(-7.4899,0.0004)(-7.4481,0.0005)(-7.4062,0.0005)(-7.3644,0.0006)(-7.3226,0.0006)(-7.2808,0.0007)(-7.2390,0.0008)(-7.1971,0.0008)(-7.1553,0.0009)(-7.1135,0.0010)(-7.0717,0.0011)(-7.0299,0.0013)(-6.9880,0.0014)(-6.9462,0.0015)(-6.9044,0.0017)(-6.8626,0.0018)(-6.8208,0.0020)(-6.7789,0.0022)(-6.7371,0.0023)(-6.6953,0.0025)(-6.6535,0.0028)(-6.6117,0.0031)(-6.5698,0.0033)(-6.5280,0.0035)(-6.4862,0.0038)(-6.4444,0.0042)(-6.4026,0.0045)(-6.3607,0.0049)(-6.3189,0.0052)(-6.2771,0.0057)(-6.2353,0.0061)(-6.1935,0.0066)(-6.1516,0.0070)(-6.1098,0.0076)(-6.0680,0.0081)(-6.0262,0.0087)(-5.9844,0.0092)(-5.9425,0.0099)(-5.9007,0.0106)(-5.8589,0.0112)(-5.8171,0.0120)(-5.7753,0.0128)(-5.7334,0.0136)(-5.6916,0.0145)(-5.6498,0.0153)(-5.6080,0.0163)(-5.5662,0.0172)(-5.5243,0.0182)(-5.4825,0.0192)(-5.4407,0.0203)(-5.3989,0.0214)(-5.3571,0.0226)(-5.3152,0.0237)(-5.2734,0.0250)(-5.2316,0.0262)(-5.1898,0.0276)(-5.1480,0.0288)(-5.1061,0.0300)(-5.0643,0.0316)(-5.0225,0.0329)(-4.9807,0.0344)(-4.9389,0.0357)(-4.8970,0.0373)(-4.8552,0.0387)(-4.8134,0.0404)(-4.7716,0.0418)(-4.7298,0.0433)(-4.6879,0.0450)(-4.6461,0.0466)(-4.6043,0.0481)(-4.5625,0.0496)(-4.5207,0.0513)(-4.4788,0.0529)(-4.4370,0.0544)(-4.3952,0.0561)(-4.3534,0.0575)(-4.3116,0.0590)(-4.2697,0.0608)(-4.2279,0.0621)(-4.1861,0.0636)(-4.1443,0.0650)(-4.1025,0.0667)(-4.0606,0.0679)(-4.0188,0.0693)(-3.9770,0.0705)(-3.9352,0.0719)(-3.8934,0.0730)(-3.8515,0.0745)(-3.8097,0.0756)(-3.7679,0.0768)(-3.7261,0.0776)(-3.6843,0.0788)(-3.6424,0.0798)(-3.6006,0.0807)(-3.5588,0.0817)(-3.5170,0.0824)(-3.4752,0.0832)(-3.4333,0.0838)(-3.3915,0.0846)(-3.3497,0.0851)(-3.3079,0.0857)(-3.2661,0.0863)(-3.2242,0.0867)(-3.1824,0.0871)(-3.1406,0.0877)(-3.0988,0.0880)(-3.0570,0.0882)(-3.0151,0.0886)(-2.9733,0.0889)(-2.9315,0.0892)(-2.8897,0.0892)(-2.8479,0.0896)(-2.8060,0.0898)(-2.7642,0.0901)(-2.7224,0.0902)(-2.6806,0.0903)(-2.6388,0.0907)(-2.5969,0.0909)(-2.5551,0.0910)(-2.5133,0.0910)(-2.4715,0.0914)(-2.4297,0.0918)(-2.3878,0.0922)(-2.3460,0.0924)(-2.3042,0.0929)(-2.2624,0.0931)(-2.2206,0.0936)(-2.1787,0.0941)(-2.1369,0.0945)(-2.0951,0.0951)(-2.0533,0.0957)(-2.0115,0.0965)(-1.9696,0.0972)(-1.9278,0.0980)(-1.8860,0.0987)(-1.8442,0.0994)(-1.8024,0.1004)(-1.7605,0.1015)(-1.7187,0.1026)(-1.6769,0.1037)(-1.6351,0.1048)(-1.5933,0.1058)(-1.5514,0.1071)(-1.5096,0.1085)(-1.4678,0.1094)(-1.4260,0.1112)(-1.3842,0.1126)(-1.3423,0.1140)(-1.3005,0.1154)(-1.2587,0.1171)(-1.2169,0.1183)(-1.1751,0.1197)(-1.1332,0.1213)(-1.0914,0.1229)(-1.0496,0.1245)(-1.0078,0.1259)(-0.9660,0.1275)(-0.9241,0.1291)(-0.8823,0.1303)(-0.8405,0.1320)(-0.7987,0.1335)(-0.7569,0.1348)(-0.7150,0.1361)(-0.6732,0.1375)(-0.6314,0.1387)(-0.5896,0.1398)(-0.5478,0.1408)(-0.5059,0.1421)(-0.4641,0.1432)(-0.4223,0.1442)(-0.3805,0.1447)(-0.3387,0.1458)(-0.2968,0.1464)(-0.2550,0.1471)(-0.2132,0.1475)(-0.1714,0.1478)(-0.1296,0.1482)(-0.0877,0.1484)(-0.0459,0.1487)(-0.0041,0.1488)(0.0377,0.1486)(0.0795,0.1485)(0.1214,0.1482)(0.1632,0.1481)(0.2050,0.1476)(0.2468,0.1471)(0.2886,0.1463)(0.3305,0.1457)(0.3723,0.1449)(0.4141,0.1442)(0.4559,0.1433)(0.4977,0.1424)(0.5396,0.1414)(0.5814,0.1401)(0.6232,0.1388)(0.6650,0.1378)(0.7068,0.1365)(0.7487,0.1351)(0.7905,0.1339)(0.8323,0.1323)(0.8741,0.1306)(0.9159,0.1292)(0.9578,0.1276)(0.9996,0.1263)(1.0414,0.1248)(1.0832,0.1232)(1.1250,0.1216)(1.1669,0.1202)(1.2087,0.1187)(1.2505,0.1172)(1.2923,0.1156)(1.3341,0.1140)(1.3760,0.1128)(1.4178,0.1113)(1.4596,0.1101)(1.5014,0.1089)(1.5432,0.1073)(1.5851,0.1060)(1.6269,0.1052)(1.6687,0.1036)(1.7105,0.1028)(1.7523,0.1018)(1.7942,0.1007)(1.8360,0.0996)(1.8778,0.0988)(1.9196,0.0979)(1.9614,0.0974)(2.0033,0.0964)(2.0451,0.0957)(2.0869,0.0952)(2.1287,0.0947)(2.1705,0.0938)(2.2124,0.0937)(2.2542,0.0931)(2.2960,0.0930)(2.3378,0.0924)(2.3796,0.0922)(2.4215,0.0919)(2.4633,0.0914)(2.5051,0.0911)(2.5469,0.0913)(2.5887,0.0910)(2.6306,0.0906)(2.6724,0.0903)(2.7142,0.0903)(2.7560,0.0902)(2.7978,0.0900)(2.8397,0.0897)(2.8815,0.0894)(2.9233,0.0891)(2.9651,0.0890)(3.0069,0.0887)(3.0488,0.0884)(3.0906,0.0881)(3.1324,0.0879)(3.1742,0.0874)(3.2160,0.0869)(3.2579,0.0864)(3.2997,0.0858)(3.3415,0.0852)(3.3833,0.0846)(3.4251,0.0839)(3.4670,0.0833)(3.5088,0.0826)(3.5506,0.0815)(3.5924,0.0808)(3.6342,0.0799)(3.6761,0.0788)(3.7179,0.0779)(3.7597,0.0769)(3.8015,0.0759)(3.8433,0.0745)(3.8852,0.0733)(3.9270,0.0722)(3.9688,0.0708)(4.0106,0.0696)(4.0524,0.0682)(4.0943,0.0666)(4.1361,0.0654)(4.1779,0.0640)(4.2197,0.0624)(4.2615,0.0608)(4.3034,0.0593)(4.3452,0.0579)(4.3870,0.0563)(4.4288,0.0547)(4.4706,0.0532)(4.5125,0.0515)(4.5543,0.0500)(4.5961,0.0484)(4.6379,0.0466)(4.6797,0.0452)(4.7216,0.0436)(4.7634,0.0421)(4.8052,0.0405)(4.8470,0.0390)(4.8888,0.0375)(4.9307,0.0361)(4.9725,0.0346)(5.0143,0.0332)(5.0561,0.0318)(5.0979,0.0305)(5.1398,0.0291)(5.1816,0.0276)(5.2234,0.0264)(5.2652,0.0252)(5.3070,0.0240)(5.3489,0.0228)(5.3907,0.0216)(5.4325,0.0206)(5.4743,0.0194)(5.5161,0.0185)(5.5580,0.0174)(5.5998,0.0164)(5.6416,0.0155)(5.6834,0.0146)(5.7252,0.0138)(5.7671,0.0130)(5.8089,0.0122)(5.8507,0.0114)(5.8925,0.0107)(5.9343,0.0101)(5.9762,0.0095)(6.0180,0.0088)(6.0598,0.0082)(6.1016,0.0076)(6.1434,0.0071)(6.1853,0.0066)(6.2271,0.0062)(6.2689,0.0058)(6.3107,0.0054)(6.3525,0.0050)(6.3944,0.0046)(6.4362,0.0042)(6.4780,0.0039)(6.5198,0.0036)(6.5616,0.0033)(6.6035,0.0031)(6.6453,0.0028)(6.6871,0.0026)(6.7289,0.0024)(6.7707,0.0022)(6.8126,0.0020)(6.8544,0.0018)(6.8962,0.0017)(6.9380,0.0015)(6.9798,0.0014)(7.0217,0.0013)(7.0635,0.0012)(7.1053,0.0011)(7.1471,0.0010)(7.1889,0.0009)(7.2308,0.0008)(7.2726,0.0007)(7.3144,0.0007)(7.3562,0.0006)(7.3980,0.0005)(7.4399,0.0005)(7.4817,0.0004)(7.5235,0.0004)(7.5653,0.0003)(7.6071,0.0003)(7.6490,0.0003)(7.6908,0.0003)(7.7326,0.0002)(7.7744,0.0002)(7.8162,0.0002)(7.8581,0.0002)(7.8999,0.0001)(7.9417,0.0001)(7.9835,0.0001)(8.0253,0.0001)(8.0672,0.0001)(8.1090,0.0001)(8.1508,0.0001)(8.1926,0.0001)(8.2344,0.0001)(8.2763,0.0000)(8.3181,0.0000)(8.3599,0.0000)(8.4017,0.0000)(8.4435,0.0000)(8.4854,0.0000)(8.5272,0.0000)(8.5690,0.0000)(8.6108,0.0000)(8.6526,0.0000)(8.6945,0.0000)(8.7363,0.0000)(8.7781,0.0000)(8.8199,0.0000)(8.8617,0.0000)(8.9036,0.0000)(8.9454,0.0000)(8.9872,0.0000)(9.0290,0.0000)(9.0708,0.0000)(9.1127,0.0000)(9.1545,0.0000)(9.1963,0.0000)(9.2381,0.0000)(9.2799,0.0000)(9.3218,0.0000)(9.3636,0.0000)(9.4054,0.0000)(9.4472,0.0000)(9.4890,0.0000)(9.5309,0.0000)(9.5727,0.0000)(9.6145,0.0000)(9.6563,0.0000)(9.6981,0.0000)(9.7400,0.0000)(9.7818,0.0000)(9.8236,0.0000)(9.8654,0.0000)(9.9072,0.0000)(9.9491,0.0000)(9.9909,0.0000)(10.0327,0.0000)(10.0745,0.0000)(10.1163,0.0000)(10.1582,0.0000)(10.2000,0.0000)(10.2418,0.0000)(10.2836,0.0000)(10.3254,0.0000)(10.3673,0.0000)(10.4091,0.0000)
       };
        \legend{,,$(\x^{\prime})^\T\tilde\bbeta$,$(\x^{\prime})^\T\tilde\bbeta^\g$};
      \end{axis}
      \end{tikzpicture}
   \end{tabular}
   \caption{Empirical and theoretical results under an LFMM with $p=200$, $\rho=0.5$, $s=[\sqrt{2};\mathbf{0}_{p-1}]$, Rademacher $e_1$, normal $e_2,\ldots,e_p$, and Haar distributed $\V$. 
  \textbf{Top}: scatter plot of $200$ independent $\big[r,h_\kappa(r,\pm1)\big]$.
  \textbf{Bottom}: histograms of predicted scores on $10^6$ fresh samples $(\x',y')\sim\mathcal{D}_{(\x,y)}$ given by $\hat\bbeta$ and $\hat\bbeta^\g$, versus theoretical densities obtained from \Cref{theo:main}. 
  \textbf{Left}: $n=100$, square loss $\ell(\hat y,y)=(\hat y-y)^2/2$. 
  \textbf{Right}: $n=600$, square hinge loss $\ell(\hat y,y)=\max\{0,(1-\hat y y)\}^2$.  }
   \label{fig: universality beta}
\end{figure}

\Cref{rem:square loss}, supported by the numerically results in \Cref{fig: universality beta},
points to the insensitivity of least-square classifiers to the distributions of non-Gaussian informative factors, despite their non-universal impact on the in-distribution performance as discussed in \Cref{sec:universality error}. 
The incapacity of square loss to account for non-Gaussian variations in the informative factors sheds light on the suboptimality of square loss observed in the right display of \Cref{fig: optimal loss}, where the logistic loss yields better performance than the square loss with \emph{optimally chosen} regularization $\lambda$ on LFMM having non-Gaussian informative factors, while the logistic loss \emph{fails} to do better than the square loss under the equivalent GMM in the left-hand figure.
Further experiments on real-world datasets are given in \Cref{sec:real_data}.

This finding on the suboptimality of square loss  under LFMM provide new insights on the impact of loss function beyond previous optimality results of square loss under GMM. 
For high-dimensional GMM data, the square loss has been proven optimal, see \citep{taheri2021sharp} for the case of unregularized ERM, and \citep{mai2019high} for ridge-regularized ERM, in the $n,p \to \infty$ limit. 
That is, the square loss not only gives the best unbiased classifier, but also allows for an \emph{optimal} bias-variance trade-off with well calibrated ridge-regularization. 
As a result of the Gaussian universality breakdown discussed above, the optimality of square loss is no longer valid under the more general LFMM. 
This motivates a few open questions on the optimal loss:
\begin{itemize}
    \item Is the square loss optimal \emph{only} under GMM, or when the Gaussian universality of in-distribution performance in \Cref{def:gaussian_universality} holds?
    \item In the case of Gaussian universality breakdown, does the optimal loss depend on the sample ratio $n/p$ as in the setting of linear regression in~\citep{bean2013optimal}? 
    \item Is it possible, in the large $n,p$ regime, to propose an optimal design of classification loss adapted to the data distribution and sample size?
\end{itemize}

\begin{figure}
\centering
\begin{tabular}{cc}
  GMM & LFMM\\
\begin{tikzpicture}[font=\footnotesize]
\renewcommand{\axisdefaulttryminticks}{4} 
\pgfplotsset{every axis legend/.append style={cells={anchor=west},fill=white, at={(0.6,0.48)}, anchor=north east, font=\scriptsize}}
\pgfplotsset{/pgfplots/error bars/error bar style={thick}}
\pgfplotsset{every axis plot/.append style={thick},}
\begin{axis}[
height=0.35\linewidth,
width=0.45\linewidth,
ymin=0.75,
ymax=1,
xmin=0.0019,
xmax=64,
xmode=log,
log basis x ={2},
grid=major,
ymajorgrids=false,
scaled ticks=true,
xlabel={$\lambda$},
ylabel={ Classification Accuracy ($\%$)  },
]
\addplot[color = GREEN, only marks, mark=triangle, mark size= 1.7pt,error bars/.cd, y dir=both,y explicit] coordinates {
(0.0000,0.904466)+-(1,0.004293)(0.0001,0.904508)+-(1,0.004038)(0.0001,0.904690)+-(1,0.004102)(0.0002,0.904664)+-(1,0.004293)(0.0005,0.904620)+-(1,0.004214)(0.0010,0.904734)+-(1,0.004206)(0.0020,0.904759)+-(1,0.004221)(0.0039,0.904887)+-(1,0.004258)(0.0078,0.905393)+-(1,0.003983)(0.0156,0.906102)+-(1,0.003922)(0.0312,0.907611)+-(1,0.003871)(0.0625,0.909753)+-(1,0.003497)(0.1250,0.913292)+-(1,0.003150)(0.2500,0.918229)+-(1,0.002869)(0.5000,0.922648)+-(1,0.002441)(1.0000,0.923557)+-(1,0.002910)(2.0000,0.916506)+-(1,0.004107)(4.0000,0.896041)+-(1,0.006122)(8.0000,0.864815)+-(1,0.007381)(16.0000,0.833070)+-(1,0.007171)(32.0000,0.809798)+-(1,0.006559)(64.0000,0.795291)+-(1,0.005890)(128.0000,0.787677)+-(1,0.005636)(256.0000,0.783641)+-(1,0.005546)(512.0000,0.781284)+-(1,0.005242)(1024.0000,0.780128)+-(1,0.005254)(2048.0000,0.779593)+-(1,0.005089)(4096.0000,0.779395)+-(1,0.005282)(8192.0000,0.779549)+-(1,0.005301)(16384.0000,0.779339)+-(1,0.005025)(32768.0000,0.779189)+-(1,0.005137)
      
      };
\addlegendentry{{ Square loss, Emp }};

\addplot[color = RED, only marks, mark=o, mark size= 1.7pt,error bars/.cd, y dir=both,y explicit] coordinates {
(0.0000,0.888416)+-(1,0.007918)(0.0001,0.889489)+-(1,0.007597)(0.0001,0.890663)+-(1,0.007572)(0.0002,0.892160)+-(1,0.007262)(0.0005,0.894107)+-(1,0.006924)(0.0010,0.896671)+-(1,0.006303)(0.0020,0.900152)+-(1,0.005709)(0.0039,0.904657)+-(1,0.004804)(0.0078,0.909744)+-(1,0.003860)(0.0156,0.914785)+-(1,0.003165)(0.0312,0.919303)+-(1,0.002769)(0.0625,0.921968)+-(1,0.002796)(0.1250,0.921980)+-(1,0.003334)(0.2500,0.917436)+-(1,0.004504)(0.5000,0.905861)+-(1,0.005957)(1.0000,0.884844)+-(1,0.007376)(2.0000,0.857071)+-(1,0.008023)(4.0000,0.829434)+-(1,0.007499)(8.0000,0.808891)+-(1,0.006491)(16.0000,0.795230)+-(1,0.005955)(32.0000,0.787911)+-(1,0.005745)(64.0000,0.783329)+-(1,0.005529)(128.0000,0.781415)+-(1,0.005028)(256.0000,0.780348)+-(1,0.005332)(512.0000,0.779814)+-(1,0.005303)(1024.0000,0.779528)+-(1,0.005125)(2048.0000,0.779357)+-(1,0.005097)(4096.0000,0.779491)+-(1,0.005182)(8192.0000,0.779272)+-(1,0.005266)(16384.0000,0.779363)+-(1,0.005247)(32768.0000,0.779598)+-(1,0.005256)
      };
      \addlegendentry{{ Log loss, Emp }};

   \addplot[color = GREEN, smooth,line width=1pt] coordinates {
(0.0010,0.904250)(0.0020,0.904405)(0.0039,0.904527)(0.0078,0.904818)(0.0156,0.905502)(0.0312,0.906766)(0.0625,0.909267)(0.1250,0.913121)(0.2500,0.917878)(0.5000,0.922472)(1.0000,0.923225)(2.0000,0.915582)(4.0000,0.895046)(8.0000,0.862972)(16.0000,0.831139)(32.0000,0.807903)(64.0000,0.794063)(128.0000,0.786072)(256.0000,0.782227)(512.0000,0.779947)(1024.0000,0.779006)

      };
\addlegendentry{{ Square loss, Theo}};
    \addplot[color = RED, dashed,line width=1pt] coordinates {

  (0.0010,0.893300)(0.0020,0.900599)(0.0039,0.905627)(0.0078,0.910659)(0.0156,0.914353)(0.0312,0.916618)(0.0625,0.918399)(0.1250,0.919643)(0.2500,0.918089)(0.5000,0.910030)(1.0000,0.892622)(2.0000,0.866317)(4.0000,0.835784)(8.0000,0.808339)(16.0000,0.790289)(32.0000,0.783062)(64.0000,0.782025)(128.0000,0.780326)(256.0000,0.776747)(512.0000,0.778945)(1024.0000,0.779756)
  
      };
\addlegendentry{{ Log loss, Theo }};

      \end{axis}
      \end{tikzpicture}
&
\begin{tikzpicture}[font=\footnotesize]
\renewcommand{\axisdefaulttryminticks}{4} 
\pgfplotsset{every axis legend/.append style={cells={anchor=west},fill=white, at={(0.2,0.2)}, anchor=north east, font=\scriptsize}}
\pgfplotsset{/pgfplots/error bars/error bar style={thick}}
\pgfplotsset{every axis plot/.append style={thick},}
\begin{axis}[
height=0.35\linewidth,
width=0.45\linewidth,
ymin=0.75,
ymax=1,
xmin=0.0019,
xmax=64,
xmode=log,
log basis x ={2},
grid=major,
ymajorgrids=false,
scaled ticks=true,
ytick={0.16,0.18,0.20},
yticklabels={$16$,$18$,$20$},
xlabel={$\lambda$},
ylabel={},
]
\addplot[color = GREEN, only marks, mark=triangle, mark size= 1.7pt,error bars/.cd, y dir=both,y explicit] coordinates {

(0.0000,0.899507)+-(1,0.005704)(0.0001,0.899695)+-(1,0.005641)(0.0001,0.899715)+-(1,0.005544)(0.0002,0.899882)+-(1,0.005522)(0.0005,0.899699)+-(1,0.005538)(0.0010,0.900079)+-(1,0.005366)(0.0020,0.900084)+-(1,0.005534)(0.0039,0.900267)+-(1,0.005552)(0.0078,0.900969)+-(1,0.005356)(0.0156,0.901864)+-(1,0.005438)(0.0312,0.903765)+-(1,0.005421)(0.0625,0.907213)+-(1,0.005144)(0.1250,0.912664)+-(1,0.004847)(0.2500,0.919991)+-(1,0.004565)(0.5000,0.925963)+-(1,0.004488)(1.0000,0.924999)+-(1,0.004618)(2.0000,0.913998)+-(1,0.004973)(4.0000,0.892375)+-(1,0.006080)(8.0000,0.862611)+-(1,0.007155)(16.0000,0.831755)+-(1,0.007172)(32.0000,0.808919)+-(1,0.007057)(64.0000,0.795085)+-(1,0.006309)(128.0000,0.787243)+-(1,0.006197)(256.0000,0.783173)+-(1,0.005980)(512.0000,0.781113)+-(1,0.005861)(1024.0000,0.780033)+-(1,0.006277)(2048.0000,0.779576)+-(1,0.005898)(4096.0000,0.779316)+-(1,0.005941)(8192.0000,0.779064)+-(1,0.006033)(16384.0000,0.778969)+-(1,0.005858)(32768.0000,0.779341)+-(1,0.005966)
      };


\addplot[color = RED, only marks, mark=o, mark size= 1.7pt,error bars/.cd, y dir=both,y explicit] coordinates {

(0.0000,0.972149)+-(1,0.015627)(0.0001,0.969655)+-(1,0.015709)(0.0001,0.966294)+-(1,0.015591)(0.0002,0.962518)+-(1,0.015279)(0.0005,0.957884)+-(1,0.014614)(0.0010,0.952537)+-(1,0.013442)(0.0020,0.946506)+-(1,0.011930)(0.0039,0.940221)+-(1,0.009908)(0.0078,0.934911)+-(1,0.007691)(0.0156,0.931506)+-(1,0.006094)(0.0312,0.929648)+-(1,0.005066)(0.0625,0.927294)+-(1,0.004723)(0.1250,0.923175)+-(1,0.005036)(0.2500,0.915023)+-(1,0.005296)(0.5000,0.901713)+-(1,0.006198)(1.0000,0.881376)+-(1,0.007285)(2.0000,0.854853)+-(1,0.007938)(4.0000,0.828303)+-(1,0.007581)(8.0000,0.807977)+-(1,0.007092)(16.0000,0.794812)+-(1,0.006421)(32.0000,0.787227)+-(1,0.006241)(64.0000,0.783281)+-(1,0.005870)(128.0000,0.781026)+-(1,0.006171)(256.0000,0.780001)+-(1,0.006105)(512.0000,0.779520)+-(1,0.006065)(1024.0000,0.779167)+-(1,0.005738)(2048.0000,0.779042)+-(1,0.006102)(4096.0000,0.778989)+-(1,0.005690)(8192.0000,0.779006)+-(1,0.006068)(16384.0000,0.778938)+-(1,0.005902)(32768.0000,0.778863)+-(1,0.005942)
      };

\addplot[color = GREEN, smooth,line width=1pt] coordinates {

(0.0000,0.899379)(0.0001,0.899573)(0.0001,0.899492)(0.0002,0.899310)(0.0005,0.899566)(0.0010,0.899432)(0.0020,0.899546)(0.0039,0.899898)(0.0078,0.900521)(0.0156,0.901514)(0.0312,0.903286)(0.0625,0.907032)(0.1250,0.912502)(0.2500,0.920015)(0.5000,0.926015)(1.0000,0.925410)(2.0000,0.914047)(4.0000,0.892651)(8.0000,0.862131)(16.0000,0.831218)(32.0000,0.808421)(64.0000,0.794180)(128.0000,0.786446)(256.0000,0.782265)(512.0000,0.780272)(1024.0000,0.779197)(2048.0000,0.778731)(4096.0000,0.778356)(8192.0000,0.778277)(16384.0000,0.778096)(32768.0000,0.778191)

      };

    \addplot[color = RED, dashed,line width=1pt] coordinates {

(0.0010,0.972301)(0.0020,0.945786)(0.0039,0.935899)(0.0078,0.931336)(0.0156,0.929568)(0.0312,0.929947)(0.0625,0.930886)(0.1250,0.929532)(0.2500,0.922725)(0.5000,0.908307)(1.0000,0.886157)(2.0000,0.858555)(4.0000,0.829752)(8.0000,0.804808)(16.0000,0.787962)(32.0000,0.780976)(64.0000,0.782019)(128.0000,0.785801)(256.0000,0.785753)(512.0000,0.779136)(1024.0000,0.775994)
  
      };

      \end{axis}
      \end{tikzpicture}
   \end{tabular}
         \caption{Empirical classification accuracy of $\hat\bbeta_{\ell,\lambda}$ computed over $10^5$ independent copies of $(\x',y')\sim\mathcal{D}_{(\x,y)}$ and averaged over $100$ trials with a width of $\pm 1$ standard deviation, versus theoretical performance given in \Cref{theo:main}, given by the square loss $\ell(\hat y, y)  = (y-\hat y)^2/2$ and the logistic loss $\ell(\hat y, y) = -\ln(1/(1+e^{-y\hat y}))$ and on $n=800$ training samples. 
      \textbf{Left}: GMM under \Cref{def:linear_factor} with $p=200$, $\rho=0.5$, $s=[1,5;0.5;\zeros_{p-2}]$ (so that $q = 2$), and $\V={\rm diag}(2,\ones_{p-1})\H$ with Haar distributed $\H$. 
      \textbf{Right}: LFMM identical to the GMM in the left, but with Rademacher $e_1$. }
         \label{fig: optimal loss}
\end{figure}

\section{Concluding Remarks}
\label{sec: conclusion}

Our analysis considered a basic framework of linear factor mixture models (LFMM) and showed that the Gaussian universality can already break down under this natural extension of GMM. Based on the precise performance characterization, we derived conditions of Gaussian universality to shed light on the limit of the widely observed and extensively studied Gaussian universality phenomenon.

Breaking the Gaussian universality in classification of mixture models allows also deeper insight into the choice of classification loss beyond the optimality of square loss under GMM~\citep{taheri2021sharp,mai2019high}. The suboptimality of square loss under LFMM can be further investigated in future works, to propose, for instance, an optimal design of loss function that takes into account the data distribution and the sample size, as done by \citet{bean2013optimal} for linear regression.

Several simplifications made in our analysis can be removed more or less easily. For instance, while the extension to multi-classification is fairly straightforward, the generalization to non-smooth losses is less direct: even though our system of equations in \eqref{eq:theta eta gamma} does not require access to the derivatives of the loss function, they are involved in the establishment of these equations.

\section*{Acknowledgments}

Z.~Liao would like to acknowledge the National Natural Science Foundation of China (via fund NSFC-62206101 and NSFC-12141107) and the Guangdong Provincial Key Laboratory of Mathematical Foundations for Artificial Intelligence (2023B1212010001) for providing partial support.
This work is also supported by the AI Cluster ANITI (ANR-19-PI3A-0004).

\bibliography{mai}
\bibliographystyle{plainnat}

\clearpage
\appendix

\section{Proofs}

Here, we present the detailed proofs of our theoretical results in the paper.
Precisely, the proof of \Cref{theo:main} is given in \Cref{sm:proof-of-main-results} and the proofs of corollaries are given in \Cref{subsec:proof_of_coro} (the proof of \Cref{cor:performance} in \Cref{subsubsec:proof_of_cor_performance}, the proof of \Cref{cor:condition for r} in \Cref{subsubsec:proof_of_condition for r} and that of \Cref{cor:condition for beta} in \Cref{subsubsec:proof_of_condition for beta}, respectively).

{\bf Notations.} Before getting into the proofs, we first introduce the following asymptotic notations. The big $O$ notation \(O(u_n)\) is understood here in probability. 
We specify that when multidimensional objects are concerned, $O(u_n)$ is understood entry-wise. 
The notation \(O_{\Vert\cdot\Vert}(\cdot)\) is understood as follows: for a vector $\v$, $\v=O_{\Vert\cdot\Vert}(u_n)$ means its Euclidean norm is $O(u_n)$ and for a square matrix $\mathbf{M}$, $\mathbf{M}=O_{\Vert\cdot\Vert}(u_n)$ means that the operator norm of $\mathbf{M}$ is $O(u_n)$. 
The small $o$ notation and Big-Theta $\Theta$ are understood likewise. Note that under Assumption~\ref{ass:growth-rate} it is equivalent to use either $O(u_n)$ or $O(u_p)$ since $n, p$ scales linearly. 
In the following we shall use constantly $O(u_p)$ for simplicity of exposition. The symbol $\simeq$ is used in the following sense: for a scalar $s=O(1)$, $s\simeq \tilde s$ indicates that $s-\tilde s=o(1)$, and for a vector $\v$ with $\Vert \v\Vert=O(1)$, $\v\simeq \tilde \v$ means $\Vert \v-\tilde \v\Vert=o(1)$. 
For random variable $r\sim\mathcal{N}(m,\sigma^2)$ with potentially random $m$ and $\sigma^2$, the expectation $\E[f(r)]$ should be understood as conditioned on $m,\sigma^2$, so that, $\E[r]$ is equal to $m$ instead of $\E[m]$. When parametrized functions $f_{\tau}(\cdot)$ are involved, $\E[f_{\tau}(r)]$ is computed by taking the integral over $r$.

\subsection{Proof of \Cref{theo:main}}\label{sm:proof-of-main-results}

In this section, we will provide the proof of \Cref{theo:main}. We start by explaining the main idea of leave-one-out and laying out the key steps as a guide of our proof.

\subsubsection{Main idea and key steps}

Taking $\lambda>0$ in the optimization problem \eqref{eq:opt-origin-reg} ensures a unique solution $\hat \bbeta$. 
Cancelling the gradient (with respect to $\bbeta$) of the objective function in \eqref{eq:opt-origin-reg}, we obtain the following stationary-point expression of $\hat\bbeta$
\begin{equation}\label{eq:lambda-beta}
\lambda \hat \bbeta = -\frac1n \sum_{i=1}^n \ell'(\hat\bbeta^\T\x_i,y_i)\x_i,
\end{equation}
where we denote $\ell'(t,y_i) = \frac{\partial \ell(t,y_i)}{\partial  t}$.

To characterize the behavior of $ \hat \bbeta$ from \eqref{eq:lambda-beta}, we need to assess the statistical behavior of $\hat\bbeta^\T\x_i$, which is not directly tractable due to the intricate dependence between $\hat\bbeta$ on $\x_i$ resulted from the implicit optimization \eqref{eq:opt-origin-reg}. 
To tackle this complication, we make use of a ``leave-one-out'' version $\hat \bbeta_{-i}$ of $\hat \bbeta$ with respect to the $i$-th training sample $(\x_i,y)$, obtained by solving \eqref{eq:opt-origin-reg} with all the remaining $n-1$ training samples $(\x_j,y_j)$ for $j \neq i$. 
Again we have
\begin{equation}\label{eq:lambda-beta-loo}
\lambda \hat \bbeta_{-i} = -\frac1n \sum_{j\neq i}  \ell'(\hat\bbeta_{-i}^\T\x_j,y_j)\x_j.
\end{equation}

This leave-one-out solution $\hat \bbeta_{-i}$ has two crucial properties: (i) it is by definition \emph{independent} of the left-out data sample $(\x_i,y_i)$; and (ii) it is asymptotically close to the original solution $\hat \bbeta$ as removing one among $n$ training samples has a negligible effect as $n \to \infty$. These two properties imply that $\hat \bbeta_{-i}$ and $\hat \bbeta$ behave similarly on all training or test samples, \emph{except} on $(\x_i,y_i)$, which is a training sample for $\hat \bbeta$ and a new observation for $\hat \bbeta_{-i}$.

Our proof relies on these two properties to derive a series of equations characterizing the limiting behavior of $\hat\bbeta^\T \x_i$ and $\hat\bbeta^\T \x'$.
Below is an overview of our key steps to guide the readers through the proof.

{\bf Key steps:}
\begin{enumerate}
    \item \label{enum:proof_step_1} Establishing the high-dimensional approximation
    \begin{equation}
    \label{eq:relation leave-one-out score}
        \hat \bbeta^\T\x_i-\hat \bbeta_{-i}^\T\x_i\simeq-\kappa\ell'(\hat\bbeta^\T\x_i,y_i),
    \end{equation}
    for some constant $\kappa$ independent of $i$.

    \item \label{enum:proof_step_2} Obtaining from \eqref{eq:relation leave-one-out score} that
    $$\hat \bbeta^\T\x_i \simeq \prox_{\kappa, \ell(\cdot,y_i)}(\hat \bbeta_{-i}^\T\x_i,y_i),$$ 
    and therefore 
    $$-\ell'(\hat\bbeta^\T\x_i,y_i)\simeq h(\hat \bbeta_{-i}^\T\x_i,y_i)\equiv\frac{\prox_{\kappa, \ell(\cdot,y_i)}(\hat\bbeta_{-i}^\T\x_i,y_i)-\hat\bbeta_{-i}^\T\x_i}{\kappa},$$
    where we recall $h(t,y) = (\prox_{\kappa, \ell(\cdot,y)} (t)-t)/\kappa$ with proximal operator
    \[
        \prox_{\tau, f} (t)=\argmin_{a \in \RR} \left[ f(a) + \frac{1}{2\tau} (a - t)^2 \right],
    \]
    for $\tau > 0$ and convex $f\colon \RR \to \RR$.

   \item \label{enum:proof_step_3} Using the approximation in Step~\ref{enum:proof_step_2} to rewrite \eqref{eq:lambda-beta} as 
   $$\lambda \hat \bbeta \simeq  \frac1n \sum_{i=1}^n h(\hat\bbeta_{-i}^\T\x_i,y_i)\x_i,$$
   for $$\x_i = \V \z_i=y_i\bmu+\V\e_i,$$    
   with $\e_i$ the noise vector $\e=[e_1,\ldots,e_p]^\T \in \RR^p$ for $\x_i$,
   thereby replacing the intractable $\hat\bbeta^\T\x_i$ in \eqref{eq:lambda-beta} with a tractable function of $\hat\bbeta_{-i}^\T\x_i$. 

   \item \label{enum:proof_step_4} Demonstrating the concentration result
   $$\frac1n \sum_{i=1}^n h(\hat\bbeta_{-i}^\T\x_i,y_i)y_i\simeq\eta,$$
   for some deterministic $\eta$.

   \item \label{enum:proof_step_5} Demonstrating the concentration results
   $$\frac1n \sum_{i=1}^n h(\hat\bbeta_{-i}^\T\x_i,y_i)[\e_i]_k\simeq \phi_k,\quad \forall k\in\{1,\ldots,q\},$$
   for some deterministic $\phi_1,\ldots,\phi_q$, and
  $$\frac1n \sum_{i=1}^n h(\hat\bbeta_{-i}^\T\x_i,y_i)[\e_i]_k\simeq 0,\quad \forall k\in\{q+1,\ldots,p\}.$$

    \item \label{enum:proof_step_6} Demonstrating with concentration arguments and CLT that
    $$\frac1n \sum_{i=1}^n h(\hat\bbeta_{-i}^\T\x_i,y_i)\tilde\e_i\simeq -\theta \cdot \V_{\rm noise}\hat\bbeta+\bepsilon,$$
    where $\tilde\e_i=[\e_i]_{q+1:p}$, $\V_{\rm noise}=[\v_{q+1},\ldots,\v_p]$ and $\bepsilon\in\RR^{p-q}$ a random vector such that, for any deterministic vector $\t=[t_{q+1},\ldots,t_p]^\T\in\RR^{p-q}$ of unit norm,
   $$ \sqrt n \t^\T\bepsilon/\gamma \to  \mathcal{N}(0,1)$$ in distribution, for some deterministic $\theta$ and $\gamma$.

    \item \label{enum:proof_step_7} Demonstrating 
    $$\kappa\simeq\frac{1}{n}\tr\bSigma(\lambda\I_p+\theta\bSigma).$$

    \item \label{enum:proof_step_8} Establishing asymptotic equations on $\eta,\theta,\gamma$ and $\phi$ from the results of the above steps, which characterize the limiting behavior of the solution $\hat\bbeta$.

\end{enumerate}

In the following, we present the detailed proof of \Cref{theo:main}.

\subsubsection{Detailed proof of \Cref{theo:main}}

We start by establishing the following bound on the difference between $\hat\bbeta$ and its leave-one-out version $\hat\bbeta_{-i}$. 

\begin{Lemma}[Bound on $\| \hat \bbeta- \hat \bbeta_{-i} \|$]\label{lem:bound_on_diff_hat_bbeta}
For $\hat \bbeta \in \RR^p$ the unique solution to \eqref{eq:opt-origin-reg} and $\hat \bbeta_{-i} \in \RR^p$ the associated leave-one-out solution as defined in \eqref{eq:lambda-beta-loo} that is independent of $\x_i$, we have,
\begin{equation}
    \left\| \hat \bbeta- \hat \bbeta_{-i}\right\|  = O(p^{-1/2}),
\end{equation}
in probability as $n,p \to \infty$.
\end{Lemma}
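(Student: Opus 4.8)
The plan is to exploit the $\lambda$-strong convexity granted by the ridge penalty, converting the closeness of the two optimization problems into closeness of their minimizers. Write $F(\bbeta) = \frac1n\sum_{j=1}^n \ell(\x_j^\T\bbeta, y_j) + \frac\lambda2\|\bbeta\|^2$ and $F_{-i}(\bbeta) = F(\bbeta) - \frac1n\ell(\x_i^\T\bbeta, y_i)$, so that $\hat\bbeta$ and $\hat\bbeta_{-i}$ are their respective minimizers. Since $\ell(\cdot,y)$ is convex (Assumption~\ref{ass:loss}), both $F$ and $F_{-i}$ are $\lambda$-strongly convex, and evaluating the strong-convexity inequality at each minimizer gives $F_{-i}(\hat\bbeta) \ge F_{-i}(\hat\bbeta_{-i}) + \frac\lambda2\|\hat\bbeta - \hat\bbeta_{-i}\|^2$ and $F(\hat\bbeta_{-i}) \ge F(\hat\bbeta) + \frac\lambda2\|\hat\bbeta - \hat\bbeta_{-i}\|^2$. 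Adding these and cancelling the common $F_{-i}$ terms, using $F = F_{-i} + \frac1n\ell(\x_i^\T\cdot, y_i)$, isolates the single left-out sample:
\begin{equation*}
\lambda\|\hat\bbeta - \hat\bbeta_{-i}\|^2 \le \tfrac1n\left[\ell(\x_i^\T\hat\bbeta_{-i}, y_i) - \ell(\x_i^\T\hat\bbeta, y_i)\right].
\end{equation*}
A first-order convexity bound on the right-hand side, followed by Cauchy--Schwarz, then yields $\lambda\|\hat\bbeta - \hat\bbeta_{-i}\| \le \frac1n |\ell'(\x_i^\T\hat\bbeta_{-i}, y_i)|\,\|\x_i\|$ after dividing through by $\|\hat\bbeta - \hat\bbeta_{-i}\|$ (the case $\hat\bbeta = \hat\bbeta_{-i}$ being trivial).

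It then remains to show that each of the three scalar factors is of the right order. For $\|\x_i\|$, since $\x_i = \V\z_i$ with $\E[\|\x_i\|^2] = \tr\bSigma + \|\bmu\|^2 = \Theta(p)$ under Assumption~\ref{ass:growth-rate} and $\z_i$ has bounded fourth moments (Assumption~\ref{ass:LFMM}), a Markov-type concentration gives $\|\x_i\| = O(\sqrt p)$ in probability. For the loss derivative, I would first control $\|\hat\bbeta_{-i}\|$: comparing $F_{-i}(\hat\bbeta_{-i}) \le F_{-i}(\zeros_p)$ and using nonnegativity of $\ell$ gives $\frac\lambda2\|\hat\bbeta_{-i}\|^2 \le \frac1n\sum_{j\ne i}\ell(0, y_j) = O(1)$, hence $\|\hat\bbeta_{-i}\| = O(1)$ since $\lambda = \Theta(1)$. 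Because $\hat\bbeta_{-i}$ is independent of $\x_i$, decomposing $\x_i^\T\hat\bbeta_{-i} = y_i\bmu^\T\hat\bbeta_{-i} + \e_i^\T\V^\T\hat\bbeta_{-i}$ shows the mean part is $O(1)$ while the fluctuation has conditional variance $\hat\bbeta_{-i}^\T\bSigma\hat\bbeta_{-i} = O(1)$, so that $\x_i^\T\hat\bbeta_{-i} = O(1)$ in probability. Continuity of $\ell'$ (Assumption~\ref{ass:loss}) then gives $|\ell'(\x_i^\T\hat\bbeta_{-i}, y_i)| = O(1)$. Combining the three bounds and using $n = \Theta(p)$ delivers $\lambda\|\hat\bbeta - \hat\bbeta_{-i}\| = O(\sqrt p / n) = O(p^{-1/2})$, which is the claim.

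The main obstacle is the control of $\ell'(\x_i^\T\hat\bbeta_{-i}, y_i)$, which is \emph{not} globally bounded for losses such as the square loss whose derivative grows linearly in its argument. The resolution is precisely the two-sided use of the leave-one-out construction: independence of $\hat\bbeta_{-i}$ from $\x_i$ is what lets me treat $\x_i^\T\hat\bbeta_{-i}$ as a well-concentrated $O(1)$ quantity (conditionally on $\hat\bbeta_{-i}$), while the a priori norm bound $\|\hat\bbeta_{-i}\| = O(1)$ coming from the regularizer ensures the conditional variance stays $\Theta(1)$. Care must be taken that the bounded-fourth-moment hypothesis is genuinely what upgrades the variance computation into an in-probability bound on $\x_i^\T\hat\bbeta_{-i}$; note also that the exceptional finite set of Assumption~\ref{ass:loss} where $\ell''$ may fail to exist is harmless here, since only $\ell$ and its first derivative $\ell'$ enter this argument.
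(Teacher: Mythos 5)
Your proof is correct, but it takes a genuinely different route from the paper's. The paper subtracts the two stationarity conditions \eqref{eq:lambda-beta} and \eqref{eq:lambda-beta-loo}, introduces interpolating curvature coefficients $a_{j(-i)}\ge 0$ via a mean-value identity on $\ell'$, and solves for the difference explicitly, $\hat\bbeta-\hat\bbeta_{-i}=\bigl(\lambda\I_p+\frac1n\sum_{j\neq i}a_{j(-i)}\x_j\x_j^\T\bigr)^{-1}\frac1n c_i\x_i$, bounding the resolvent's operator norm by $1/\lambda$; you instead add the two $\lambda$-strong-convexity inequalities evaluated at the opposite minimizers, which isolates the left-out loss term and gives $\lambda\|\hat\bbeta-\hat\bbeta_{-i}\|\le\frac1n|\ell'(\x_i^\T\hat\bbeta_{-i},y_i)|\,\|\x_i\|$ directly, with no need for second derivatives or the $a_{j(-i)}$ construction at this stage. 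Two points are worth noting. First, your version of the bound involves $\ell'$ evaluated at the \emph{leave-one-out} score $\x_i^\T\hat\bbeta_{-i}$ rather than at $\hat\bbeta^\T\x_i$ as in the paper; this is a real advantage for losses with unbounded derivative (e.g.\ the square loss), because independence of $\hat\bbeta_{-i}$ from $\x_i$ together with $\|\hat\bbeta_{-i}\|=O(1)$ makes the $O(1)$ concentration of the score, and hence the boundedness of the derivative, a clean Chebyshev argument --- whereas the paper's appeal to ``boundedness of $c_i$'' from $\|\hat\bbeta\|=O(1)$ alone is quicker than it looks, since Cauchy--Schwarz on the dependent pair only gives $\hat\bbeta^\T\x_i=O(\sqrt p)$ a priori. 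Second, what the paper's heavier derivation buys is the explicit resolvent identity \eqref{eq:hat_beta-beta_-i}, which is not discarded after the lemma but reused as \eqref{eq:diff beta} to drive the quadratic-form expansion \eqref{eq:diff R r} in the main proof; your argument establishes the lemma itself more economically but would still need that identity (or an equivalent) downstream.
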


\begin{proof}[Proof of \Cref{lem:bound_on_diff_hat_bbeta}]
We define first 
\begin{align*}
    R_j=&\hat\bbeta^\T\x_j,\\
    c_j=&-\ell'(\hat\bbeta^\T\x_j,y_j),
\end{align*}
for all $j\in\{1,\ldots,n\}$, and their leave-one-out versions
\begin{align*}
    R_{j(-i)}=&\hat\bbeta_{-i}^\T\x_j,\\
    c_{j(-i)}=&-\ell'(\hat\bbeta_{-i}^\T\x_j,y_j),
\end{align*}
for all $j\in\{1,\ldots,n\}$.

According to \Cref{ass:loss}, $\ell(\cdot,y)$ is continuously differentiable and has bounded second derivative except on a finite points of set, therefore there exists universal constant $K$ such that $\vert\ell'(t_1)-\ell'(t_2)\vert\leq K \vert t_1-t_2\vert$. 
As a result, for every pair of $i,j\in\{1,\ldots,n\}$, there exists a finite positive (due to the convexity of $\ell(\cdot,y)$) value $a_{j(-i)}$ such that
\begin{equation}\label{eq:def_a_j-i}
    c_j-c_{j(-i)}=-a_{j(-i)}\left(\hat\bbeta^\T\x_j-\hat\bbeta_{-i}^\T\x_j\right).
\end{equation}

Taking \eqref{eq:lambda-beta}$-$\eqref{eq:lambda-beta-loo}, we obtain
\begin{align*}
\lambda \hat \bbeta-\lambda \hat \bbeta_{-i}=&\frac1n c_i\x_i+\frac1n \sum_{j\neq i}  (c_j-c_{j(-i)})\x_j\\
=&\frac1n c_i\x_i-\left(\frac1n \sum_{j\neq i}a_{j(-i)}\x_j\x_j^\T\right)(\hat \bbeta- \hat \bbeta_{-i}).
\end{align*}

Therefore
\begin{equation*}
\left(\lambda\I_p+\frac1n \sum_{j\neq i}a_{j(-i)}\x_j\x_j^\T\right)( \hat \bbeta- \hat \bbeta_{-i})=\frac1n c_i\x_i,
\end{equation*}
and
\begin{equation}\label{eq:hat_beta-beta_-i}
 \hat \bbeta- \hat \bbeta_{-i}=\left(\lambda\I_p+\frac1n \sum_{j\neq i}a_{j(-i)}\x_j\x_j^\T\right)^{-1}\frac1n c_i\x_i.
\end{equation}

Since $\frac1n \sum_{j\neq i}a_{j(-i)}\x_j\x_j^\T$ is non-negative definite, all eigenvalues of $\left(\lambda\I_p+\frac1n \sum_{j\neq i}a_{j(-i)}\x_j\x_j^\T\right)$ are greater than or equal to $\lambda$, so that
\begin{equation*}
 \left\| \hat \bbeta- \hat \bbeta_{-i}\right\| \leq\frac{1}{\lambda n} c_i \| \x_i \| =O(p^{-\frac{1}{2}}),
\end{equation*}
where we use the fact that $\hat \bbeta$ has bounded norm as $n,p \to \infty$, which is easy to check for $\lambda>0$, to prove the boundedness of $c_i$.
This concludes the proof of \Cref{lem:bound_on_diff_hat_bbeta}.
\end{proof}

As a consequence of the proof of \Cref{lem:bound_on_diff_hat_bbeta}, we have, by \eqref{eq:def_a_j-i} that
\begin{align*}
    a_{j(-i)} = \ell''(\hat\bbeta_{-i}^\T\x_j,y)+O(p^{-\frac{1}{2}})
\end{align*}
where $\ell''(t,y) = \frac{\partial\ell'(t,y)}{\partial t}$.
This equation is however only valid when $\ell''(\cdot,y)$ exists, while in \Cref{ass:loss}, we allow $\ell''(\cdot,y)$ to not exist on a finite set of points. Actually, as the number $l$ of $R_i$ falling on this set of point is also finite, we can take $\ell''(t,y)=\frac{\partial_{-}\ell'(t,y)}{\partial_{-} t}$ without any asymptotic impact on our results. To see that $l$ is finite, we use \eqref{eq:lambda-beta} to establish $n$ linearly independent equations on $R_1,\ldots,R_n$:
\begin{equation*}
    \lambda R_i= -\frac1n \sum_{j=1}^n \ell'(R_j,y_j)\x_i^\T\x_j,\quad \forall i\in\{1,\ldots,n\}.
\end{equation*}
As $\x_1,\ldots,\x_n$ are i.i.d.\@ feature vectors drawn from the high-dimensional LFMM, the number of $R_i$ having the same value is finite with probability $1$ at large $p$.

With a slight abuse of notation, let us set from now on 
\begin{align*}
    a_{j(-i)}  = \ell''(\hat\bbeta_{-i}^\T\x_j,y).
\end{align*}
Then, \eqref{eq:hat_beta-beta_-i} writes
\begin{equation}
\label{eq:diff beta}
    \hat \bbeta- \hat \bbeta_{-i} = \frac1n c_i\G_{-i}^{-1} \x_i+O_{\Vert\cdot\Vert}(p^{-1}),
\end{equation}
where
\begin{equation*}
    \G_{-i}=\lambda\I_p+\frac1n \sum_{j\neq i}a_{j(-i)}\x_j\x_j^\T.
\end{equation*}
Notice that $\G_{(-i)}$ is independent of $(\x_i,y_i)$. For $R_i = \hat \bbeta^\T \x_i$ and $r_i=R_{i(-i)}=\hat\bbeta_{-i}^\T\x_i$ , we have, by the law of large numbers on $\x_i$ (recall that $\E[\x_i \x_i^\T] = \bmu \bmu^\T + \bSigma$), that
\begin{align}
    R_i- r_i = &\frac1n c_i \x_i^\T\G_{-i}^{-1}\x_i+O(p^{-\frac{1}{2}})=\frac1n c_i \e_i^\T\V^\T\G_{-i}^{-1}\V\e_i+O(p^{-\frac{1}{2}})\nonumber\\
    =&\frac1n c_i \tr\left(\G_{-i}^{-1}\bSigma\right)+O(p^{-\frac{1}{2}})\label{eq:diff R r},
\end{align}
where the last equality is a classical concentration result as a consequence of the independent entries in $\e_i$. It is understandable that $r_i$ is significantly different from $R_i$, as the latter is the predicted score of $\hat\bbeta$ on one of its training sample and the former the predicted score of $\hat\bbeta_{-i}$ on a test sample independent of its training set. 

Let us define 
\begin{equation*}
    \kappa_i =\frac1n\tr\left(\G_{-i}^{-1}\bSigma\right),\quad \kappa= \frac1n\tr\left(\G^{-1}\bSigma\right),
\end{equation*}
where
\begin{equation*}
    \G= \lambda\I_p+\frac1n \sum_{i=1}^na_{i}\x_i\x_i^\T,
\end{equation*}
with $a_j=\ell''(\hat\bbeta^\T\x_j,y)$. It follow from \eqref{eq:diff beta} that $\hat \bbeta^\T\x_j- \hat \bbeta_{-i}^\T\x_j=O(p^{-\frac{1}{2}})$, therefore $a_j=a_{j(-i)}+O(p^{-\frac{1}{2}})$. It is then easy to check that
\begin{equation*}
    \kappa_i =\kappa+O(p^{-\frac{1}{2}}).
\end{equation*}
We can thus rewrite \eqref{eq:diff R r} as
\begin{equation}
\label{eq:diff R r with kappa}
    R_i- r_i =\kappa c_i +O(p^{-\frac{1}{2}}).
\end{equation}
We arrive thus at the end of Step~1. At this point, we do not have access to the statistical behavior of $\kappa$, only the fact that it is independent of the data index $i$.

\bigskip

Recall
\begin{equation}
    h_{\kappa}(t,y) = (\prox_{\kappa, \ell(\cdot,y)} (t)-t)/\kappa,
\end{equation}
we obtain from \eqref{eq:diff R r with kappa}
\begin{equation*}
    c_i = h_{\kappa}(r_i,y_i) +O(p^{-\frac{1}{2}}).
\end{equation*}
It then follow from the above equation and \eqref{eq:lambda-beta} that
\begin{equation}
    \lambda\hat\bbeta = \frac1n \sum_{i=1}^nh_{\kappa}(r_i,y)\x_i+O_{\Vert\cdot\Vert}(p^{-\frac{1}{2}}).
\end{equation}
Set from now on 
\begin{equation*}
    c_i = h_{\kappa}(r_i,y_i),
\end{equation*}
and rewrite \eqref{eq:lambda-beta} as
\begin{equation}
\label{eq:lambda-beta with c}
    \lambda\hat\bbeta = \frac1n \sum_{i=1}^nc_i\x_i+O_{\Vert\cdot\Vert}(p^{-\frac{1}{2}}).
\end{equation}
We arrive thus at the end of Step~\ref{enum:proof_step_3}.

\bigskip

We will now demonstrate
\begin{equation}
\label{eq:convergence sum yc}
    \frac1n \sum_{i=1}^ny_ic_i=\E[y_ic_i]+O(p^{-\frac{1}{4}})
\end{equation}
by showing the variance of $\frac1n \sum_{i=1}^ny_ic_i$ is of order $O(p^{-\frac{1}{2}})$.

To do so, we need to introduce the definition of leave-two-out solution $\hat \bbeta_{-ij}$ obtained by removing not one but two training samples $(\x_i,y_i)$ and $(\x_j,y_j)$. The subscript $-ij$ is understood similarly to the  subscript $-i$, but associated with the statistical objects dependent of $\hat\bbeta_{-ij}$. 

Similarly to \eqref{eq:diff beta}, we have
\begin{equation}
\label{eq:diff beta-i}
    \hat \bbeta_{-i}- \hat \bbeta_{-ij} = \frac1n c_{(-i)j}\G_{-ij}^{-1} \x_j+O_{\Vert\cdot\Vert}(p^{-1}),
\end{equation}
where 
\begin{equation*}
    c_{(-i)j}= h_{\kappa}(r_{(-i)j},y), \quad r_{(-i)j}=\hat\bbeta_{-ij}^\T\x_j.
\end{equation*}
Multiplying \eqref{eq:diff beta-i} with $\x_i^\T$ from the left side, we get
\begin{equation*}
    r_{i}-r_{(-j)i}=\frac1n c_{(-i)j}\x_i^\T\G_{-ij}^{-1} \x_j+O(p^{-\frac{1}{2}})=O(p^{-\frac{1}{2}}).
\end{equation*}
Then for $i\neq j$, we observe from the above equation that
\begin{align*}
    \E[y_ic_iy_jc_j]&=\E[y_iy_jh_{\kappa}(r_i,y_i)h_{\kappa}(r_j,y_j)]\\
    &=\E[y_iy_jh_{\kappa}(r_{(-j)i},y_i)h_{\kappa}(r_{(-i)j},y_j)]+O(p^{-\frac{1}{2}}).
\end{align*}
Note importantly that, conditioned on $\hat\bbeta_{-ij}$, $r_{(-j)i}$ and $r_{(-i)j}$ are independent. We have thus
\begin{align*}
    \E[y_iy_jh_{\kappa}(r_{(-j)i},y_i)h_{\kappa}(r_{(-i)j},y_j)]&=\E\left[\E[y_iy_jh_{\kappa}(r_{(-j)i},y_i)h_{\kappa}(r_{(-i)j},y_j)\vert\hat\bbeta_{-ij} ]\right]\\
    &=\E\left[\E[y_ih_{\kappa}(r_{(-j)i},y_i)\vert\hat\bbeta_{-ij}]\E[y_jh_{\kappa}(r_{(-i)j},y_j)\vert\hat\bbeta_{-ij} ]\right]\\
    &=\E[y_ih_{\kappa}(r_{(-j)i},y_i)]\E[y_jh_{\kappa}(r_{(-i)j},y_j)].
\end{align*}
Since $h_{\kappa}(r_{(-i)j},y_j)=h_{\kappa}(r_{j},y_j)+O(p^{-\frac{1}{2}})$, we get from the above equation and the one before that 
\begin{equation*}
\E[y_ic_iy_jc_j]=\E[y_ic_i]\E[y_jc_j]+O(p^{-\frac{1}{2}}).
\end{equation*}
It follow directly that
\begin{equation}
\label{eq:var sum yc}
    \var\left[\frac1n \sum_{i=1}^ny_ic_i\right]=O(p^{-\frac{1}{2}}).
\end{equation}
Therefore
\begin{equation}
\label{eq:convergence eta}
\frac1n \sum_{i=1}^ny_ic_i= \eta+O(p^{-\frac{1}{4}}),\quad \text{with}\quad \eta\equiv \E[y_ic_i].
\end{equation}
We prove thus \eqref{eq:convergence eta}, which brings us to  the end of Step~\ref{enum:proof_step_4}.

\bigskip

By the same reasoning, we obtain also
\begin{equation}
 \frac1n \sum_{i=1}^nc_i[\e_i]_k=\phi_k+O(p^{-\frac{1}{4}}),\quad \text{with}\quad \phi_k\equiv \E[c_i[\e_i]_k].
 \label{eq:convergence phi}   
\end{equation}

To see that $$\phi_k\simeq 0,\forall k\in\{q+1,\ldots,p\},$$ we define first the leave-one-variable-out classifier $\hat\bbeta^{-k}$ as the solution to \eqref{eq:opt-origin-reg} on a training set generated from a slightly differently distribution than $\mathcal{D}_{(\x,y)}$, with $e_k$ constantly equal to zero. The superscript $-k$ is understood similarly to the subscript $-i$ in the statistical objects dependent of $\hat\bbeta_{-i}$, with their leave-one-variable-out version obtained by replacing $\hat\bbeta_{-i}$ with $\hat\bbeta^{-k}$. Similarly to the leave-one-out reasoning with respect to the data samples, the same asymptotic arguments can be applied to control the difference between $\hat\bbeta$ and $\hat\bbeta^{-k}$. In the same spirit as \eqref{eq:diff beta}, we have
\begin{equation}
\label{eq:diff beta loo-var}
    \hat\bbeta - \hat\bbeta^{-k}=\frac{1}{n}\G^{-k}\e_{[k]}+O_{\Vert\cdot\Vert}(p^{-1})
\end{equation}
where $\e_{[k]}\in\RR^n$ is a vector with its $i$-th element being $[\e_i]_k$, and $\G^{-k}$ a matrix of bounded norm with high probability and independent of $[\e_i]_k$.

Note first from \eqref{eq:lambda-beta with c} that 
\begin{align}
\label{eq:beta v noise}
\lambda\V_{\rm noise}\hat\bbeta&=\frac{1}{n}\sum_{i=1}^nc_i\V_{\rm noise}\x_i+O_{\Vert\cdot\Vert}(p^{-\frac{1}{2}})\nonumber\\
&=\frac{1}{n}\sum_{i=1}^nc_i\V_{\rm noise}\V_{\rm noise}^\T\tilde\e_i+O_{\Vert\cdot\Vert}(p^{-\frac{1}{2}})
\end{align}
where we have $\V_{\rm noise}\x_i=\V_{\rm noise}\V_{\rm noise}^\T\tilde\e_i$ according to the orthogonality between the signal subspace and the noise subspace stated in Item~(ii) of \Cref{ass:LFMM}. As the eigenvalues of $\V_{\rm noise}\V_{\rm noise}^\T$ are comparable to $1$ according to Item~(ii) of \Cref{ass:growth-rate}, we get
\begin{equation}
\label{eq:sum ce noise}
 \lambda\left(\V_{\rm noise}\V_{\rm noise}^\T\right)^{-1}\V_{\rm noise}\hat\bbeta=\frac{1}{n}\sum_{i=1}^nc_i\tilde\e_i+O_{\Vert\cdot\Vert}(p^{-\frac{1}{2}}).   
\end{equation}
Similarly, we have, for $\hat\bbeta^{-k}$, that
\begin{equation}
\label{eq:sum ce noise loo-var}
\lambda\left(\V_{\rm noise}\V_{\rm noise}^\T\right)^{-1}\V_{\rm noise}\hat\bbeta^{-k}=\frac{1}{n}\sum_{i=1}^nc_i\tilde\e_i^{-k}+O_{\Vert\cdot\Vert}(p^{-\frac{1}{2}}).
\end{equation}

Combining \eqref{eq:diff beta loo-var}, \eqref{eq:sum ce noise} and \eqref{eq:sum ce noise loo-var}, we obtain that, for $k\in\{q+1,\ldots,p\}$,
\begin{align}
\label{eq:E ce noise}
\phi_k=&\lambda \left[\left(\V_{\rm noise}\V_{\rm noise}^\T\right)^{-1}\V_{\rm noise}\E[\hat\bbeta]\right]_{k-q}+O(p^{-\frac{1}{2}})\nonumber\\
=&\lambda \left[\left(\V_{\rm noise}\V_{\rm noise}^\T\right)^{-1}\V_{\rm noise}\left(\E[\hat\bbeta^{-k}]-\frac{1}{n}\E[\G^{-k}\e_{[k]}]\right)\right]_{k-q}+O(p^{-\frac{1}{2}})\nonumber\\
=&O(p^{-\frac{1}{2}}),
\end{align}
where we used the fact that
$$\left[\left(\V_{\rm noise}\V_{\rm noise}^\T\right)^{-1}\V_{\rm noise}\E[\hat\bbeta^{-k}]\right]_{k-q}=\frac{1}{n}\sum_{i=1}^n\E[c_i[\tilde\e_i^{-k}]_{k-q}]=0,$$
since $[\tilde\e_i^{-k}]_{k-q}=[\e_i^{-k}]_k=0$ for all $i\in\{1,\ldots,n\}$ according to the definition of the leave-one-variable-out classifier $\hat\bbeta^{-k}$.

We arrive thus at the end of Step~\ref{enum:proof_step_5}.

\bigskip

Recall from \eqref{eq:E ce noise} that $\phi_k=O(p^{-\frac{1}{2}})$ for $k\in\{q+1,\ldots,p\}$, we have thus from \eqref{eq:convergence phi} that
\begin{equation*}
    \frac1n \sum_{i=1}^nc_i[\e_i]_k=O(p^{-\frac{1}{4}}),\quad \forall k\in\{q+1,\ldots,p\}.
\end{equation*} 
It follows then from \eqref{eq:beta v noise} that 
\begin{equation}
\label{eq:beta v}
\hat\bbeta^\T\v_k=O(p^{-\frac{1}{4}}),\quad \forall k\in\{q+1,\ldots,p\}.
\end{equation}

We observe then, for $k\in\{q+1,\ldots,p\}$,
\begin{align}
\frac1n \sum_{i=1}^nc_i[\e_i]_k=& \frac1n \sum_{i=1}^nh_{\kappa}(r_i,y_i)[\e_i]_k=\frac1n \sum_{i=1}^nh_{\kappa}\left(\sum_{k=1}^p(\hat\bbeta^\T\v_k)[\e_i]_k,y_i\right)[\e_i]_k\nonumber\\  
=&\frac1n \sum_{i=1}^nh_{\kappa}\left(\sum_{d\neq k}(\hat\bbeta^\T\v_d)[\e_i]_d,y_i\right)[\e_i]_k\nonumber\\
&+\frac1n \sum_{i=1}^n h'_{\kappa}\left(\sum_{d\neq k}(\hat\bbeta^\T\v_d)[\e_i]_d,y_i\right)(\hat\bbeta^\T\v_k)[\e_i]_k^2\\
&+\frac1n \sum_{i=1}^n h''_{\kappa}\left(\sum_{d\neq k}(\hat\bbeta^\T\v_d)[\e_i]_d,y_i\right)(\hat\bbeta^\T\v_k)^2[\e_i]_k^3+O(p^{-\frac{3}{4}}),\label{eq:decomp sum ce}
\end{align}
where we denote $h''_\kappa(r,y)=\frac{\partial h'_\kappa(r,y)}{\partial r}$.

We denote the first term by 
\begin{equation*}
    \epsilon_k=\frac1n \sum_{i=1}^nh_{\kappa}\left(\sum_{d\neq k}(\hat\bbeta^\T\v_d)[\e_i]_d,y_i\right)[\e_i]_k.
\end{equation*}
Note from \eqref{eq:diff beta loo-var} that
\begin{equation}
\label{eq:approx epsilon}
    \epsilon_k=\frac1n \sum_{i=1}^nh_{\kappa}\left(\sum_{d\neq k}(\v_d^\T\hat\bbeta^{-k})[\e_i]_d,y_i\right)[\e_i]_k+O(p^{-1}).
\end{equation}
Since $\hat\bbeta^{-k}$ is, by definition, independent of all $[\e_i]_k, i\in\{1,\ldots,n\}$,  we notice that all
\[
    h_{\kappa}\left(\sum_{d\neq k}(\v_d^\T\hat\bbeta^{-k})[\e_i]_d,y_i\right)[\e_i]_k, i\in\{1,\ldots,n\},
\]
are independent conditioned on $\{\e^{[d]}\}_{d\in\{q+1,\ldots,p\}\setminus k}$.We assess the conditional mean of $\epsilon_k$ as follows
\begin{align}
\E\left[\epsilon_k\vert\right\{\e^{[d]}\}_{d\in\{q+1,\ldots,p\}\setminus k}]=&\frac1n \sum_{i=1}^nh_{\kappa}\left(\sum_{d\neq k}(\v_d^\T\hat\bbeta^{-k})[\e_i]_d,y_i\right)\E\left[[\e_i]_k\right]+O(p^{-\frac{1}{2}})\nonumber\\
=&O(p^{-1}).
\label{eq:convergen mean u}
\end{align}
Similarly we have
\begin{align*}
\var\left[\epsilon_k\vert\right\{\e^{[d]}\}_{d\in\{q+1,\ldots,p\}\setminus k}]=&\frac{1}{n^2} \sum_{i=1}^nh_{\kappa}\left(\sum_{d\neq k}(\v_d^\T\hat\bbeta^{-k})[\e_i]_d,y_i\right)^2\E\left[[\e_i]_k^2\right]\\
=&\frac{1}{n} \E\left[h_{\kappa}\left(\sum_{d\neq k}(\v_d^\T\hat\bbeta^{-k})+O(p^{-\frac{5}{4}})[\e_i]_d,y_i\right)^2\right]+O(p^{-\frac{5}{4}}), 
\end{align*} 
where the second line is obtained by a similar reasoning to \eqref{eq:convergence eta}
We remark thus that the concentrations of the conditional mean $\E\left[\epsilon_k\vert\right\{\e^{[d]}\}_{d\in\{q+1,\ldots,p\}\setminus k}]$ and variance $\var\left[\epsilon_k\vert\right\{\e^{[d]}\}_{d\in\{q+1,\ldots,p\}\setminus k}]$ around the same limits:
\begin{align}
\E\left[\epsilon_k\vert\right\{\e^{[d]}\}_{d\in\{q+1,\ldots,p\}\setminus k}]=&O(p^{-1})\nonumber\\ 
\var\left[\epsilon_k\vert\right\{\e^{[d]}\}_{d\in\{q+1,\ldots,p\}\setminus k}]=&\frac{1}{n}\gamma^2+O(p^{-\frac{5}{4}}) ,\quad\text{with}\quad\gamma^2\equiv\E[c_i^2].\label{eq:convergence gamma}
\end{align}
In summary, when conditioned on $\{\e^{[d]}\}_{d\in\{q+1,\ldots,p\}\setminus k}$, the sum of independent random variables
\[
    \frac{1}{n}h_{\kappa}\left(\sum_{d\neq k}(\v_d^\T\hat\bbeta^{-k})[\e_i]_d,y_i\right)[\e_i]_k, i\in\{1,\ldots,n\},
\]
is of mean asymptotically $0$ and variance asymptotically equal to $\gamma^2$.
Then, by the central limit theorem, we have
\begin{equation}
\label{eq:convergen epsilon}
\sqrt{n}\epsilon_k/\gamma \overset{\rm d}{\to}\mathcal{N}(0,1),\quad \forall k\in\{q+1,\ldots,p\},
\end{equation}
in distribution as $n,p \to \infty$ at the same pace.

Now we wish to show that 
\begin{equation}
\label{eq:convergen u}
\sqrt{n}\sum_{k=q+1}^pt_k\epsilon_k/\gamma \overset{\rm d}{\to}\mathcal{N}(0,1)
\end{equation}
for any deterministic vector $\t=[t_{q+1},\ldots,t_p]^\T \in \RR^{p - q}$ of unit norm. 
To this end, we introduce the leave-two-variables-out solution $\hat \bbeta^{-kd}$ obtained similarly to $\hat \bbeta^{-k}$ but with both $e_k,e_d$ constantly set to $0$. The superscript $-kd$ is understood similarly to the  superscript $-k$, but associated with the statistical objects dependent of $\hat\bbeta_{-kd}$. It is easy to see that
\begin{equation*}
\E\left[\sqrt{n}\sum_{k=q+1}^pt_k\epsilon_k \right] = \sum_{k=q+1}^pt_k\E\left[\epsilon_k \right] =O(p^{-\frac{1}{2}}).
\end{equation*}
To approximate the variance of $\sqrt{n}\sum_{k=q+1}^pt_k\epsilon_k$, let us define first
\begin{equation}
\label{eq:epsilon loo-var}
\epsilon_{k_1}^{-k_2}=\frac1n \sum_{i=1}^nh_{\kappa}\left(\sum_{d\neq k_1,k_2}(\v_d^\T\hat\bbeta^{-k_2})[\e_i]_d,y_i\right)[\e_i]_{k_1},
\end{equation}
for which we have
\begin{equation}
\label{eq:approx epsilon loo-var}
    \epsilon_{k_1}=\epsilon_{k_1}^{-k_2}+(p^{-\frac{3}{4}})
\end{equation}
from \eqref{eq:diff beta loo-var} and \eqref{eq:beta v}.
Similarly to \eqref{eq:approx epsilon}, we have 
\begin{align*}
\epsilon_{k_1}^{-k_2}=\frac1n \sum_{i=1}^nh_{\kappa}\left(\sum_{d\neq k_1,k_2}(\v_d^\T\hat\bbeta^{-k_1k_2})[\e_i]_d,y_i\right)[\e_i]_{k_1}+O(p^{-1}).
\end{align*}
Therefore
\begin{align*}
    \epsilon_{k_1}=\frac1n \sum_{i=1}^nh_{\kappa}\left(\sum_{d\neq k_1,k_2}(\v_d^\T\hat\bbeta^{-k_1k_2})[\e_i]_d,y_i\right)[\e_i]_{k_1}+(p^{-\frac{3}{4}})\\
    \epsilon_{k_2}=\frac1n \sum_{i=1}^nh_{\kappa}\left(\sum_{d\neq k_1,k_2}(\v_d^\T\hat\bbeta^{-k_1k_2})[\e_i]_d,y_i\right)[\e_i]_{k_2}+(p^{-\frac{3}{4}}),
\end{align*}
where we notice that the random variable $\sum_{i=1}^nh_{\kappa}\left(\sum_{d\neq k_1,k_2}(\v_d^\T\hat\bbeta^{-k_1k_2})[\e_i]_d,y_i\right)[\e_i]_{k_1}$ is independent of $\sum_{i=1}^nh_{\kappa}\left(\sum_{d\neq k_1,k_2}(\v_d^\T\hat\bbeta^{-k_1k_2})[\e_i]_d,y_i\right)[\e_i]_{k_2}$.
We obtain thus 
\begin{align*}
    \var\left[\sqrt{n}\sum_{k=q+1}^pt_k\epsilon_k \right] &= n\sum_{k_1,k_2=q+1}^pt_k^2\E\left[\epsilon_{k_1}\epsilon_{k_2} \right]=n\sum_{k=q+1}^pt_k^2\E\left[\epsilon_{k}^2\right]+n\sum_{k_1\neq k_2=q+1}^pt_k^2\E\left[\epsilon_{k_1}\epsilon_{k_2}\right]\\
    &= \gamma^2+O(p^{-\frac{1}{4}}).
\end{align*}
To obtain \eqref{eq:convergen u}, it suffices now to demonstrate the asymptotic mutual independence of $\epsilon_{q+1},\ldots,\epsilon_q$ by showing that $\epsilon_k$ is asymptotically independent of $\{\epsilon_d\}_{d\neq k=q+1}$ for all $k\in\{q+1,\ldots,p\}$. Let $k=q+1$ without the loss of generality, observe from \eqref{eq:approx epsilon} and \eqref{eq:convergen epsilon} that
\begin{align*}
    \epsilon_{q+1}\simeq \frac1n \sum_{i=1}^nh_{\kappa}\left(\sum_{d\neq q+1}^p(\v_d^\T\hat\bbeta^{-(q+1)})[\e_i]_d,y_i\right)[\e_i]_{q+1},
\end{align*}
and recall from \eqref{eq:approx epsilon loo-var} that
\begin{align*}
    \begin{bmatrix}
        \epsilon_{q+2}\\
        \epsilon_{q+3}\\
        \vdots\\
        \epsilon_{p}
    \end{bmatrix}\simeq\begin{bmatrix}
        \epsilon_{q+2}^{-(q+1)}\\
        \epsilon_{q+3}^{-(q+1)}\\
        \vdots\\
        \epsilon_{p}^{-(q+1)}
    \end{bmatrix}.
\end{align*}
It is easy to see from \eqref{eq:epsilon loo-var} that $\epsilon_{q+2}^{-(q+1)},\ldots,\epsilon_{p}^{-(q+1)}$ is independent of $\e_{q+1}$. 
Therefore, we have that  $\frac1n \sum_{i=1}^nh_{\kappa}\left(\sum_{d\neq q+1}^p(\v_d^\T\hat\bbeta^{-(q+1)})[\e_i]_d,y_i\right)[\e_i]_{q+1}$ is independent of $\epsilon_{q+2}^{-(q+1)},\ldots,\epsilon_{p}^{-(q+1)}$. We obtain thus \eqref{eq:convergen u}.

Now we turn to the second term of \eqref{eq:decomp sum ce}. In a similar manner to \eqref{eq:convergence eta}, we have 
\begin{align*}
\frac1n \sum_{i=1}^n h'_{\kappa}\left(\sum_{d\neq k}(\hat\bbeta^\T\v_d)[\e_i]_d,y_i\right)[\e_i]_k^2=&\frac1n \sum_{i=1}^n h'_{\kappa}\left(\sum_{d\neq k}(\v_d^\T\hat\bbeta^{-k})[\e_i]_d,y_i\right)[\e_i]_k^2+O(p^{-1})\\
=&\E\left[h'_{\kappa}\left(\sum_{d\neq k}(\v_d^\T\hat\bbeta^{-k})[\e_i]_d,y_i\right)\right]\E\left[[\e_i]_k^2\right]+O(p^{-\frac{1}{4}}).
\end{align*}
Consequently,
\begin{equation}
\label{eq:convergence theta}
 \frac1n \sum_{i=1}^n h'_{\kappa}\left(\sum_{d\neq k}(\hat\bbeta^\T\v_d)[\e_i]_d,y_i\right)[\e_i]_k^2\hat\bbeta^\T\v_k=\left(-\theta+O(p^{-\frac{1}{4}})\right)\hat\bbeta^\T\v_k,\text{ with }\theta\equiv-\E[h_\kappa'(r_i,y_i)].
\end{equation}

To control the third term of \eqref{eq:decomp sum ce}, it suffices to prove that its second moment is of $o(p^{-\frac{1}{2}})$ by using the concentration arguments with the leave-one-variable-out manipulation as before, and the fact that $\hat\bbeta^\T\v_k=O(p^{-\frac{1}{4}})$ for $k\in\{q+1,\ldots,p\}$.

We arrive thus at the end of Step~\ref{enum:proof_step_6}.

\bigskip

Rewrite now \eqref{eq:lambda-beta with c} as
\begin{equation}
\label{eq:lambda-beta with c 2}
    \lambda\hat\bbeta = \frac1n \sum_{i=1}^nc_iy_i\bmu+\frac1n \sum_{i=1}^nc_i\V\e_i+O_{\Vert\cdot\Vert}(p^{-\frac{1}{2}}).
\end{equation}
Summarizing \eqref{eq:convergence eta}, \eqref{eq:convergence phi}, \eqref{eq:decomp sum ce}, \eqref{eq:convergen epsilon}, nd \eqref{eq:convergence theta}, we obtain
\begin{align*}
\lambda\hat\bbeta \simeq \eta\bmu+\sum_{k=1}^q\phi_k\v_k+\theta\V_{\rm noise}\V_{\rm noise}^\T\hat\bbeta+\V_{\rm noise}\bepsilon,
\end{align*}
with $\bepsilon=[\epsilon_{q+1},\ldots,\epsilon_p]^\T$.
Therefore
\begin{equation}
\label{eq:limit beta}
\hat\bbeta\simeq\left(\lambda\I_p+\theta\bSigma\right)^{-1}\left(\eta\bmu +\sum_{k=1}^{q} \omega_k\v_k +\V_{\rm noise}\bepsilon\right),
\end{equation}
where $\omega_k\equiv \phi_k+\theta\E[\hat\bbeta]^\T\v_k$, and 
\begin{equation*}
\sqrt{n}\t^\T\bepsilon/\gamma \overset{\rm d}{\to}\mathcal{N}(0,1)
\end{equation*}
for any deterministic vector $\t=[t_{q+1},\ldots,t_p]^\T \in \RR^{p -q}$ of unit norm according to \eqref{eq:convergen u}.
Similarly, we have, for the leave-one-out solution, that
\begin{equation}
\label{eq:limit beta loo}
\hat\bbeta_{-i}\simeq\left(\lambda\I_p+\theta\bSigma\right)^{-1}\left(\eta\bmu +\sum_{k=1}^{q} \omega_k\v_k +\V_{\rm noise}\bepsilon_{-i}\right)
\end{equation}
where $\bepsilon_{-i}=[\epsilon_{q+1(-i)},\ldots,\epsilon_{p(-i)}]^\T$ with
\begin{equation*}
    \epsilon_{k(-i)}=\frac1n \sum_{j\neq i}h_{\kappa}\left(\sum_{d\neq k}(\hat\bbeta_{-i}^\T\v_d)[\e_j]_d,y_j\right)[\e_j]_k.
\end{equation*}
We get from \eqref{eq:diff beta} that 
\begin{equation*}
    \epsilon_{k}-\epsilon_{k(-i)}\simeq \frac1n h_{\kappa}\left(\sum_{d\neq k}(\hat\bbeta^\T\v_d)[\e_i]_d,y_i\right)[\e_i]_k.
\end{equation*}
Hence
\begin{equation*}
    \hat\bbeta-\hat\bbeta_{-i}\simeq\left(\lambda\I_p+\theta\bSigma\right)^{-1}\frac{1}{n}\V_{\rm noise}c_i\tilde\e_i,
\end{equation*}
leading to
\begin{align*}
     \x_i^\T\hat\bbeta-\x_i^\T\hat\bbeta_{-i}\simeq&(y_i\bmu+\V\e_i)^\T\left(\lambda\I_p+\theta\bSigma\right)^{-1}\frac{1}{n}\V_{\rm noise}c_i\tilde\e_i\\
     \simeq&\frac{c_i}{n}\tr\bSigma(\lambda\I_p+\theta\bSigma).
\end{align*}
Comparing the above equation with \eqref{eq:diff R r with kappa}, we observe
\begin{equation*}
    \kappa\simeq\frac{1}{n}\tr\bSigma(\lambda\I_p+\theta\bSigma).
\end{equation*}
We are now at the end of Step~\ref{enum:proof_step_7}.
\bigskip

It is easy to see from \eqref{eq:limit beta loo} that
\begin{equation*}
    r_i=\hat\bbeta_{-i}^\T\x_i\simeq y_im+\sum_{k=1}^p\psi_k[\e_i]_k+\sigma\tilde e,
\end{equation*}
where $\tilde e$ a random variable independent of $[\e_i]_1,\ldots,[\e_i]_q$ with
$$\tilde e\overset{\rm d}{\to}\mathcal{N}(0,1),$$
and
\begin{align*}
   m = \bmu^\T\Q\bxi,\quad \textstyle  \sigma^2=\frac{\gamma^2}p \tr \left(\Q\bSigma \right)^2,\quad \psi_k=\v_k^\T\Q\bxi,\quad \forall k\in\{1,\ldots,q\},
\end{align*}
with
\begin{equation*}
    \Q=\left( \lambda\I_p+\theta\bSigma\right)^{-1},\quad \bxi=\eta\bmu+\sum_{k=1}^q\omega_k\v_k.
\end{equation*}

We obtain thus the system of equations in \eqref{eq:theta eta gamma} from \eqref{eq:convergence eta}, \eqref{eq:convergence phi}, \eqref{eq:convergence gamma} and \eqref{eq:convergence theta}, which gives access to the values of $\theta,\eta,\gamma,\omega_1,\ldots,\omega_q$.

Set 
\begin{equation}
\label{eq:def tilde beta}
    \tilde\bbeta\equiv \left(\lambda\I_p+\theta\bSigma\right)^{-1}\left(\eta\bmu +\sum_{k=1}^{q} \omega_k\v_k +\gamma\bSigma^{\frac{1}{2}}\u\right)
\end{equation}
where $\u \sim\mathcal{N}(\zeros_p,\I_p/n)$.
We obtain \eqref{eq:convergence test score} from \eqref{eq:def tilde beta} and \eqref{eq:limit beta} by a simple application of CLT, and \eqref{eq:convergence training score} from \eqref{eq:def tilde beta}, \eqref{eq:limit beta loo} and \eqref{eq:diff R r with kappa}.

This concludes the proof of \Cref{theo:main}.

\subsection{Proofs of Corollaries}
\label{subsec:proof_of_coro}

\subsubsection{Proof of \Cref{cor:performance}}
\label{subsubsec:proof_of_cor_performance}

Recall from \Cref{theo:main} that 
\begin{equation*}
   \tilde \bbeta = \left( \lambda\I_p+\theta\bSigma\right)^{-1}\left(\eta\bmu +\sum_{k=1}^{q} \omega_k\v_k +\gamma\bSigma^{\frac{1}{2}}\u\right), 
\end{equation*}
for Gaussian vector $\u\sim\mathcal{N}(\zeros_p,\I_p/n)$ independent of $\{ (\x_i, y_i )\}_{i=1}^n$.
For $(\x,y)\sim\mathcal{D}_{(\x,y)}$ independent of $\tilde\bbeta$, we recall from \Cref{def:linear_factor} that
\begin{equation*}
    \x=y\bmu+\V\e,
\end{equation*}
with $\e=[e_1,\ldots,e_p]^\T$. Then, let $\V_{\rm noise}=[\v_{q+1},\ldots,\v_p]$ and $\tilde\e=[e_{q+1},\ldots,e_p]^\T$, we have
\begin{equation*}
    \tilde \bbeta^\T\x=ym+\sum_{k=1}^q\psi_ke_k+\bxi^\T\Q\V_{\rm noise}\tilde\e+\gamma\u^\T\bSigma^{\frac{1}{2}}\Q\V_{\rm noise}\tilde\e,
\end{equation*}
with $m,\psi_1,\ldots,\psi_q$ as given in \eqref{eq: m sigma},
and $\Q,\bxi$ as in \eqref{eq:bxi Q}.

Note importantly that 
\begin{equation*}
    \bxi^\T\Q\v_k=0,\forall k\in\{q+1,\ldots,p\}
\end{equation*}
due to the orthogonality of ${\rm Span}\{\v_1,\ldots,\v_q\}$ to ${\rm Span}\{\v_{q+1},\ldots,\v_p\}$ stated in Item(ii)~of~\Cref{ass:LFMM}

As $\u,\tilde\e$ are independent random vectors of independent entries, we have, by CLT, that
\begin{align*}
\frac{\gamma\u^\T\bSigma^{\frac{1}{2}}\Q\V_{\rm noise}\tilde\e}{\sqrt{\var\left[\gamma\u^\T\bSigma^{\frac{1}{2}}\Q\V_{\rm noise}\tilde\e\right]}}&\overset{\rm d}{\to}\mathcal{N}(0,1).
\end{align*}
Remark also that
\begin{align*}
\var\left[\gamma\u^\T\bSigma^{\frac{1}{2}}\Q\V_{\rm noise}\tilde\e\right]=\frac{\gamma^2}{n}\tr\left(\bSigma\Q\V_{\rm noise}\V_{\rm noise}^\T\Q\right)\simeq\frac{\gamma^2}{n}\tr\left(\bSigma\Q\right)^2.
\end{align*}
We thus obtain \eqref{eq:convergence tildebeta x} in \Cref{cor:performance}.

From \eqref{eq:convergence test score} in \Cref{theo:main}, we have
\begin{equation*}
    \Pr (y'\hat\bbeta^\T\x'>0\vert (\x',y')) - \Pr (y'\tilde\bbeta^\T\x'>0\vert (\x',y')) \to 0.
\end{equation*}
Taking expectation over $(\x',y')\sim\mathcal{D}_{(\x,y)}$, we get directly from the above equation that
\begin{equation*}
    \Pr (y'\hat\bbeta^\T\x'>0) - \Pr (y'\tilde\bbeta^\T\x'>0) \to 0.
\end{equation*}
It follows straightforwardly from \eqref{eq:convergence tildebeta x} that
\begin{equation*}
\Pr (y'\tilde\bbeta^\T\x'>0)-    \Pr (yr>0)  \to 0,
\end{equation*}
leading to \eqref{eq:generalization error}.

Similarly, we obtain \eqref{eq:training error} from \eqref{eq:convergence tildebeta x} and \eqref{eq:convergence training score}, which concludes the proof.

\subsubsection{Proof of \Cref{cor:condition for r}}
\label{subsubsec:proof_of_condition for r}

It is easy to see that, when $e_1,\ldots,e_q$ are normally distributed, the random variable $r$ defined in \eqref{eq:def_r} follows a Gaussian distribution $\mathcal{N}(m,\sigma^2+\sum_{k=1}^q\psi_k^2)$. For $r\sim\mathcal{N}(m,\sigma^2+\sum_{k=1}^q\psi_k^2)$ with $m,\sigma,\psi_1,\ldots,\psi_q$ as given in \eqref{eq: m sigma}, we observe that the system of equations in \eqref{eq:theta eta gamma} is invariant to the distributions of $e_1,\ldots,e_p$, thus yielding the same values of $\theta,\eta,\gamma,\omega_1,\ldots,\omega_q$, as well as the same $\kappa,m,\sigma^2,\psi_1,\ldots,\psi_q$.

Therefore, with Gaussian $e_1,\ldots,e_q$, $r$ follows a universal distribution $\mathcal{N}(m,\sigma^2+\sum_{k=1}^q\psi_k^2)$ independent of the distributions of $e_{q+1},\ldots,e_p$. 
We have also the same universality result on the distribution of $\prox_{\kappa, \ell(\cdot,y)} (r)$ as the value of $\kappa$ is also insensitive to the distributions of $e_{q+1},\ldots,e_p$.

Combining the above universal arguments on the distributions of $r$ and $\prox_{\kappa, \ell(\cdot,y)} (r)$ with \Cref{cor:performance}, we prove that the Gaussian universality of in-distribution performance in \Cref{def:gaussian_universality}
holds if $e_1,\ldots,e_q$ are Gaussian variables.

It remains to demonstrate the breakdown of Gaussian universality on in-distribution performance if $e_1,\ldots,e_q$ are non-Gaussian.

Note first $\Vert\hat\bbeta_{\ell,\lambda}\Vert=\Theta(1)$ with $\lambda=\Theta(1)$. The boundedness of $\Vert\hat\bbeta_{\ell,\lambda} \|$ in the large $n,p$ limit  is easily justified from the regularized optimization penalty \eqref{eq:opt-origin-reg} with $\lambda>0$. Recall also 
\begin{equation*}
    \lambda \hat \bbeta = -\frac1n \sum_{i=1}^n \ell'(\hat\bbeta^\T\x_i,y_i)\x_i,
\end{equation*}
from which we observe that, to ensure $\Vert\hat\bbeta_{\ell,\lambda}\Vert=o(1)$, we need $\ell'(\hat\bbeta^\T\x_i,y_i)=o(1)$. However when $\Vert\hat\bbeta_{\ell,\lambda}\Vert=o(1)$, we have $\ell'(\hat\bbeta^\T\x_i,y_i)\simeq\ell'(0,y_i)=\Theta(1)$. We thus get $\Vert\hat\bbeta_{\ell,\lambda}\Vert=\Theta(1)$ by contradiction. Consequently, we have also $\Vert\tilde\bbeta_{\ell,\lambda}\Vert=\Theta(1)$ for the high-dimensional equivalent $\tilde\bbeta$ given in \Cref{theo:main}. Since $\tilde\bbeta^\T\x$ for $\x\sim\mathcal{D}_{(\x,y)}$ independent of $\tilde\bbeta$ has asymptotically the same distribution as $r$ in \eqref{eq:def_r} according to \Cref{cor:performance}, it follows from $\Vert\tilde\bbeta_{\ell,\lambda}\Vert=\Theta(1)$ that $r=\Theta(1)$. Therefore $\eta=\E[h_\kappa(r,y)]=\Theta(1)$.

Let us reorganize the expression \eqref{eq:tilde beta} as
\begin{equation}
\label{eq:tilde beta 2}
\lambda\tilde\bbeta=\eta\bmu+\sum_{k=1}^q\phi_k\v_k-\theta\V_{\rm noise}\V_{\rm noise}^\T\tilde\bbeta+\gamma\bSigma^{\frac{1}{2}}\u,
\end{equation}
with 
$$\phi_k=\omega_k-\theta\v_k^\T\Q\bxi=\E[h_\kappa(r,y)e_k],\quad \forall k\in\{1,\ldots,q\}.$$
Recall from \eqref{eq:h} that
\begin{equation*}
    h_\kappa(t,y) = (\prox_{\kappa, \ell(\cdot,y)} (t)-t)/\kappa,
\end{equation*}
where $\prox_{\tau, f} (t)=\argmin_{a \in \RR} \left[ f(a) + \frac{1}{2\tau} (a - t)^2 \right]$ for $\tau > 0$ and convex $f\colon \RR \to \RR$. 
Due to the convexity of $\ell(\cdot,y)$ in \Cref{ass:loss}, $h_\kappa(\cdot,y)$ is a decreasing function. As $e_1,\ldots,e_q$ are standardized variables of symmetric distribution according to \Cref{def:linear_factor}, we have
\begin{equation*}
    \E\left[h_\kappa(r,y)e_k\Big\vert \{e_d\}_{d\in\{1,\ldots,q\}\setminus k},\tilde e,y\right]=\int_{-\infty}^{+\infty}h_\kappa\bigg(ym+\sigma\tilde e+\sum_{d\in\{1,\ldots,q\}\setminus k} \psi_d e_d+\psi_k e_k,y\bigg)e_kP_{e_k}(de_k),
\end{equation*}
where $P_{e_k}$ is the probability measure of $e_k$. As $e_k$ is a centered variable of symmetric probability distribution, we have
\begin{align*}
\int_{-\infty}^{+\infty}&h_\kappa\bigg(ym+\sigma\tilde e+\sum_{d\in\{1,\ldots,q\}\setminus k} \psi_d e_d+\psi_k e_k,y\bigg)e_kP_{e_k}(de_k)\\
=\int_{0}^{+\infty}&h_\kappa\bigg(ym+\sigma\tilde e+\sum_{d\in\{1,\ldots,q\}\setminus k} \psi_d e_d+\psi_k e_k,y\bigg)e_kP_{e_k}(de_k)\\
-\int_{0}^{+\infty}&h_\kappa\bigg(ym+\sigma\tilde e+\sum_{d\in\{1,\ldots,q\}\setminus k} \psi_d e_d-\psi_k e_k,y\bigg)e_kP_{e_k}(de_k).
\end{align*}
As $h_\kappa(t,y)$ decreases with $t$, there exists a positive value $a>0$ such that
\begin{equation*}
    h_\kappa\bigg(ym+\sigma\tilde e+\sum_{d\in\{1,\ldots,q\}\setminus k} \psi_d e_d+\psi_k e_k,y\bigg)-h_\kappa\bigg(ym+\sigma\tilde e+\sum_{d\in\{1,\ldots,q\}\setminus k} \psi_d e_d-\psi_k e_k,y\bigg)=-2a\psi_k e_k.
\end{equation*}
In the end, we have
\begin{equation*}
    \phi_k=\E[h_\kappa(r,y)e_k]=-\alpha_k\psi_k
\end{equation*}
where $\alpha_k>0$ for all $k\in\{1,\ldots,q\}$.

Plugging in the above expression of $\phi_k$, we rewrite \eqref{eq:tilde beta 2} as
\begin{equation*}
 \lambda\tilde\bbeta=\eta\bmu-\sum_{k=1}^q\alpha_k\psi_k\v_k-\theta\V_{\rm noise}\V_{\rm noise}^\T\tilde\bbeta+\gamma\bSigma^{\frac{1}{2}}\u.
\end{equation*}
Taking expectation at the both sides of the above equation, we get
\begin{equation*}
    \lambda\Q\bxi =\eta\bmu-\sum_{k=1}^q\alpha_k\psi_k\v_k-\theta\V_{\rm noise}\V_{\rm noise}^\T\tilde\bbeta.
\end{equation*}
Therefore
\begin{equation*}
    \lambda\V_{\rm info}^\T\Q\bxi=\lambda\bpsi =\eta\V_{\rm info}^\T\V_{\rm info}\s-\sum_{k=1}^q\V_{\rm info}^\T\V_{\rm info}{\rm diag}(\alpha_1,\ldots,\alpha_q)\bpsi,
\end{equation*}
where $\V_{\rm info}=[\v_1,\ldots,\v_q]$, $\bpsi=[\psi_1,\ldots,\psi_q]^\T$ and $\s=[s_1,\ldots,s_q]$. As $\V_{\rm info}$ and $\s$ are both deterministic with no presumed relation, we consider them to independent in some probability space. We obtain thus 
\begin{equation*}
    \bpsi=\eta\left(\lambda\I_q+\V_{\rm info}^\T\V_{\rm info}{\rm diag}(\alpha_1,\ldots,\alpha_q)\right)^{-1}\V_{\rm info}^\T\V_{\rm info}\s=\Theta(1).
\end{equation*}

Therefore $r$ is non-Gaussian unless in the case of Gaussian $e_1,\ldots,e_q$, leading to the breakdown of in-distribution performance in \Cref{def:gaussian_universality}.

\subsubsection{Proof of \Cref{cor:condition for beta}}
\label{subsubsec:proof_of_condition for beta}

As discussed in \Cref{subsubsec:proof_of_condition for r}, the system of equations in \eqref{eq:theta eta gamma}, which determines the distribution of $\tilde\bbeta$, is universal in the case of normally distributed $e_1,\ldots,e_q$. We obtain directly the Gaussian universality of classifier in \Cref{def:gaussian_universality} under the condition of Gaussian $e_1,\ldots,e_q$

Note importantly that when $\partial\ell(\hat y,y)/\partial\hat y$ is a linear function of $\hat y$ of form
\begin{equation*}
    \partial\ell(\hat y,y)/\partial\hat y=a\hat y +b(y) 
\end{equation*}
for some constant $a>0$ (due to the convexity of $\ell(\cdot,y)$) and $b(y)$ independent of $\hat y$, we have
\begin{equation*}
    \prox_{\kappa, \ell(\cdot,y)}(\hat y)=\frac{\hat y-\kappa b(y) }{1+\kappa a},
\end{equation*}
which leads to
\begin{equation*}
    h(\hat y,y)=\frac{\prox_{\kappa, \ell(\cdot,y)} (\hat y)-\hat y}{\kappa}=\frac{-a\hat y -b(y)}{1+\kappa a}.
\end{equation*}

Recall from \eqref{eq:def_r} that
\begin{equation*}
       r = ym +\sigma \tilde e+\sum_{k=1}^q \psi_k e_k.
\end{equation*}
The equations in \eqref{eq:theta eta gamma} thus become
\begin{align*}
 &\theta=-\E[\partial h_\kappa(r,y)/\partial r]=\frac{a}{1+\kappa a},\quad  \eta=\E[yh_\kappa(r,y)]=\frac{-am -\E[yb(y)]}{1+\kappa a},\quad \\
&\gamma=\sqrt{\E[h_\kappa(r,y)^2]}=\frac{\sqrt{a^2\left(m^2+\sigma^2+\sum_{k=1}^q\psi^2\right)+\E[b(y)^2]-2am\E[yb(y)]}}{1+\kappa a}\\
&\omega_k=\E[h_\kappa(r,y) e_k]+ \theta \cdot \v_k^\T\Q\bxi=\frac{-a\psi_k}{1+\kappa a}+\theta \cdot \v_k^\T\Q\bxi,
\end{align*}
which are independent of the distributions of the noise variables $e_1,\ldots,e_p$. We prove thus the Gaussian universality of classifier in \Cref{def:gaussian_universality} when $\partial\ell(\hat y,y)/\partial\hat y$ is a linear function of $\hat y$.

Conversely, when $\partial\ell(\hat y,y)/\partial\hat y$ is a nonlinear function of $\hat y$, $h_\kappa(r,y)$ is also a nonlinear function of $r$. Consequently, the values of $\theta,\eta,\gamma,\omega_1,\ldots,\omega_q$ depend on the higher-order moments of $e_1,\ldots,e_q$ besides the first two, thus leading to the breakdown of Gaussian universality on classifier in the presence of non-Gaussian $e_1,\ldots,e_q$.


\section{Experiments on Real Data}
\label{sec:real_data}

In this section, we report experimental results on Fashion-MNIST image data~\citep{xiao2017fashion} to show how the conditions of Gaussian universality provided in Corollaries~\ref{cor:condition for r}~and~\ref{cor:condition for beta} can be used to \emph{understand and predict} Gaussian universality phenomena on real data learning problems.

We have discussed two types of universality in this paper: universality on \textbf{in-distribution performance} and universality on \textbf{classifier} (see \Cref{def:gaussian_universality} for more details).
To discuss these two types of universality, we distinguish, depending on the Gaussianity of informative factors and the use of square loss, the following three scenarios: 
\begin{enumerate}
    \item \textbf{Scenario~1}: in the case of non-Gaussian informative factors and when a non-square loss is used, \emph{neither} the universality on in-distribution \emph{nor} the universality on classifier holds;
    \item \textbf{Scenario~2}: in the case of non-Gaussian informative factors and when a square loss is used, the universality on in-distribution breaks down while the universality on classifier still holds;
    \item \textbf{Scenario~3}: in the case of Gaussian informative factors and when an arbitrary (square or non-square) loss is used, \emph{both} the universality on in-distribution \emph{and} the universality on classifier hold.
\end{enumerate}

To see if these three scenarios derived under LFMM can be ``reproduced'' on realistic Fashion-MNIST image data, we conduct first a principle component analysis (PCA) on standardized Fashion-MNIST data to extract the informative factors. 
To obtain the equivalent GMM in \Cref{def:equivalent GMM} for a mixture of Fashion-MNIST data, we estimate the class mean and the class covariance for each class of Fashion-MNIST, using all available samples in that class.

Here, we consider the following two cases to illustrate the different effects of Gaussian and non-Gaussian informative factors on ERM classification: 
\begin{enumerate}
    \item \textbf{Case~1}: Classes 4\&5 of Fashion-MNIST data, as an example of \emph{non-Gaussian} informative factors; and 
    \item \textbf{Case~2}: Classes 3\&7 of Fashion-MNIST data, for which \emph{approximately Gaussian} informative factors can be observed.
\end{enumerate}
In \Cref{fig:information_factor_4_5} and \Cref{fig:information_factor_3_7} we compare, for the aforementioned two cases, the (empirical) distributions of the first two informative factors obtained from PCA.
We  observe that the informative factors of Classes 4\&5 have highly asymmetric distributions, corresponding to a strong deviation from the Gaussianity, while the distribution of informative factors in Classes 3\&7 are much closer to the form of normal density function in comparison.

\begin{figure}[htb]
\centering
\begin{subfigure}[t]{0.45\textwidth}
\centering
  \begin{tikzpicture}[font=\footnotesize]
    \renewcommand{\axisdefaulttryminticks}{4} 
    \pgfplotsset{every axis legend/.append style={cells={anchor=west},fill=white, at={(0.98,0.98)}, anchor=north east, font=\scriptsize }}
    \begin{axis}[
      ybar,
      width=.9\linewidth,
      ymin=0,
      ymax=1.8,
      bar width=3pt,
      grid=major,
      ymajorgrids=false,
      scaled ticks=true,
      ylabel={Histogram }
      ]
      \addplot+[ybar,mark=none,draw=white,fill=BLUE!60!white,area legend] coordinates{
      (-0.543138, 0.017581)(-0.486257, 0.026371)(-0.429376, 0.070323)(-0.372496, 0.123065)(-0.315615, 0.172876)(-0.258734, 0.228548)(-0.201853, 0.219758)(-0.144973, 0.257850)(-0.088092, 0.319382)(-0.031211, 0.319382)(0.025669, 0.404355)(0.082550, 0.407285)(0.139431, 0.439516)(0.196312, 0.618253)(0.253192, 0.691505)(0.310073, 0.770618)(0.366954, 0.966936)(0.423835, 1.122231)(0.480715, 1.289248)(0.537596, 1.558817)(0.594477, 1.696533)(0.651357, 1.681882)(0.708238, 1.567608)(0.765119, 1.306828)(0.822000, 0.761828)(0.878880, 0.372124)(0.935761, 0.143575)(0.992642, 0.017581)(1.049522, 0.005860)(1.106403, 0.002930)
      };
    \end{axis}
  \end{tikzpicture}
\caption{First informative factor of Class $4$}
\end{subfigure}%
~ 
\begin{subfigure}[t]{0.45\textwidth}
\centering
\begin{tikzpicture}[font=\footnotesize]
    \renewcommand{\axisdefaulttryminticks}{4} 
    \pgfplotsset{every axis legend/.append style={cells={anchor=west},fill=white, at={(0.98,0.98)}, anchor=north east, font=\scriptsize }}
    \begin{axis}[
      ybar,
      width=.9\linewidth,
      ymin=0,
      ymax=2,
      bar width=3pt,
      grid=major,
      ymajorgrids=false,
      scaled ticks=true,
      ylabel={ }
      ]
      \addplot+[ybar,mark=none,draw=white,fill=BLUE!60!white,area legend] coordinates{
      (-0.539284, 0.060609)(-0.498036, 0.359615)(-0.456789, 0.585890)(-0.415541, 0.727312)(-0.374293, 0.888937)(-0.333045, 0.897018)(-0.291797, 0.925303)(-0.250550, 0.880856)(-0.209302, 0.993993)(-0.168054, 1.078846)(-0.126806, 1.345527)(-0.085559, 1.628371)(-0.044311, 1.693021)(-0.003063, 1.818280)(0.038185, 1.648574)(0.079432, 1.543518)(0.120680, 1.507152)(0.161928, 1.276837)(0.203176, 1.288959)(0.244424, 0.993993)(0.285671, 0.840449)(0.326919, 0.501037)(0.368167, 0.351534)(0.409415, 0.206072)(0.450662, 0.117178)(0.491910, 0.036366)(0.533158, 0.032325)(0.574406, 0.012122)(0.615653, 0.000000)(0.656901, 0.004041)
      };
    \end{axis}
  \end{tikzpicture}
\caption{Second informative factor of Class $4$}
\end{subfigure}
\\
\begin{subfigure}[t]{0.45\textwidth}
\centering
\begin{tikzpicture}[font=\footnotesize]
    \renewcommand{\axisdefaulttryminticks}{4} 
    \pgfplotsset{every axis legend/.append style={cells={anchor=west},fill=white, at={(0.98,0.98)}, anchor=north east, font=\scriptsize }}
    \begin{axis}[
      ybar,
      width=.9\linewidth,
      ymin=0,
      ymax=4,
      bar width=3pt,
      grid=major,
      ymajorgrids=false,
      scaled ticks=true,
      ylabel={Histogram }
      ]
      \addplot+[ybar,mark=none,draw=white,fill=BLUE!60!white,area legend] coordinates{
      (-0.711830, 0.018068)(-0.674933, 0.198752)(-0.638037, 0.948591)(-0.601140, 2.398580)(-0.564243, 3.704022)(-0.527346, 3.591094)(-0.490449, 3.243277)(-0.453552, 2.818670)(-0.416656, 2.443751)(-0.379759, 1.964938)(-0.342862, 1.549365)(-0.305965, 1.156377)(-0.269068, 0.822112)(-0.232171, 0.587223)(-0.195275, 0.438159)(-0.158378, 0.334265)(-0.121481, 0.243923)(-0.084584, 0.180684)(-0.047687, 0.121962)(-0.010790, 0.085825)(0.026106, 0.103893)(0.063003, 0.040654)(0.099900, 0.018068)(0.136797, 0.022585)(0.173694, 0.022585)(0.210591, 0.009034)(0.247487, 0.009034)(0.284384, 0.013551)(0.321281, 0.009034)(0.358178, 0.004517)
      };
    \end{axis}
  \end{tikzpicture}
\caption{First informative factor of Class $5$}
\end{subfigure}
~ 
\begin{subfigure}[t]{0.45\textwidth}
\centering
\begin{tikzpicture}[font=\footnotesize]
    \renewcommand{\axisdefaulttryminticks}{4} 
    \pgfplotsset{every axis legend/.append style={cells={anchor=west},fill=white, at={(0.98,0.98)}, anchor=north east, font=\scriptsize }}
    \begin{axis}[
      ybar,
      width=.9\linewidth,
      ymin=0,
      ymax=3,
      bar width=3pt,
      grid=major,
      ymajorgrids=false,
      scaled ticks=true,
      ylabel={Histogram }
      ]
      \addplot+[ybar,mark=none,draw=white,fill=BLUE!60!white,area legend] coordinates{
      (-0.262735, 0.085744)(-0.227747, 0.371557)(-0.192759, 1.381428)(-0.157771, 2.496098)(-0.122784, 2.810492)(-0.087796, 2.529443)(-0.052808, 2.377009)(-0.017820, 2.034034)(0.017168, 1.895891)(0.052156, 1.805384)(0.087144, 1.857783)(0.122132, 1.633896)(0.157120, 1.233758)(0.192108, 1.190886)(0.227096, 1.024162)(0.262084, 0.981290)(0.297072, 0.647842)(0.332060, 0.528754)(0.367048, 0.428719)(0.402036, 0.314394)(0.437024, 0.295340)(0.472012, 0.185778)(0.507000, 0.100034)(0.541987, 0.142906)(0.576975, 0.052399)(0.611963, 0.057163)(0.646951, 0.071453)(0.681939, 0.028581)(0.716927, 0.014291)(0.751915, 0.004764)
      };
    \end{axis}
  \end{tikzpicture}
\caption{Second informative factor of Class $5$}
\end{subfigure}
\caption{{ Histogram of the first and second information factors of Class $4$ and $5$, estimated using all samples from the Fashion-MNIST dataset. }}
\label{fig:information_factor_4_5}
\end{figure}
%
%
%
\begin{figure}[htb]
\centering
\begin{subfigure}[t]{0.45\textwidth}
\centering
  \begin{tikzpicture}[font=\footnotesize]
    \renewcommand{\axisdefaulttryminticks}{4} 
    \pgfplotsset{every axis legend/.append style={cells={anchor=west},fill=white, at={(0.98,0.98)}, anchor=north east, font=\scriptsize }}
    \begin{axis}[
      ybar,
      width=.9\linewidth,
      ymin=0,
      ymax=2,
      bar width=3pt,
      grid=major,
      ymajorgrids=false,
      scaled ticks=true,
      ylabel={Histogram }
      ]
      \addplot+[ybar,mark=none,draw=white,fill=BLUE!60!white,area legend] coordinates{
      (-0.181625, 0.008835)(-0.143898, 0.030924)(-0.106170, 0.097189)(-0.068443, 0.159036)(-0.030716, 0.282730)(0.007012, 0.348995)(0.044739, 0.653814)(0.082466, 0.689155)(0.120194, 0.728914)(0.157921, 0.998391)(0.195649, 1.024897)(0.233376, 1.117668)(0.271103, 1.272286)(0.308831, 1.418069)(0.346558, 1.312045)(0.384285, 1.506422)(0.422013, 1.643369)(0.459740, 1.563851)(0.497468, 1.643369)(0.535195, 1.837746)(0.572922, 1.939352)(0.610650, 1.696381)(0.648377, 1.630116)(0.686104, 1.223692)(0.723832, 0.843773)(0.761559, 0.490360)(0.799286, 0.260642)(0.837014, 0.057430)(0.874741, 0.022088)(0.912469, 0.004418)
      };
    \end{axis}
  \end{tikzpicture}
\caption{First informative factor of Class $3$}
\end{subfigure}%
~ 
\begin{subfigure}[t]{0.45\textwidth}
\centering
\begin{tikzpicture}[font=\footnotesize]
    \renewcommand{\axisdefaulttryminticks}{4} 
    \pgfplotsset{every axis legend/.append style={cells={anchor=west},fill=white, at={(0.98,0.98)}, anchor=north east, font=\scriptsize }}
    \begin{axis}[
      ybar,
      width=.9\linewidth,
      ymin=0,
      ymax=2,
      bar width=3pt,
      grid=major,
      ymajorgrids=false,
      scaled ticks=true,
      ylabel={ }
      ]
      \addplot+[ybar,mark=none,draw=white,fill=BLUE!60!white,area legend] coordinates{
      (-0.601771, 0.003206)(-0.549790, 0.038476)(-0.497810, 0.141078)(-0.445829, 0.420028)(-0.393848, 0.705390)(-0.341868, 1.016403)(-0.289887, 1.183132)(-0.237906, 1.372305)(-0.185926, 1.567891)(-0.133945, 1.567891)(-0.081964, 1.776301)(-0.029984, 1.535827)(0.021997, 1.478114)(0.073978, 1.330623)(0.125958, 1.138244)(0.177939, 0.981134)(0.229920, 0.769517)(0.281900, 0.593169)(0.333881, 0.561106)(0.385862, 0.375139)(0.437842, 0.259712)(0.489823, 0.182760)(0.541804, 0.092983)(0.593784, 0.070539)(0.645765, 0.041682)(0.697746, 0.019238)(0.749726, 0.009619)(0.801707, 0.000000)(0.853688, 0.003206)(0.905668, 0.003206)
      };
    \end{axis}
  \end{tikzpicture}
\caption{Second informative factor of Class $3$}
\end{subfigure}
\\
\begin{subfigure}[t]{0.45\textwidth}
\centering
\begin{tikzpicture}[font=\footnotesize]
    \renewcommand{\axisdefaulttryminticks}{4} 
    \pgfplotsset{every axis legend/.append style={cells={anchor=west},fill=white, at={(0.98,0.98)}, anchor=north east, font=\scriptsize }}
    \begin{axis}[
      ybar,
      width=.9\linewidth,
      ymin=0,
      ymax=7,
      xmax=0,
      bar width=4pt,
      grid=major,
      ymajorgrids=false,
      scaled ticks=true,
      ylabel={Histogram }
      ]
      \addplot+[ybar,mark=none,draw=white,fill=BLUE!60!white,area legend] coordinates{
      (-0.599992, 0.023330)(-0.571416, 0.110818)(-0.542841, 0.565753)(-0.514265, 2.029711)(-0.485690, 4.275224)(-0.457114, 6.444914)(-0.428539, 6.538234)(-0.399963, 5.715852)(-0.371388, 3.954436)(-0.342812, 2.426321)(-0.314237, 1.370638)(-0.285661, 0.723230)(-0.257086, 0.291625)(-0.228510, 0.192473)(-0.199935, 0.110818)(-0.171359, 0.069990)(-0.142784, 0.040828)(-0.114208, 0.017498)(-0.085633, 0.017498)(-0.057057, 0.029163)(-0.028482, 0.005833)(0.000094, 0.023330)(0.028669, 0.011665)(0.057245, 0.000000)(0.085820, 0.000000)(0.114396, 0.000000)(0.142971, 0.000000)(0.171547, 0.000000)(0.200122, 0.000000)(0.228698, 0.005833)
      };
    \end{axis}
  \end{tikzpicture}
\caption{First informative factor of Class $7$}
\end{subfigure}
~ 
\begin{subfigure}[t]{0.45\textwidth}
\centering
\begin{tikzpicture}[font=\footnotesize]
    \renewcommand{\axisdefaulttryminticks}{4} 
    \pgfplotsset{every axis legend/.append style={cells={anchor=west},fill=white, at={(0.98,0.98)}, anchor=north east, font=\scriptsize }}
    \begin{axis}[
      ybar,
      width=.9\linewidth,
      ymin=0,
      ymax=2.5,
      bar width=3pt,
      grid=major,
      ymajorgrids=false,
      scaled ticks=true,
      ylabel={Histogram }
      ]
      \addplot+[ybar,mark=none,draw=white,fill=BLUE!60!white,area legend] coordinates{
      (-0.508044, 0.015444)(-0.464876, 0.011583)(-0.421709, 0.092663)(-0.378542, 0.162160)(-0.335374, 0.312737)(-0.292207, 0.517367)(-0.249040, 0.679527)(-0.205872, 1.034734)(-0.162705, 1.378358)(-0.119538, 1.806924)(-0.076371, 1.961361)(-0.033203, 2.007693)(0.009964, 1.903447)(0.053131, 1.803063)(0.096299, 1.602293)(0.139466, 1.490326)(0.182633, 1.382219)(0.225800, 1.131258)(0.268968, 1.038595)(0.312135, 0.938210)(0.355302, 0.583003)(0.398470, 0.478758)(0.441637, 0.281849)(0.484804, 0.227796)(0.527972, 0.177604)(0.571139, 0.084941)(0.614306, 0.042470)(0.657473, 0.011583)(0.700641, 0.003861)(0.743808, 0.003861)
      };
    \end{axis}
  \end{tikzpicture}
\caption{Second informative factor of Class $7$}
\end{subfigure}
\caption{{ Histogram of the first and second information factors of Class $3$ and $7$, estimated using all samples from the Fashion-MNIST dataset. }}
\label{fig:information_factor_3_7}
\end{figure}

Figures~\ref{fig:diff_loss_F-MNIST}~to~\ref{fig:real_VS_Gaussian_equiv_3_7_classifier} then provide empirical results on these two cases to demonstrate the universal or non-universal behavior with respect to the in-distribution performance and the ERM classifier, under the three scenarios on informative factors and loss function listed at the beginning of this section. 

We discuss first Scenario~1 with non-Gaussian informative factors (Case~1 on Classes 4\&5) and non-square losses. Under this scenario, the in-distribution performance is predicted, as per \Cref{cor:condition for r}, to be different from that under the equivalent GMM, as can be observed in the middle and right plots of \Cref{fig:real_VS_Gaussian_equiv_4_5_in_dist}. 
According to \Cref{cor:condition for beta}, the universality on classifier does \emph{not} hold in this case either. 
In other words, the classifier trained on Fashion-MNIST data and the one trained on data drawn from the equivalent GMM give different performances on the \emph{same} test Fashion-MNIST data. 
This is empirically manifested in middle and right plots of \Cref{fig:real_VS_Gaussian_equiv_4_5_classifier} and suggests an effective learning using non-square losses from high-order moment information beyond the class mean and covariance.

It is interesting to compare Scenario~1 with Scenario~2, where we use square and non-square losses on non-Gaussian informative factors. 
In Scenario~2, while we still do \emph{not} have a universal in-distribution performance as evidenced in the left column of \Cref{fig:real_VS_Gaussian_equiv_4_5_in_dist}, the classifier trained on equivalent GMM data gives practically the \emph{same} performance on test Fashion-MNIST data as the classifier trained on realistic Fashion-MNIST data, as shown in the left plot of \Cref{fig:real_VS_Gaussian_equiv_4_5_classifier}.
This means that the square loss \emph{fails} to learn from Fashion-MNIST data beyond the information contained in the equivalent GMM (i.e., the class mean and covariance). 
Consequently, the square loss yields suboptimal performance as shown in the left plot of \Cref{fig:diff_loss_F-MNIST}.

In Scenario~3 with (approximately) Gaussian informative factors (Case~2 on Classes 3\&7), the two types of Gaussian universality (on in-distribution performance and on classifier) hold for any choice of loss function. 
The universality on in-distribution performance is demonstrated in \Cref{fig:real_VS_Gaussian_equiv_3_7_in_dist}, where we observe a much closer match between the in-distribution performance on Fashion-MNIST and the equivalent GMM, in comparison with the drastically different in-distribution performances reported in \Cref{fig:real_VS_Gaussian_equiv_4_5_in_dist} for Case~1 on Classes 4\&5 with non-Gaussian informative factors. 
The universality on classifier that holds for any loss (square or non-square) in this scenario is empirically confirmed by comparing \Cref{fig:real_VS_Gaussian_equiv_3_7_classifier} with \Cref{fig:real_VS_Gaussian_equiv_4_5_classifier}. 
Otherwise speaking, in this case, Fashion-MNIST data are treated by the ERM classifier \emph{as if they were Gaussian mixture data}. 
As a result, we predict that there is little room in trying to do better than the square loss (that is identified by \cite{taheri2021sharp,mai2019high} to be the optimal loss under GMM). 
This is consistent with the empirical performances given by different losses displayed in the right plot of \Cref{fig:diff_loss_F-MNIST}.

\medskip

As a side note, the empirical universal results provided in Figure~2 of \citep{dandi2024universality} also involved Fashion-MNIST data. However as \cite{dandi2024universality} trained the classifier on \emph{synthetic data} generated by a conditional GAN learned from the Fashion-MNIST dataset, these experimental results are not directly comparable to ours, which were obtained from a direct training on Fashion-MNIST data.

\begin{figure}
\centering
\begin{tabular}{cc}
\begin{tikzpicture}[font=\footnotesize]
\renewcommand{\axisdefaulttryminticks}{4} 
\pgfplotsset{every axis legend/.append style={cells={anchor=west},fill=white, at={(0,0)}, anchor=south west, font=\scriptsize}}
\pgfplotsset{/pgfplots/error bars/error bar style={thick}}
\pgfplotsset{every axis plot/.append style={thick},}
\begin{axis}[
height=0.35\linewidth,
width=0.45\linewidth,
ymin=0.97,
ymax=0.995,
xmin=0.0019,
xmax=32,
xmode=log,
log basis x ={2},
grid=major,
ymajorgrids=false,
scaled ticks=true,
xlabel={$\lambda$},
ylabel={ Classification Accuracy ($\%$)  },
]
\addplot[color = BLUE, mark=x, mark size= 1.7pt,error bars/.cd, y dir=both,y explicit] coordinates {
(0.001953,0.970435)(0.003906,0.971436)(0.007812,0.973047)(0.015625,0.975684)(0.031250,0.980078)(0.062500,0.984033)(0.125000,0.987793)(0.250000,0.989868)(0.500000,0.990894)(1.000000,0.990234)(2.000000,0.988110)(4.000000,0.983765)(8.000000,0.976440)(16.000000,0.969043)(32.000000,0.963086)(64.000000,0.958252)      
      };
\addlegendentry{{ Square loss }};
\addplot[color = GREEN, mark=triangle, mark size= 1.7pt,error bars/.cd, y dir=both,y explicit] coordinates {
(0.001953,0.990283)(0.003906,0.989771)(0.007812,0.989185)(0.015625,0.988623)(0.031250,0.987891)(0.062500,0.986499)(0.125000,0.984985)(0.250000,0.981958)(0.500000,0.978149)(1.000000,0.973242)(2.000000,0.967969)(4.000000,0.963989)(8.000000,0.960083)(16.000000,0.956689)(32.000000,0.954492)(64.000000,0.952344)   
      };
\addlegendentry{{ Logistic loss }};
\addplot[color = RED, mark=o, mark size= 1.7pt,error bars/.cd, y dir=both,y explicit] coordinates {
(0.001953,0.993042)(0.003906,0.993042)(0.007812,0.993042)(0.015625,0.992969)(0.031250,0.992896)(0.062500,0.992822)(0.125000,0.992749)(0.250000,0.992432)(0.500000,0.991431)(1.000000,0.989404)(2.000000,0.986255)(4.000000,0.980347)(8.000000,0.973877)(16.000000,0.967822)(32.000000,0.962842)(64.000000,0.958228)
      };
      \addlegendentry{{ Square hinge loss }};
      \end{axis}
      \end{tikzpicture}
&
\begin{tikzpicture}[font=\footnotesize]
\renewcommand{\axisdefaulttryminticks}{4} 
\pgfplotsset{every axis legend/.append style={cells={anchor=east},fill=white, at={(1,0)}, anchor=south east, font=\scriptsize}}
\pgfplotsset{/pgfplots/error bars/error bar style={thick}}
\pgfplotsset{every axis plot/.append style={thick},}
\begin{axis}[
height=0.35\linewidth,
width=0.45\linewidth,
ymin=0.98,
ymax=1,
xmin=0.0019,
xmax=64,
xmode=log,
log basis x ={2},
grid=major,
ymajorgrids=false,
scaled ticks=true,
ytick={0.98,0.99,1},
xlabel={$\lambda$},
ylabel={},
]
\addplot[color = BLUE, mark=x, mark size= 1.7pt,error bars/.cd, y dir=both,y explicit] coordinates {
(0.001953,0.981421)(0.003906,0.982129)(0.007812,0.983643)(0.015625,0.985742)(0.031250,0.988672)(0.062500,0.991846)(0.125000,0.994873)(0.250000,0.997192)(0.500000,0.998315)(1.000000,0.998926)(2.000000,0.998975)(4.000000,0.999097)(8.000000,0.999048)(16.000000,0.998926)(32.000000,0.998071)(64.000000,0.997070)    
      };
\addlegendentry{{ Square loss }};
\addplot[color = GREEN, mark=triangle, mark size= 1.7pt,error bars/.cd, y dir=both,y explicit] coordinates {
(0.001953,0.999121)(0.003906,0.999121)(0.007812,0.999097)(0.015625,0.999048)(0.031250,0.999048)(0.062500,0.999048)(0.125000,0.999023)(0.250000,0.998999)(0.500000,0.998975)(1.000000,0.998877)(2.000000,0.998364)(4.000000,0.997998)(8.000000,0.997412)(16.000000,0.996655)(32.000000,0.995923)(64.000000,0.995215)  
      };
\addlegendentry{{ Logistic loss }};
\addplot[color = RED, mark=o, mark size= 1.7pt,error bars/.cd, y dir=both,y explicit] coordinates {
(0.001953,0.999072)(0.003906,0.999072)(0.007812,0.999072)(0.015625,0.999072)(0.031250,0.999072)(0.062500,0.999072)(0.125000,0.999072)(0.250000,0.999072)(0.500000,0.999072)(1.000000,0.999072)(2.000000,0.999048)(4.000000,0.999023)(8.000000,0.998950)(16.000000,0.998462)(32.000000,0.997998)(64.000000,0.997070)
      };
      \addlegendentry{{ Square hinge loss }};
      \end{axis}
      \end{tikzpicture}
   \end{tabular}
   \caption{ Classification accuracies as a function of the regularization penalty, for square, logistic, and square hinge loss, on Fashion-MNIST data of sample size $n = 512$.
    \textbf{Left}: Class $4$ versus $5$, as an example of non-Gaussian information factors showed in \Cref{fig:information_factor_4_5}.
    \textbf{Right}: Class $3$ versus $7$, as an example of (close-to) Gaussian information factors showed in \Cref{fig:information_factor_3_7}. }
    \label{fig:diff_loss_F-MNIST}
\end{figure} 
\begin{figure}
\centering
\begin{tabular}{ccc}
\begin{tikzpicture}[font=\footnotesize]
\renewcommand{\axisdefaulttryminticks}{4} 
\pgfplotsset{every axis legend/.append style={cells={anchor=west},fill=white, at={(0,0)}, anchor=south west, font=\scriptsize}}
\pgfplotsset{/pgfplots/error bars/error bar style={thick}}
\pgfplotsset{every axis plot/.append style={thick},}
\begin{axis}[
height=0.33\linewidth,
width=0.33\linewidth,
ymin=0.95,
ymax=1,
xmin=0.0019,
xmax=512,
xmode=log,
log basis x ={2},
grid=major,
ymajorgrids=false,
scaled ticks=true,
xlabel={$\lambda$},
ylabel={ Classification Accuracy ($\%$)  },
]
\addplot[color = BLUE, mark=x, mark size= 1.7pt,error bars/.cd, y dir=both,y explicit] coordinates {
(0.001953,0.967505)(0.003906,0.968628)(0.007812,0.970288)(0.015625,0.973120)(0.031250,0.977197)(0.062500,0.981177)(0.125000,0.985059)(0.250000,0.987231)(0.500000,0.988623)(1.000000,0.988208)(2.000000,0.986377)(4.000000,0.982324)(8.000000,0.977075)(16.000000,0.970215)(32.000000,0.964380)(64.000000,0.958472)(128.000000,0.955444)(256.000000,0.953516)(512.000000,0.952271)
      };
\addlegendentry{{ Fashion-MNIST }};
\addplot[color = RED, mark=triangle, mark size= 1.7pt,error bars/.cd, y dir=both,y explicit] coordinates {
(0.001953,0.976294)(0.003906,0.977124)(0.007812,0.978931)(0.015625,0.981787)(0.031250,0.985376)(0.062500,0.989478)(0.125000,0.993091)(0.250000,0.994897)(0.500000,0.996094)(1.000000,0.996143)(2.000000,0.995654)(4.000000,0.993994)(8.000000,0.991187)(16.000000,0.987305)(32.000000,0.982983)(64.000000,0.978589)(128.000000,0.976172)(256.000000,0.974487)(512.000000,0.973462)
      };
\addlegendentry{{ Equivalent GMM }};
      \end{axis}
      \end{tikzpicture}
&
\begin{tikzpicture}[font=\footnotesize]
\renewcommand{\axisdefaulttryminticks}{4} 
\pgfplotsset{every axis legend/.append style={cells={anchor=west},fill=white, at={(0,0)}, anchor=south west, font=\scriptsize}}
\pgfplotsset{/pgfplots/error bars/error bar style={thick}}
\pgfplotsset{every axis plot/.append style={thick},}
\begin{axis}[
height=0.33\linewidth,
width=0.33\linewidth,
ymin=0.95,
ymax=1,
xmin=0.0019,
xmax=512,
xmode=log,
log basis x ={2},
grid=major,
ymajorgrids=false,
scaled ticks=true,
xlabel={$\lambda$},
ylabel={},
]
\addplot[color = BLUE, mark=x, mark size= 1.7pt,error bars/.cd, y dir=both,y explicit] coordinates {
(0.001953,0.988354)(0.003906,0.987842)(0.007812,0.987280)(0.015625,0.986475)(0.031250,0.985645)(0.062500,0.984351)(0.125000,0.982764)(0.250000,0.980469)(0.500000,0.976636)(1.000000,0.972559)(2.000000,0.968384)(4.000000,0.963696)(8.000000,0.959473)(16.000000,0.956226)(32.000000,0.954224)(64.000000,0.952563)(128.000000,0.951758)(256.000000,0.951001)(512.000000,0.950684)
      };
\addlegendentry{{ Fashion-MNIST }};
\addplot[color = RED, mark=triangle, mark size= 1.7pt,error bars/.cd, y dir=both,y explicit] coordinates {
(0.001953,0.994092)(0.003906,0.993774)(0.007812,0.993579)(0.015625,0.993262)(0.031250,0.992896)(0.062500,0.992334)(0.125000,0.991406)(0.250000,0.990210)(0.500000,0.988550)(1.000000,0.986401)(2.000000,0.984180)(4.000000,0.981812)(8.000000,0.979370)(16.000000,0.977148)(32.000000,0.975293)(64.000000,0.974146)(128.000000,0.973340)(256.000000,0.972876)(512.000000,0.972656)
      };
\addlegendentry{{ Equivalent GMM }};
\end{axis}
\end{tikzpicture}
&
\begin{tikzpicture}[font=\footnotesize]
\renewcommand{\axisdefaulttryminticks}{4} 
\pgfplotsset{every axis legend/.append style={cells={anchor=west},fill=white, at={(0,0)}, anchor=south west, font=\scriptsize}}
\pgfplotsset{/pgfplots/error bars/error bar style={thick}}
\pgfplotsset{every axis plot/.append style={thick},}
\begin{axis}[
height=0.33\linewidth,
width=0.33\linewidth,
ymin=0.95,
ymax=.995,
xmin=0.0019,
xmax=512,
xmode=log,
log basis x ={2},
grid=major,
ymajorgrids=false,
scaled ticks=true,
xlabel={$\lambda$},
ylabel={},
]
\addplot[color = BLUE, mark=x, mark size= 1.7pt,error bars/.cd, y dir=both,y explicit] coordinates {
(0.001953,0.991211)(0.003906,0.991211)(0.007812,0.991211)(0.015625,0.991284)(0.031250,0.991235)(0.062500,0.991187)(0.125000,0.991016)(0.250000,0.990771)(0.500000,0.990259)(1.000000,0.988672)(2.000000,0.984644)(4.000000,0.979468)(8.000000,0.973975)(16.000000,0.967798)(32.000000,0.962769)(64.000000,0.957080)(128.000000,0.954395)(256.000000,0.952393)(512.000000,0.951196)
      };
\addlegendentry{{ Fashion-MNIST }};
\addplot[color = RED, mark=triangle, mark size= 1.7pt,error bars/.cd, y dir=both,y explicit] coordinates {
(0.001953,0.994629)(0.003906,0.994604)(0.007812,0.994580)(0.015625,0.994604)(0.031250,0.994580)(0.062500,0.994556)(0.125000,0.994434)(0.250000,0.994141)(0.500000,0.993872)(1.000000,0.992993)(2.000000,0.991357)(4.000000,0.989478)(8.000000,0.987061)(16.000000,0.983765)(32.000000,0.980981)(64.000000,0.978223)(128.000000,0.976001)(256.000000,0.974609)(512.000000,0.973608)
      };
\addlegendentry{{ Equivalent GMM }};
\end{axis}
\end{tikzpicture}
\end{tabular}
\caption{ \textbf{In-distribution} classification accuracies as a function of the regularization penalty $\gamma$, for Fashion-MNIST data (Class $4$ versus $5$, as an illustrating example of non-Gaussian informative factors as shown in \Cref{fig:information_factor_4_5}) and Equivalent GMM of sample size $n = 512$, with square (\textbf{left}), logistic (\textbf{middle}), and square hinge (\textbf{right}) losses. }
\label{fig:real_VS_Gaussian_equiv_4_5_in_dist}
\end{figure} 
\begin{figure}
\centering
\begin{tabular}{ccc}
\begin{tikzpicture}[font=\footnotesize]
\renewcommand{\axisdefaulttryminticks}{4} 
\pgfplotsset{every axis legend/.append style={cells={anchor=west},fill=white, at={(0,0)}, anchor=south west, font=\scriptsize}}
\pgfplotsset{/pgfplots/error bars/error bar style={thick}}
\pgfplotsset{every axis plot/.append style={thick},}
\begin{axis}[
height=0.33\linewidth,
width=0.33\linewidth,
xmin=0.0019,
xmax=512,
xmode=log,
log basis x ={2},
grid=major,
ymajorgrids=false,
scaled ticks=true,
ytick={0.99,1},
xlabel={$\lambda$},
ylabel={ Classification Accuracy ($\%$)  },
]
\addplot[color = BLUE, mark=x, mark size= 1.7pt,error bars/.cd, y dir=both,y explicit] coordinates {
(0.001953,0.981787)(0.003906,0.982275)(0.007812,0.983545)(0.015625,0.985840)(0.031250,0.988745)(0.062500,0.992090)(0.125000,0.995264)(0.250000,0.997412)(0.500000,0.998560)(1.000000,0.998975)(2.000000,0.999243)(4.000000,0.999341)(8.000000,0.999268)(16.000000,0.998950)(32.000000,0.997974)(64.000000,0.996802)(128.000000,0.995825)(256.000000,0.994702)(512.000000,0.994165)
      };
\addlegendentry{{ Fashion-MNIST }};
\addplot[color = RED, mark=triangle, mark size= 1.7pt,error bars/.cd, y dir=both,y explicit] coordinates {
(0.001953,0.987573)(0.003906,0.988281)(0.007812,0.989624)(0.015625,0.991260)(0.031250,0.993408)(0.062500,0.995654)(0.125000,0.997754)(0.250000,0.998877)(0.500000,0.999438)(1.000000,0.999634)(2.000000,0.999658)(4.000000,0.999561)(8.000000,0.999243)(16.000000,0.998706)(32.000000,0.997681)(64.000000,0.996143)(128.000000,0.994922)(256.000000,0.994189)(512.000000,0.993677)
      };
\addlegendentry{{ Equivalent GMM }};
      \end{axis}
      \end{tikzpicture}
&
\begin{tikzpicture}[font=\footnotesize]
\renewcommand{\axisdefaulttryminticks}{4} 
\pgfplotsset{every axis legend/.append style={cells={anchor=west},fill=white, at={(0,0)}, anchor=south west, font=\scriptsize}}
\pgfplotsset{/pgfplots/error bars/error bar style={thick}}
\pgfplotsset{every axis plot/.append style={thick},}
\begin{axis}[
height=0.33\linewidth,
width=0.33\linewidth,
xmin=0.0019,
xmax=512,
xmode=log,
log basis x ={2},
grid=major,
ymajorgrids=false,
scaled ticks=true,
ytick={0.99,1},
xlabel={$\lambda$},
ylabel={},
]
\addplot[color = BLUE, mark=x, mark size= 1.7pt,error bars/.cd, y dir=both,y explicit] coordinates {
(0.001953,0.999170)(0.003906,0.999170)(0.007812,0.999170)(0.015625,0.999146)(0.031250,0.999121)(0.062500,0.999121)(0.125000,0.999048)(0.250000,0.998950)(0.500000,0.998853)(1.000000,0.998535)(2.000000,0.998096)(4.000000,0.997485)(8.000000,0.996582)(16.000000,0.995142)(32.000000,0.994507)(64.000000,0.993628)(128.000000,0.992993)(256.000000,0.992773)(512.000000,0.992651)
      };
\addlegendentry{{ Fashion-MNIST }};
\addplot[color = RED, mark=triangle, mark size= 1.7pt,error bars/.cd, y dir=both,y explicit] coordinates {
(0.001953,0.999512)(0.003906,0.999463)(0.007812,0.999438)(0.015625,0.999414)(0.031250,0.999390)(0.062500,0.999292)(0.125000,0.999170)(0.250000,0.998999)(0.500000,0.998828)(1.000000,0.998413)(2.000000,0.998022)(4.000000,0.997461)(8.000000,0.996851)(16.000000,0.996387)(32.000000,0.995752)(64.000000,0.994995)(128.000000,0.994385)(256.000000,0.994141)(512.000000,0.994019)
      };
\addlegendentry{{ Equivalent GMM }};
\end{axis}
\end{tikzpicture}
&
\begin{tikzpicture}[font=\footnotesize]
\renewcommand{\axisdefaulttryminticks}{4} 
\pgfplotsset{every axis legend/.append style={cells={anchor=west},fill=white, at={(0,0)}, anchor=south west, font=\scriptsize}}
\pgfplotsset{/pgfplots/error bars/error bar style={thick}}
\pgfplotsset{every axis plot/.append style={thick},}
\begin{axis}[
height=0.33\linewidth,
width=0.33\linewidth,
xmin=0.0019,
xmax=512,
xmode=log,
log basis x ={2},
grid=major,
ymajorgrids=false,
scaled ticks=true,
ytick={0.99,1},
xlabel={$\lambda$},
ylabel={},
]
\addplot[color = BLUE, mark=x, mark size= 1.7pt,error bars/.cd, y dir=both,y explicit] coordinates {
(0.001953,0.999341)(0.003906,0.999341)(0.007812,0.999341)(0.015625,0.999341)(0.031250,0.999341)(0.062500,0.999341)(0.125000,0.999365)(0.250000,0.999390)(0.500000,0.999390)(1.000000,0.999316)(2.000000,0.999292)(4.000000,0.999268)(8.000000,0.999146)(16.000000,0.998828)(32.000000,0.998071)(64.000000,0.997095)(128.000000,0.995850)(256.000000,0.994995)(512.000000,0.994653)
      };
\addlegendentry{{ Fashion-MNIST }};
\addplot[color = RED, mark=triangle, mark size= 1.7pt,error bars/.cd, y dir=both,y explicit] coordinates {
(0.001953,0.999707)(0.003906,0.999707)(0.007812,0.999707)(0.015625,0.999707)(0.031250,0.999707)(0.062500,0.999707)(0.125000,0.999707)(0.250000,0.999707)(0.500000,0.999683)(1.000000,0.999658)(2.000000,0.999609)(4.000000,0.999390)(8.000000,0.998950)(16.000000,0.998511)(32.000000,0.997803)(64.000000,0.996606)(128.000000,0.995190)(256.000000,0.994312)(512.000000,0.993774)
      };
\addlegendentry{{ Equivalent GMM }};
\end{axis}
\end{tikzpicture}
\end{tabular}
\caption{ \textbf{In-distribution} classification accuracies as a function of the regularization penalty $\gamma$, for Fashion-MNIST data (Class $3$ versus $7$, as an illustrating example of close-to-Gaussian informative factors as shown in \Cref{fig:information_factor_3_7}) and Equivalent GMM of sample size $n = 512$, with square (\textbf{left}), logistic (\textbf{middle}), and square hinge (\textbf{right}) losses. }
\label{fig:real_VS_Gaussian_equiv_3_7_in_dist}
\end{figure} 
\begin{figure}
\centering
\begin{tabular}{ccc}
\begin{tikzpicture}[font=\footnotesize]
\renewcommand{\axisdefaulttryminticks}{4} 
\pgfplotsset{every axis legend/.append style={cells={anchor=west},fill=white, at={(0,0)}, anchor=south west, font=\scriptsize}}
\pgfplotsset{/pgfplots/error bars/error bar style={thick}}
\pgfplotsset{every axis plot/.append style={thick},}
\begin{axis}[
height=0.33\linewidth,
width=0.33\linewidth,
ymin=0.95,
ymax=.99,
xmin=0.0019,
xmax=512,
xmode=log,
log basis x ={2},
grid=major,
ymajorgrids=false,
scaled ticks=true,
xlabel={$\lambda$},
ylabel={ Classification Accuracy ($\%$)  },
]
\addplot[color = BLUE, mark=x, mark size= 1.7pt,error bars/.cd, y dir=both,y explicit] coordinates {
(0.001953,0.969116)(0.003906,0.970190)(0.007812,0.971729)(0.015625,0.974658)(0.031250,0.978418)(0.062500,0.982202)(0.125000,0.985669)(0.250000,0.987915)(0.500000,0.988501)(1.000000,0.988159)(2.000000,0.986475)(4.000000,0.981787)(8.000000,0.975195)(16.000000,0.968896)(32.000000,0.963135)(64.000000,0.957788)(128.000000,0.954639)(256.000000,0.952686)(512.000000,0.951514)
      };
\addlegendentry{{ Fashion-MNIST }};
\addplot[color = RED, mark=triangle, mark size= 1.7pt,error bars/.cd, y dir=both,y explicit] coordinates {
(0.001953,0.969775)(0.003906,0.970923)(0.007812,0.972607)(0.015625,0.974951)(0.031250,0.979053)(0.062500,0.983350)(0.125000,0.986816)(0.250000,0.989038)(0.500000,0.989990)(1.000000,0.989258)(2.000000,0.987305)(4.000000,0.982837)(8.000000,0.976343)(16.000000,0.969409)(32.000000,0.963477)(64.000000,0.958032)(128.000000,0.954688)(256.000000,0.953003)(512.000000,0.951660)
      };
\addlegendentry{{ Equivalent GMM }};
      \end{axis}
      \end{tikzpicture}
&
\begin{tikzpicture}[font=\footnotesize]
\renewcommand{\axisdefaulttryminticks}{4} 
\pgfplotsset{every axis legend/.append style={cells={anchor=west},fill=white, at={(0,0)}, anchor=south west, font=\scriptsize}}
\pgfplotsset{/pgfplots/error bars/error bar style={thick}}
\pgfplotsset{every axis plot/.append style={thick},}
\begin{axis}[
height=0.33\linewidth,
width=0.33\linewidth,
ymin=0.95,
ymax=.99,
xmin=0.0019,
xmax=512,
xmode=log,
log basis x ={2},
grid=major,
ymajorgrids=false,
scaled ticks=true,
xlabel={$\lambda$},
ylabel={},
]
\addplot[color = BLUE, mark=x, mark size= 1.7pt,error bars/.cd, y dir=both,y explicit] coordinates {
(0.001953,0.987695)(0.003906,0.987500)(0.007812,0.987036)(0.015625,0.986572)(0.031250,0.985693)(0.062500,0.984473)(0.125000,0.983032)(0.250000,0.980713)(0.500000,0.976953)(1.000000,0.972144)(2.000000,0.967383)(4.000000,0.964014)(8.000000,0.959888)(16.000000,0.956909)(32.000000,0.955029)(64.000000,0.953296)(128.000000,0.952368)(256.000000,0.951953)(512.000000,0.951807)
      };
\addlegendentry{{ Fashion-MNIST }};
\addplot[color = RED, mark=triangle, mark size= 1.7pt,error bars/.cd, y dir=both,y explicit] coordinates {
(0.001953,0.982031)(0.003906,0.981396)(0.007812,0.980615)(0.015625,0.979834)(0.031250,0.978491)(0.062500,0.977100)(0.125000,0.975049)(0.250000,0.973267)(0.500000,0.970508)(1.000000,0.967603)(2.000000,0.964453)(4.000000,0.961792)(8.000000,0.958911)(16.000000,0.956860)(32.000000,0.955054)(64.000000,0.953369)(128.000000,0.952393)(256.000000,0.951978)(512.000000,0.951660)
      };
\addlegendentry{{ Equivalent GMM }};
\end{axis}
\end{tikzpicture}
&
\begin{tikzpicture}[font=\footnotesize]
\renewcommand{\axisdefaulttryminticks}{4} 
\pgfplotsset{every axis legend/.append style={cells={anchor=west},fill=white, at={(0,0)}, anchor=south west, font=\scriptsize}}
\pgfplotsset{/pgfplots/error bars/error bar style={thick}}
\pgfplotsset{every axis plot/.append style={thick},}
\begin{axis}[
height=0.33\linewidth,
width=0.33\linewidth,
ymin=0.95,
ymax=.995,
xmin=0.0019,
xmax=512,
xmode=log,
log basis x ={2},
grid=major,
ymajorgrids=false,
scaled ticks=true,
xlabel={$\lambda$},
ylabel={},
]
\addplot[color = BLUE, mark=x, mark size= 1.7pt,error bars/.cd, y dir=both,y explicit] coordinates {
(0.001953,0.990356)(0.003906,0.990356)(0.007812,0.990356)(0.015625,0.990332)(0.031250,0.990332)(0.062500,0.990234)(0.125000,0.990015)(0.250000,0.989697)(0.500000,0.989136)(1.000000,0.987500)(2.000000,0.984253)(4.000000,0.978882)(8.000000,0.972290)(16.000000,0.966626)(32.000000,0.961914)(64.000000,0.957007)(128.000000,0.953931)(256.000000,0.952002)(512.000000,0.950488)
      };
\addlegendentry{{ Fashion-MNIST }};
\addplot[color = RED, mark=triangle, mark size= 1.7pt,error bars/.cd, y dir=both,y explicit] coordinates {
(0.001953,0.985400)(0.003906,0.985425)(0.007812,0.985449)(0.015625,0.985400)(0.031250,0.985327)(0.062500,0.985059)(0.125000,0.984766)(0.250000,0.984473)(0.500000,0.983228)(1.000000,0.981128)(2.000000,0.978003)(4.000000,0.973291)(8.000000,0.968677)(16.000000,0.964673)(32.000000,0.960376)(64.000000,0.956494)(128.000000,0.953687)(256.000000,0.951855)(512.000000,0.950659)
      };
\addlegendentry{{ Equivalent GMM }};
\end{axis}
\end{tikzpicture}
\end{tabular}
\caption{ Classification accuracies on test Fashion-MNIST data as a function of the regularization penalty $\gamma$, given by classifiers trained on Fashion-MNIST data (Class $4$ versus $5$, as an illustrating example of non-Gaussian informative factors as shown in \Cref{fig:information_factor_4_5}) and on Equivalent GMM data of sample size $n = 512$, with square (\textbf{left}), logistic (\textbf{middle}), and square hinge (\textbf{right}) losses. }
\label{fig:real_VS_Gaussian_equiv_4_5_classifier}
\end{figure} 
\begin{figure}
\centering
\begin{tabular}{ccc}
\begin{tikzpicture}[font=\footnotesize]
\renewcommand{\axisdefaulttryminticks}{4} 
\pgfplotsset{every axis legend/.append style={cells={anchor=west},fill=white, at={(0,0)}, anchor=south west, font=\scriptsize}}
\pgfplotsset{/pgfplots/error bars/error bar style={thick}}
\pgfplotsset{every axis plot/.append style={thick},}
\begin{axis}[
height=0.33\linewidth,
width=0.35\linewidth,
xmin=0.0019,
xmax=512,
xmode=log,
log basis x ={2},
grid=major,
ymajorgrids=false,
scaled ticks=true,
ytick={0.99,1},
xlabel={$\lambda$},
ylabel={ Classification Accuracy ($\%$)  },
]
\addplot[color = BLUE, mark=x, mark size= 1.7pt,error bars/.cd, y dir=both,y explicit] coordinates {
(0.001953,0.984961)(0.003906,0.985547)(0.007812,0.986499)(0.015625,0.988428)(0.031250,0.990967)(0.062500,0.993823)(0.125000,0.996118)(0.250000,0.997803)(0.500000,0.998560)(1.000000,0.998853)(2.000000,0.998975)(4.000000,0.998999)(8.000000,0.999121)(16.000000,0.998828)(32.000000,0.998096)(64.000000,0.997119)(128.000000,0.995605)(256.000000,0.994482)(512.000000,0.994116)
      };
\addlegendentry{{ Fashion-MNIST }};
\addplot[color = RED, mark=triangle, mark size= 1.7pt,error bars/.cd, y dir=both,y explicit] coordinates {
(0.001953,0.986499)(0.003906,0.987134)(0.007812,0.988159)(0.015625,0.989746)(0.031250,0.992114)(0.062500,0.994800)(0.125000,0.997021)(0.250000,0.998218)(0.500000,0.998950)(1.000000,0.999268)(2.000000,0.999292)(4.000000,0.999219)(8.000000,0.999219)(16.000000,0.999097)(32.000000,0.998315)(64.000000,0.997144)(128.000000,0.995581)(256.000000,0.994751)(512.000000,0.994165)
      };
\addlegendentry{{ Equivalent GMM }};
      \end{axis}
      \end{tikzpicture}
&
\begin{tikzpicture}[font=\footnotesize]
\renewcommand{\axisdefaulttryminticks}{4} 
\pgfplotsset{every axis legend/.append style={cells={anchor=west},fill=white, at={(0,0)}, anchor=south west, font=\scriptsize}}
\pgfplotsset{/pgfplots/error bars/error bar style={thick}}
\pgfplotsset{every axis plot/.append style={thick},}
\begin{axis}[
height=0.33\linewidth,
width=0.35\linewidth,
xmin=0.0019,
xmax=512,
xmode=log,
log basis x ={2},
grid=major,
ymajorgrids=false,
scaled ticks=true,
ytick={0.99,1},
xlabel={$\lambda$},
ylabel={},
]
\addplot[color = BLUE, mark=x, mark size= 1.7pt,error bars/.cd, y dir=both,y explicit] coordinates {
(0.001953,0.999463)(0.003906,0.999463)(0.007812,0.999463)(0.015625,0.999463)(0.031250,0.999463)(0.062500,0.999463)(0.125000,0.999438)(0.250000,0.999463)(0.500000,0.999414)(1.000000,0.999268)(2.000000,0.999048)(4.000000,0.998340)(8.000000,0.997607)(16.000000,0.996484)(32.000000,0.995435)(64.000000,0.994629)(128.000000,0.994116)(256.000000,0.993848)(512.000000,0.993701)
      };
\addlegendentry{{ Fashion-MNIST }};
\addplot[color = RED, mark=triangle, mark size= 1.7pt,error bars/.cd, y dir=both,y explicit] coordinates {
(0.001953,0.999390)(0.003906,0.999390)(0.007812,0.999390)(0.015625,0.999390)(0.031250,0.999390)(0.062500,0.999341)(0.125000,0.999316)(0.250000,0.999268)(0.500000,0.999097)(1.000000,0.999023)(2.000000,0.998730)(4.000000,0.998340)(8.000000,0.997583)(16.000000,0.996631)(32.000000,0.995703)(64.000000,0.994971)(128.000000,0.994604)(256.000000,0.994458)(512.000000,0.994312)
      };
\addlegendentry{{ Equivalent GMM }};
\end{axis}
\end{tikzpicture}
&
\begin{tikzpicture}[font=\footnotesize]
\renewcommand{\axisdefaulttryminticks}{4} 
\pgfplotsset{every axis legend/.append style={cells={anchor=west},fill=white, at={(0,0)}, anchor=south west, font=\scriptsize}}
\pgfplotsset{/pgfplots/error bars/error bar style={thick}}
\pgfplotsset{every axis plot/.append style={thick},}
\begin{axis}[
height=0.33\linewidth,
width=0.35\linewidth,
xmin=0.0019,
xmax=512,
xmode=log,
log basis x ={2},
grid=major,
ymajorgrids=false,
scaled ticks=true,
ytick={0.99,1},
xlabel={$\lambda$},
ylabel={},
]
\addplot[color = BLUE, mark=x, mark size= 1.7pt,error bars/.cd, y dir=both,y explicit] coordinates {
(0.001953,0.999414)(0.003906,0.999414)(0.007812,0.999414)(0.015625,0.999414)(0.031250,0.999414)(0.062500,0.999438)(0.125000,0.999438)(0.250000,0.999390)(0.500000,0.999438)(1.000000,0.999414)(2.000000,0.999438)(4.000000,0.999414)(8.000000,0.999243)(16.000000,0.998950)(32.000000,0.998193)(64.000000,0.996899)(128.000000,0.995239)(256.000000,0.994263)(512.000000,0.993628)
      };
\addlegendentry{{ Fashion-MNIST }};
\addplot[color = RED, mark=triangle, mark size= 1.7pt,error bars/.cd, y dir=both,y explicit] coordinates {
(0.001953,0.999097)(0.003906,0.999097)(0.007812,0.999097)(0.015625,0.999121)(0.031250,0.999121)(0.062500,0.999121)(0.125000,0.999121)(0.250000,0.999121)(0.500000,0.999146)(1.000000,0.999121)(2.000000,0.999194)(4.000000,0.999194)(8.000000,0.999048)(16.000000,0.998657)(32.000000,0.997998)(64.000000,0.996851)(128.000000,0.995581)(256.000000,0.994214)(512.000000,0.993750)
      };
\addlegendentry{{ Equivalent GMM }};
\end{axis}
\end{tikzpicture}
\end{tabular}
\caption{ Classification accuracies on test Fashion-MNIST data as a function of the regularization penalty $\gamma$, given by ERM classifiers trained on Fashion-MNIST data (Class $3$ versus $7$, as an illustrating example of close-to-Gaussian informative factors as shown in \Cref{fig:information_factor_3_7}) on Equivalent GMM data of sample size $n = 512$, with square (\textbf{left}), logistic (\textbf{middle}), and square hinge (\textbf{right}) losses. }
\label{fig:real_VS_Gaussian_equiv_3_7_classifier}
\end{figure}

\end{document}